\documentclass[10pt]{article} % For LaTeX2e
% \usepackage{tmlr}
% If accepted, instead use the following line for the camera-ready submission:
\usepackage[accepted]{tmlr}
% To de-anonymize and remove mentions to TMLR (for example for posting to preprint servers), instead use the following:
%\usepackage[preprint]{tmlr}

% Optional math commands from https://github.com/goodfeli/dlbook_notation.
% \input{math_commands.tex}

\usepackage{hyperref}
\usepackage{url}

\title{Discovering Symbolic Differential Equations with\\ Symmetry Invariants}

% Authors must not appear in the submitted version. They should be hidden
% as long as the tmlr package is used without the [accepted] or [preprint] options.
% Non-anonymous submissions will be rejected without review.

\author{\name Jianke Yang \email jiy065@ucsd.edu \\
      \addr University of California, San Diego
      \AUTHORAND
      \name Manu Bhat \email mbhat@ucsd.edu \\
      \addr University of California, San Diego
      \AUTHORAND
      \name Bryan Hu \email brhu@ucsd.edu \\
      \addr University of California, San Diego
      \AUTHORAND
      \name Yadi Cao \email yac066@ucsd.edu \\
      \addr University of California, San Diego
      \AUTHORAND
      \name Nima Dehmamy \email nima.dehmamy@ibm.com \\
      \addr IBM Research
      \AUTHORAND
      \name Robin Walters \email r.walters@northeastern.edu \\
      \addr Northeastern University
      \AUTHORAND
      \name Rose Yu \email roseyu@ucsd.edu \\
      \addr University of California, San Diego
      }

% The \author macro works with any number of authors. Use \AND 
% to separate the names and addresses of multiple authors.

  % Insert correct month for camera-ready version
 % Insert correct year for camera-ready version
 % Insert correct link to OpenReview for camera-ready version

% packages from ICLR submission
% from neurips template
\usepackage[utf8]{inputenc} % allow utf-8 input
\usepackage[T1]{fontenc}    % use 8-bit T1 fonts
\usepackage{hyperref}       % hyperlinks
\usepackage{url}            % simple URL typesetting
\usepackage{booktabs}       % professional-quality tables
\usepackage{amsfonts}       % blackboard math symbols
\usepackage{nicefrac}       % compact symbols for 1/2, etc.
\usepackage{microtype}      % microtypography
\usepackage{xcolor}         % colors

% For theorems and such
\usepackage{amsmath}
\usepackage{amssymb}
\usepackage{mathtools}
\usepackage{amsthm}
\usepackage{tensor}

%%%%%%%%%%%%%%%%%%%%%%%%%%%%%%%%
% THEOREMS
%%%%%%%%%%%%%%%%%%%%%%%%%%%%%%%%
\theoremstyle{plain}
\newtheorem{theorem}{Theorem}[section]
\newtheorem{proposition}[theorem]{Proposition}

\theoremstyle{definition}
\newtheorem{definition}[theorem]{Definition}

\theoremstyle{remark}
\newtheorem{remark}[theorem]{Remark}
\newtheorem{example}[theorem]{Example}

% if you use cleveref..
\usepackage[capitalize,noabbrev]{cleveref}
\crefname{proposition}{proposition}{propositions}
\Crefname{proposition}{Proposition}{Propositions}
\crefname{appendix}{appendix}{appendices}
\Crefname{appendix}{Appendix}{Appendices}

% end from neurips template

% added by us
\usepackage{commands}
\usepackage{multirow}
\usepackage{float}
\usepackage{bm}
\usepackage{wrapfig}
\usepackage{algorithm}
\usepackage{algorithmic}
\usepackage{enumitem}
\usepackage{caption}
\usepackage{subcaption}
\usepackage{hhline}
\usepackage{listings}
\usepackage{graphicx}

\begin{document}

\maketitle

\begin{abstract}
Discovering symbolic differential equations from data uncovers fundamental dynamical laws underlying complex systems. However, existing methods often struggle with the vast search space of equations and may produce equations that violate known physical laws.
In this work, we address these problems by introducing the concept of \textit{symmetry invariants} in equation discovery. We leverage the fact that differential equations admitting a symmetry group can be expressed in terms of differential invariants of symmetry transformations. Thus, we propose using these invariants as atomic entities in equation discovery, ensuring the discovered equations satisfy the specified symmetry. Our approach integrates seamlessly with existing equation discovery methods such as sparse regression and genetic programming, improving their accuracy and efficiency. We validate the proposed method through applications to various physical systems, such as Darcy flow and reaction-diffusion, demonstrating its ability to recover parsimonious and interpretable equations that respect the laws of physics.
\end{abstract}

\section{Introduction}

Differential equations describe relationships between functions representing physical quantities and their derivatives. They are crucial in modeling a wide range of phenomena, from fluid dynamics and electromagnetic fields to chemical reactions and biological processes, as they succinctly capture the underlying principles governing the behavior of complex systems. The discovery of governing equations in symbolic forms from observational data bridges the gap between raw data and fundamental understanding of physical systems. Unlike black-box machine learning models, symbolic equations provide interpretable insights into the structure and dynamics of the systems of interest. In this paper, we aim to discover symbolic partial differential equations (PDEs) in the form
\begin{equation}
    F(\mathbf x, u^{(n)}) = 0, \label{eq:pde-def}
\end{equation}
where $\mathbf x$ denotes the independent variables, $u^{(n)}$ consists of the dependent variable $u$ and all of its up-to-$n$th order partial derivatives. 

While it has long been an exclusive task for human experts to identify governing equations, symbolic regression (SR) has emerged as an increasingly popular approach to automate the discovery.\footnote{While some literature uses \textit{symbolic regression} specifically for GP-based methods, we use the term interchangeably with \textit{equation discovery} to refer to all algorithms for learning symbolic equations.} SR constructs expressions from a predefined set of atomic entities, such as variables, constants, and mathematical operators, and fits the expressions to data by numerical optimization. Common methods include sparse regression \citep{brunton16, champion19}, genetic programming \citep{cranmer19, cranmer20, cranmer2023interpretable}, neural networks \citep{kamienny2022end}, etc.

However, symbolic regression algorithms may fail due to the vastness of the search space or produce more complex, less interpretable equations that overfit the data. A widely adopted remedy to these challenges is to incorporate inductive biases derived from physical laws, such as symmetry and conserved quantities, into equation discovery algorithms. Implementing these physical constraints narrows the space for equations and expedites the search process, and it also rules out physically invalid or unnecessarily complex equations.

\begin{figure}[t]
    \centering
    \includegraphics[width=0.85\linewidth]{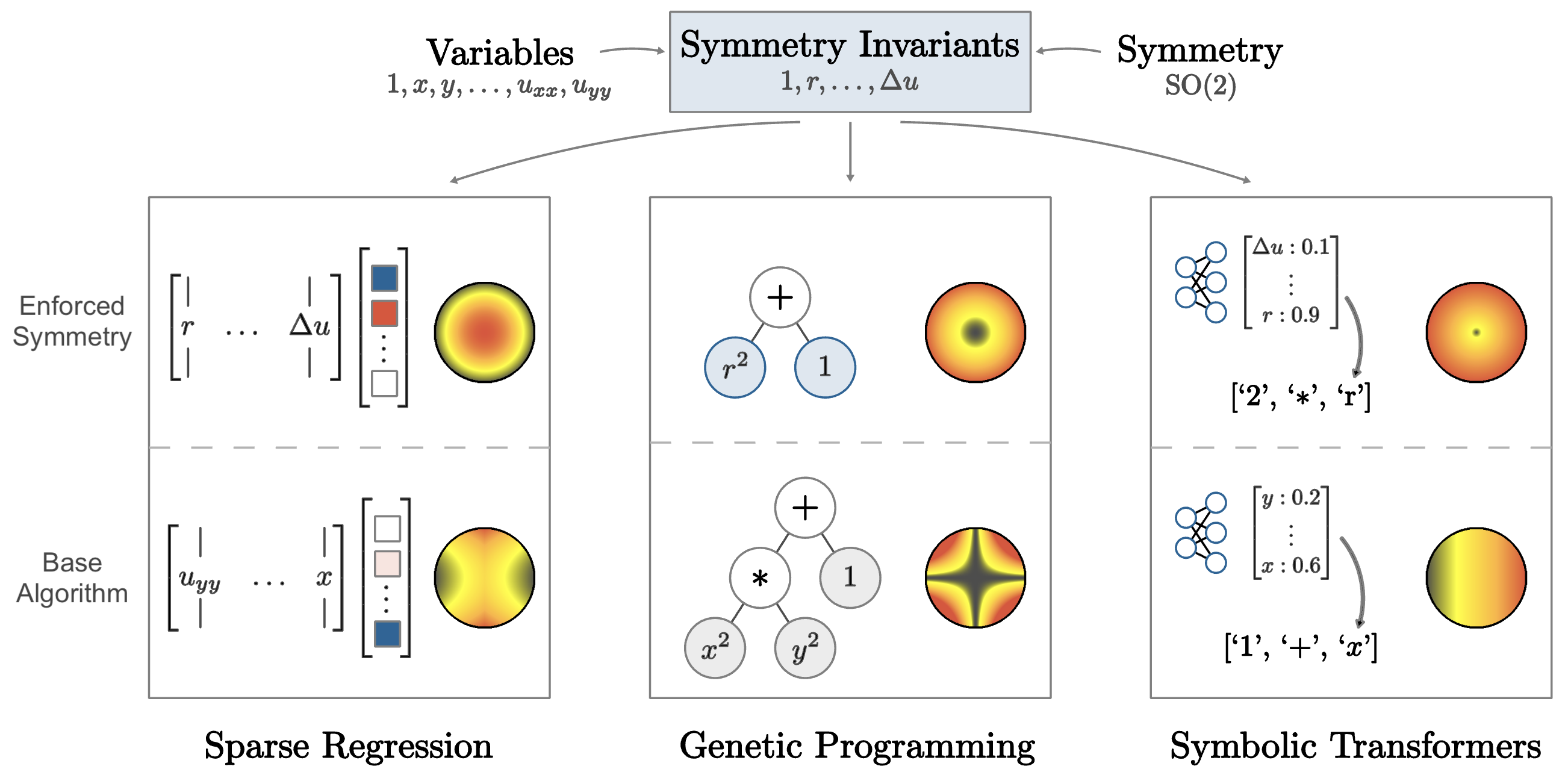}
    \caption{Our framework enforces symmetry in equation discovery by using symmetry invariants. We highlight three discovery algorithms in their original form (bottom row) and when constrained to only use symmetry invariants (top row). The colored circles visualize the predicted functions on a circular domain and demonstrate that using symmetry invariants guarantees a symmetric output.} 
    \label{fig:overview}
    \vspace{-5mm}
\end{figure}

% In this work, we study how to enforce symmetry in general symbolic regression algorithms for discovering differential equations. 
Among the various physical constraints,
symmetry plays a fundamental role in physical systems, governing their invariances under transformations such as rotations, translations, and scaling. Previous research has shown the benefit of incorporating symmetry in equation discovery, such as reducing the dimensionality of the search space and promoting parsimony in the discovered equation \citep{yang2024symmetry}. However, the scopes of existing works exploiting symmetry are limited in terms of the types of equations they can handle, the compatible base algorithms, etc. For example, \citet{udrescu2020ai} deals with algebraic equations; \citet{otto2023unified} deals with ODE systems; \citet{yang2024symmetry} applies to sparse regression but not other SR algorithms.

In this paper, we propose a general procedure based on \textit{symmetry invariants} to enforce the inductive bias of symmetry with minimal restrictions in the types of equations and SR algorithms. 
Specifically, we leverage the fact that a differential equation can be written in terms of the invariants of symmetry transformations if it admits a certain symmetry group.
Thus, instead of operating on the original variables, our method uses the symmetry invariants as the atomic entities in symbolic regression, as depicted in \cref{fig:overview}. These invariants encapsulate the essential information while automatically satisfying the symmetry constraints. Consequently, the discovered equations are guaranteed to preserve the specified symmetry. In summary, our main contributions are listed as follows:

\begin{itemize}%[leftmargin=7mm]
    \item We propose a general framework to enforce symmetry in differential equation discovery based on the theory of differential invariants.
    \item Our approach can be easily integrated with existing symbolic regression methods, such as sparse regression and genetic programming,
    % to discover equations that satisfy symmetry constraints.
    and improves their accuracy and efficiency for differential equation discovery.
    \item We show that our symmetry-based approach is robust in challenging setups in equation discovery, such as noisy data and imperfect symmetry.
    % \item We apply our method to discover symbolic partial differential equations for various physical systems, showing its ability to recover parsimonious and interpretable models that respect symmetries.
\end{itemize}

% Our approach builds on recent advances in symmetry-informed modeling, such as equivariant neural networks and symmetry-regularized equation discovery, and extends these ideas to the domain of symbolic regression. Through extensive experiments on synthetic and real-world datasets, we show that our method consistently outperforms baseline approaches in terms of accuracy, robustness to noise, and the simplicity of discovered equations. By embedding symmetry into the core of the symbolic regression process, we aim to pave the way for more principled and efficient equation discovery in science and engineering.

\paragraph{Notations.}
Throughout the paper, subscripts are usually reserved for partial derivatives, e.g. $u_t \coloneqq \partial u/\partial t$, and $u_{xx} \coloneqq \partial^2 u / \partial x^2$. Superscripts are used for indexing vector components or list elements. We use Einstein notation, where repeated indices are summed over. Matrices, vectors and scalars are denoted by capital, bold and regular letters, respectively, e.g. $W, \mathbf w, w$. These conventions may admit exceptions for clarity or context. See \cref{tab:symbols} for a full description of notations.

\section{Related Works}
\label{sec:literature}

\paragraph{Symbolic Regression.}
Given the dataset $\{ (x^{i}, y^{i}) \} \subset X \times Y$, symbolic regression (SR) aims to model the function $y=f(x)$ by a symbolic equation. A popular method for symbolic regression is genetic programming (GP) \citep{schmidt2009distilling,gaucel14}, which leverages evolutionary algorithms to explore the space of possible equations and has demonstrated success in uncovering governing laws in various scientific domains such as material science \citep{wang2019symbolic}, climate modeling \citep{grundner2023data}, cosmology \citep{cranmer20}, etc. Various software have been developed for GP-based symbolic regression, e.g. Eureqa \citep{dubvcakova2011eureqa} and PySR \citep{cranmer2023interpretable}.

Another class of methods is sparse regression \citep{brunton16}, which assumes the function to be discovered can be written as a linear combination of predefined candidate functions and solves for the coefficient matrix. It has also been extended to discover more general equations, such as equations in latent variables \citep{champion19} and PDEs \citep{rudy2017data}.

Neural networks have also shown their potential in symbolic regression. \citet{martius2016extrapolation, sahoo2018learning} represents a few earliest attempts, where they replace the activation functions in fully connected networks with math operators and functions, so the network itself translates to a symbolic formula. Other works represent mathematical expressions as sequences of tokens and train neural networks to predict the sequence given a dataset of input-output pairs. For example, \citet{petersen2019deep} trains an RNN with policy gradients to minimize the regression error. \citet{pmlr-v139-biggio21a}, \citet{kamienny2022end} and \citet{holt2023deep} pre-train an encoder-decoder network over a large amount of procedurally generated equations and query the pretrained model on a new dataset of input-output pairs at test time.

The aforementioned symbolic regression methods can be improved by incorporating specific domain knowledge. For example, AI Feynman \citep{udrescu2020ai,udrescu2020ai2.0} uses properties like separability and compositionality to simplify the data. \citet{cranmer20} specifies the overall skeleton of the equation and fits each part with genetic programming independently. The goal of this paper falls into this category -- to use the knowledge of symmetry to reduce the search space of symbolic regression and improve its accuracy and efficiency.

% From Yadi: LLM for SR

% Despite the successes of existing approaches, they often face challenges in scalability and incorporating domain-specific knowledge.
Recently, Large Language Models (LLMs) have emerged as an alternative for SR, using pre-trained scientific priors to propose sequential hypothesis~\citep{merler2024context} or to guide genetic programming~\citep{shojaee2024llm}, balancing the efficiency of domain knowledge with the robustness of evolutionary search. However, current LLM-based methods often rely on memorizing known equations rather than facilitating genuine discovery, and their guidance lacks interpretability, specifically, the reasoning behind their suggestions, evidenced by a recent benchmark specially designed for LLM-SR~\citep{shojaee2025llm}. A recent effort sought to improve interpretability by binding symbolic evolution with natural language explanations~\citep{grayeli2024symbolic}. However, this method relies on frontier LLMs to conduct the evolution of the natural language components, rendering the process itself opaque. These limitations highlight the need for approaches that enhance the controllability and explainability of the prior knowledge injected, ensuring more transparent and trustworthy discovery.

\paragraph{Discovering Differential Equations.}
While it remains in the scope of symbolic regression, the discovery of differential equations poses additional challenges because the derivatives are not directly observed from data. Building upon the aforementioned SINDy sparse regression \citep{brunton16}, \citet{messenger2021weak,messenger2021weaksindy} formulates an alternative optimization problem based on the variational form of differential equations and bypasses the need for derivative estimation. A similar variational approach is also applied to genetic programming \citep{qian2022dcode}. Various other improvements have been made, including refined training procedure \citep{rao2022discovering}, relaxed assumptions about the form of the equation \citep{kaheman2020sindy}, and the incorporation of physical priors \citep{xie22,bakarji22,lee22,messenger2024coarse}.

\paragraph{PDE Learning and Surrogate Modeling.}
Beyond symbolic regression, there is a broad line of work on learning PDEs and their solution operators directly from data using non-symbolic surrogates. Neural operator methods aim to approximate nonlinear operators mapping initial or boundary data to PDE solutions, providing highly expressive black-box solvers for families of PDEs~\citep{lu2021learning,li2020fourier}. Gaussian processes and kernel methods have also been developed to solve nonlinear PDEs and related inverse problems~\citep{chen2021solving}. A complementary body of work focuses on learning effective coarse-grained dynamics or closures from fine-scale simulations and spatiotemporal data, including reservior-computing and recurrent architectures~\citep{vlachas2020backpropagation}, machine-learned coarse-scale PDEs from microscopic data~\citep{lee2020coarse}, numerical bifurcation analysis and intrinsic-coordinate models inferred from simulators~\citep{galaris2022numerical,floryan2022data}, and multiscale frameworks that learn reduced stochastic or PDE models for complex systems~\citep{vlachas2022multiscale,lee2023learning,dietrich2023learning,fabiani2024task}.
These approaches typically prioritize predictive accuracy and efficient surrogate modeling over known symbolic structure. In contrast, our framework, along with other SR methods, seeks to recover interpretable closed-form PDEs whose terms can be inspected and analyzed, while still leveraging data-driven tools.

\paragraph{PDE Symmetry in Machine Learning.}
Symmetry is an important inductive bias in machine learning. In the context of learning differential equation systems, many works encourage symmetry in their models through data augmentation \citep{brandstetter2022lie}, regularization terms \citep{akhound2023lie, zhang2023enforcing, dalton2024physics}, and self-supervised learning \citep{mialon2023self}. Strictly enforcing symmetry is also possible, but is often restricted to specific symmetries and systems \citep{wang2021incorporating, gurevich2024learning}. 
For more general symmetries and physical systems, enforcing symmetry often requires additional assumptions on the form of equations, such as the linear combination form in sparse regression \citep{otto2023unified, yang2024symmetry}.
To the best of our knowledge, our work is the first attempt to strictly enforce general symmetries of differential equations for general symbolic regression methods. A more detailed discussion of the connections and differences between our work and other symmetry-based equation discovery methods is provided in Appendix \ref{sec:appd-related-work-ext}.
\section{Background}

\subsection{PDE Symmetry}

This section introduces the basic concepts of partial differential equations and their symmetry. For a more thorough understanding of Lie point symmetry of PDEs, we refer the readers to \citet{olver1993applications}.

\paragraph{Partial Differential Equations.}
We consider PDEs in the form $F(\mathbf x, u^{(n)}) = 0$, as given in \eqref{eq:pde-def}. We restrict ourselves to a single equation and a single dependent variable here, though generalization to multiple equations and dependent variables is possible.
We use $\mathbf x \in X \subset \mathbb R^p$ to denote all independent variables. For example, $\mathbf x = (t, x)$ for a system evolving in 1D space. Note that the bold $\mathbf x$ refers to the collection of all independent variables while the regular $x$ denotes the spatial variable. Then, $u = u(\mathbf x) \in U \subset \mathbb R$ is the dependent variable; $u^{(n)} = (u, u_x, ...)$ denotes all up to $n$th-order partial derivatives of $u$; $(\mathbf x, u^{(n)}) \in M^{(n)} \subset X \times U^{(n)}$, where $M^{(n)}$ is the $n$th order \textbf{jet space} of the total space $X\times U$. $M^{(n)}$ and $u^{(n)}$ are also known as the $n$th-order \textbf{prolongation} of $X \times U$ and $u$, respectively.

% Note: the following paragraphs are needed for introducing the *infinitesimal criterion* for determining whether a group is a symmetry of some given PDE. We are doing something different--deriving the most general type of equations given the symmetry.

% The prolongation of a function $u(\mathbf x)$ is denoted by $\mathrm{pr}^{(n)} u(\mathbf x) = (u(\mathbf x), u_x(\mathbf x), ...) \in U^{(n)}$.
% Specific partial derivatives are indicated by subscripts, e.g. $u_t$ denotes the time derivative, and $u_{xx}$ denotes the second spatial derivative.

% After introducing the concepts of prolongation and jet space, the PDE \eqref{eq:pde-def} can be viewed as an algebraic equation on the jet space. The solution of the PDE then relates to the solution for $(\mathbf x, u^{(n)})$ of the corresponding algebraic equation, which falls in the set $\mathcal S_F = \{ (\mathbf x, u^{(n)}): F(\mathbf x, u^{(n)}) = 0 \}$. Specifically, $u(\mathbf x)$ is a solution of \eqref{eq:pde-def} if the graph of its $n$th-order prolongation is a subset of $\mathcal S_F$, i.e. $\{ \mathbf x, \mathrm{pr}^{(n)} u(\mathbf x))\} \subset \mathcal S_F$.

\paragraph{Symmetry of a PDE.}
A point symmetry $g$ is a local diffeomorphism on the total space $E = X \times U$:
\begin{equation}
    g \cdot (\mathbf x, u) = (\tilde{\mathbf x}(\mathbf x, u), \tilde u(\mathbf x, u)) ,\label{eq:point-transform}
\end{equation}
where $\tilde{\mathbf x}$ and $\tilde u$ are functions on $E$.
The action of $g$ on the function $u(\mathbf x)$ is induced from \eqref{eq:point-transform} by applying it to the graph of $u: X \to U$. Specifically, denote the domain of $u$ as $\Omega \subset X$ and its graph as $\Gamma_u = \{ (\mathbf x, u(\mathbf x)): \mathbf x \in \Omega \}$. The group element $g$ transforms the graph $\Gamma_u$ as
$
    \tilde \Gamma_u \coloneqq g \cdot \Gamma_u = \{(\tilde{\mathbf x}, \tilde u) = g \cdot (\mathbf x, u) : (\mathbf x, u) \in \Gamma_u \}
$.

Since $g$ transforms both independent and dependent variables, $\tilde\Gamma_u$ does not necessarily correspond to the graph of any single-valued function. Nevertheless, by suitably shrinking the domain $\Omega$, we can ensure that the transformations close to the identity transform $\Gamma_u$ to the graph of another function. This function with the transformed graph $\tilde\Gamma_u$ is then defined to be the transformed function of the original solution $u$, i.e. $g\cdot u = \tilde u$ s.t. $\Gamma_{\tilde u} = \tilde\Gamma_u$. The symmetry of the PDE $\eqref{eq:pde-def}$ is then defined:
\begin{definition}
A symmetry group of $F(\mathbf x, u^{(n)})=0$ is a local group of transformations $G$ acting on an open subset of the total space $X \times U$ such that, for any solution $u$ to $F=0$ and any $g \in G$, the function $ \tilde u = (g\cdot u)(\mathbf x)$ is also a solution of $F=0$ wherever it is defined.
\end{definition}

\paragraph{Infinitesimal Generators.}

Often, the symmetry group of a PDE is a continuous Lie group.
In practice, one needs to compute with infinitesimal generators of continuous symmetries, i.e., vector fields.
% This is not a serious restriction; if a finite-dimensional Lie group $G$ is connected, then it is generated by one-parameter subgroups corresponding to a finite set of infinitesimal generators. 
In more detail, we will write vector fields $\mathbf v: E \to TE$ on $E=X \times U$ as
\begin{equation}
    \mathbf v = \xi^j(\mathbf x, u)\frac{\partial}{\partial x^j} + \phi(\mathbf x, u)\frac{\partial}{\partial u}.\label{eq:vf}
\end{equation}
Any such vector field generates a one-parameter group of symmetries of the total space \{$\exp (\epsilon \mathbf v): \epsilon \in \mathbb{R}\}$. The symmetries arising from the exponentiation of a vector field moves a point in the total space along the directions given by the vector field. We will specify symmetries by vector fields in the following sections. For instance, $\mathbf v = x\partial_y - y\partial_x$ represents the rotation in $(x,y)$-plane; $\mathbf v = \partial_t$ corresponds to time translation.

To analyze the symmetry of PDEs, we must know how it transforms not only the variables, but also their derivatives accordingly. The group transformations on derivatives are formalized by \textbf{prolonged} group actions and infinitesimal actions on the $n$th-order jet space, denoted $g^{(n)}$ and $\mathbf v^{(n)}$, respectively. More details on prolonged group actions are discussed in Appendix \ref{sec:appd-background}, with \cref{fig:prolongation-demo} visualizing a simple example. To introduce our method, it suffices to note that the prolongation of the vector field \eqref{eq:vf} can be described explicitly by $\xi^j$ and $\phi$ and their derivatives via the \textit{prolongation formula} \eqref{eq:prolongation-formula}.

\subsection{Symbolic Regression Algorithms}
\label{sec:background-sr}
Given the data $\{ (x^{i}, y^{i}) \} \subset X \times Y$, the objective of symbolic regression (SR) is to find a symbolic expression for the function $y = f(x)$. Although this original formulation is for algebraic equations, it can be generalized to differential equations like \eqref{eq:pde-def}. To discover a PDE from the dataset of its observed solutions on a grid $\Omega$, i.e. $\{ 
(\mathbf x, u(\mathbf x)): \mathbf x \in \Omega \}$, we estimate the partial derivative terms and add them to the dataset: $\{ (\mathbf x, u^{(n)}): \mathbf x \in \Omega \}$. One of the variables in the variable set $(\mathbf x, u^{(n)})$ is used as the LHS of the equation, i.e. the role of the label $y$ in symbolic regression, while other variables serve as features. The precise set of derivatives added to symbolic regression and the choice of the equation LHS requires prior knowledge or speculations about the underlying system.

We briefly review two classes of SR algorithms: sparse regression (SINDy) and genetic programming (GP).

\textbf{Sparse regression} \citep{brunton16} is specifically designed for discovering differential equations. It assumes the LHS $\ell$ of the equation is a fixed term, e.g. $\ell = u_t$, and the RHS of the equation can be written as a \textit{linear combination} of $m$ predefined functions $\theta^j$ with trainable coefficients $\mathbf w \in \R^m$, i.e.,
% \vspace{-1mm}
\begin{equation}
    \ell(\mathbf x, u^{(n)}) = w^j \theta^j(\mathbf x, u^{(n)}),\ \theta^j: M^{(n)} \to \mathbb R. \label{eq:sindy}
\end{equation}
The equation is found by solving for $\mathbf w$ that minimizes the objective
$
    \| \mathbf L - \mathbf R \| _ 2 ^ 2 + \lambda \| \mathbf w \| _ 0,
$
where $\mathbf L$ and $\mathbf R$ are obtained by evaluating $\ell$ and $w^j\theta^j$ on all data points and concatenating them into column vectors, and $\|\mathbf w\|_0$ regularizes the number of nonzero terms. This formulation can be easily extended to $q$ equations and dependent variables ($q>1$): $\ell^i(\mathbf x, \mathbf u^{(n)}) = W^{ij}\theta^j(\mathbf x, \mathbf u^{(n)})$, $W \in \mathbb R^{q \times m}$.

One problem with sparse regression is its restrictive assumptions about the form of equations. Many equations cannot be expressed in the form of \eqref{eq:sindy}, e.g. $y = \frac{1}{x+a}$ where $a$ could be any constant. Also, the success of sparse regression relies on the proper choice of the function library $\{ \theta^j \}$. If any term in the true equation were not included, sparse regression would fail to identify the correct equation.

\textbf{Genetic programming} (GP)
offers an alternative solution for SR \citep{cranmer2023interpretable}, which can learn equations in more general forms. It represents each expression as a tree and instantiates a population of individual expressions. At each iteration, it samples a subset of expressions and selects one of them that best fits the data; the selected expression is then mutated by a random mutation, a crossover with another expression, or a constant optimization; the mutated expression replaces an expression in the population that does not fit the data well. The algorithm repeats this process to search for different combinations of variables, constants, and operators, and finally returns the ``fittest'' expression. GP can be less efficient than SINDy when the equation can be expressed in the form \eqref{eq:sindy} due to its larger search space. However, we will show that it is a promising alternative to discover PDEs of generic forms, and our approach further boosts its efficiency.

\section{Symbolic Regression with Symmetry Invariants}
\label{sec:method-main}

% \begin{wrapfigure}{r}{6cm}
%     \centering
%     \includegraphics[width=0.45\textwidth]{figures/expression-tree.pdf}    \caption{Consider $f_1(x, y) = x^2 + y^2 + 1$ and $f_2(x, y) = (x^2 + y^2) ^ {3/2}$, which are both invariant to  rotations in $(x, y)$-plane. If we replace the second branch, $y^2$, in $f_1$, with $f_2$, the resulting expression $f(x, y) = x^2 + 1 + (x^2 + y^2) ^ {3/2}$ no longer has rotation symmetry. However, if we treat $r^2=x^2+y^2$ as an inseparable entity, any crossover operation still yields a symmetric function.}
%     \label{fig:expr-tree}
%     \vspace{-2mm}
% \end{wrapfigure}

Symmetry offers a natural inductive bias for the search space of symbolic regression in differential equations. It reduces the dimensionality of the space and encourages parsimony of the resulting equations. To enforce symmetry in PDE discovery, we aim to find the maximal set of equations admitting a \textit{given} symmetry and search in that set with symbolic regression (SR) methods.

% % In genetic programming, enforcing symmetry becomes more challenging. Unlike sparse regression where functions have a fixed linear structure, the tree-structured expressions can change arbitrarily under the genetic operations. 
% Consider the genetic programming example in 
% \cref{fig:expr-tree}, the crossover of two rotationally symmetric expressions leads to a nonsymmetric one. Generally, even if we have some symmetric components, we cannot ensure the symmetry remains after combining them with tree operations in genetic programming.

% However, one can observe that $f_1$ and $f_2$ in \cref{fig:expr-tree} are both functions of $r^2 = x^2 + y^2$, an \textbf{invariant} of the rotation transformation. If we treat $r^2$ as an atomic entity and disallow operations that mutate $r^2$, any crossover operation between $f_1$ and $f_2$ still yields a symmetric function. The same applies to mutation operations if the randomly generated mutations are functions of $r^2$. This example suggests that GP operations preserve symmetry if we represent each function by symmetry invariants instead of original variables.

\subsection{Differential Invariants and Symmetry Conditions}
To achieve this, our general strategy is to replace the original variable set with a complete set of \textit{invariant functions} of the given symmetry group. Since we consider PDEs containing partial derivatives, the invariant functions refer to the \textit{differential invariants} defined as follows.

\begin{definition}[Def. 2.51, \cite{olver1993applications}]
\label{def:di}
Let $G$ be a local group of transformations acting on $X \times U$. Any $g \in G$ gives a prolonged group action $\mathrm{pr}^{(n)}g$ on the jet space $M^{(n)} \subset X \times U^{(n)}$. An $n$th order differential invariant of $G$ is a smooth function $\eta: M^{(n)} \to \mathbb R$, such that for all $g \in G$ and all $(\mathbf x, u^{(n)}) \in M^{(n)}$, $\eta (g^{(n)} \cdot (\mathbf x, u^{(n)})) = \eta (\mathbf x, u^{(n)})$ whenever $g^{(n)} \cdot (\mathbf x, u^{(n)})$ is defined.
\end{definition}
In other words, differential invariants are functions of all variables and partial derivatives that remain invariant under prolonged group actions.
Equivalently, if $G$ is generated by a set of infinitesimal generators $\mathcal B = \{\mathbf v_a\}$, then a function $\eta$ is a differential invariant of $G$ iff $\mathbf v_a^{(n)}(\eta) = 0$ for all $\mathbf v_a \in \mathcal B$.
% With differential invariants, we can characterize the most general type of differential equation that admits a certain symmetry group. The following theorem is central to our method:
% \begin{theorem}[Prop 2.56, \cite{olver1993applications}]
% \label{thm:pde-symmetry-condition}
% Let $G$ be a local group of transformations acting on $M \subset X \times U$. Let $\{ \eta^1 (\mathbf x, u^{(n)}), ..., \eta^k (\mathbf x, u^{(n)}) \}$ be a complete set of functionally independent $n$th-order differential invariants of $G$. An $n$th-order differential equation \eqref{eq:pde-def} admits $G$ as a symmetry group if and only if it is equivalent to the equation
% \begin{equation}
%     \tilde F(\eta^1, ..., \eta^k) = 0.
% \end{equation}
% \end{theorem}
% Using the complete set of invariant functions does not compromise the type of differential equations that we can discover.
The following theorem guarantees that any differential equation admitting a symmetry group can be expressed solely in terms of the group invariants.
\begin{theorem}[Prop. 2.56, \cite{olver1993applications}]
\label{thm:pde-symmetry-condition}
Let $G$ be a local group of transformations acting on $X \times U$. Let $\{ \eta^1 (\mathbf x, u^{(n)}), ..., \eta^k (\mathbf x, u^{(n)}) \}$ be a complete set of functionally independent $n$th-order differential invariants of $G$. An $n$th-order differential equation \eqref{eq:pde-def} admits $G$ as a symmetry group if and only if it is equivalent to an equation of the form $\tilde F(\eta^1, ..., \eta^k) = 0$.
\end{theorem}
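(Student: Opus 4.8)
The plan is to recast the statement inside the $n$-th order jet space $M^{(n)}$ and reduce it to a claim about invariant subvarieties, then treat the two implications separately; the reverse (``if'') direction is essentially a chain-rule computation, while the forward (``only if'') direction carries all the content. First the setup: a function $u$ solves \eqref{eq:pde-def} exactly when the graph of its prolongation $u^{(n)}$ lies in the zero locus $\mathcal S_F \coloneqq F^{-1}(0) \subset M^{(n)}$, and prolongation commutes with the group action, so the graph of $(g\cdot u)^{(n)}$ is $\mathrm{pr}^{(n)}g$ applied to the graph of $u^{(n)}$. Under the standing nondegeneracy hypotheses on $F=0$ (maximal rank on $\mathcal S_F$, and enough local solvability) --- exactly those under which the infinitesimal symmetry criterion is an equivalence --- the condition ``$G$ is a symmetry group of \eqref{eq:pde-def}'' is then equivalent to ``$\mathcal S_F$ is invariant under $\mathrm{pr}^{(n)}G$''. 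So it suffices to prove: $\mathcal S_F$ is $\mathrm{pr}^{(n)}G$-invariant if and only if $F=0$ is locally equivalent to an equation $\tilde F(\eta^1,\dots,\eta^k)=0$ with $\tilde F$ smooth.

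For the ``if'' direction, consider the equation $\tilde F(\eta^1,\dots,\eta^k)=0$ built from the invariants. For any infinitesimal generator $\mathbf v_a$ of $G$, the chain rule gives $\mathbf v_a^{(n)}\bigl(\tilde F(\eta^1,\dots,\eta^k)\bigr) = \sum_i (\partial \tilde F/\partial \eta^i)\,\mathbf v_a^{(n)}(\eta^i) = 0$, since each $\eta^i$ is a differential invariant and hence $\mathbf v_a^{(n)}(\eta^i)=0$ by \cref{def:di}. Therefore every prolonged flow $\exp(\epsilon\,\mathbf v_a^{(n)})$ preserves the zero locus of $\tilde F(\eta^1,\dots,\eta^k)$, and so does the local group they generate; thus $\mathcal S_F$ is invariant and the equation admits $G$. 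Any equation equivalent to $\tilde F(\eta^1,\dots,\eta^k)=0$ has the same solutions and hence also admits $G$.

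For the ``only if'' direction --- the main work --- I would argue locally near a point $z_0\in\mathcal S_F$ that is regular both for $F$ (so $dF(z_0)\neq 0$, using maximal rank) and for the prolonged action (constant orbit dimension $s$ near $z_0$, by semi-regularity). By the standard flow-box/Frobenius straightening of a Lie group action, choose coordinates $(\eta^1,\dots,\eta^k,\zeta^1,\dots,\zeta^s)$ on a neighborhood of $z_0$ in $M^{(n)}$ with $k+s=\dim M^{(n)}$, in which the $\eta^i$ are the given complete set of functionally independent invariants --- this is where \emph{completeness} is used, to guarantee $k=\dim M^{(n)}-s$ so that $(\eta,\zeta)$ is an honest chart --- and in which each $\mathbf v_a^{(n)}$ is a combination of the $\partial/\partial\zeta^\beta$ alone, so local orbits are the slices $\{\eta=\mathrm{const}\}$. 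Because $\mathcal S_F$ is a union of orbits, the orbit through $z_0$ lies in $\mathcal S_F$, hence its tangent space $\operatorname{span}\{\partial_{\zeta^1},\dots,\partial_{\zeta^s}\}$ at $z_0$ sits inside $\ker dF(z_0)$; thus $\partial F/\partial\zeta^\beta(z_0)=0$ for all $\beta$, and since $dF(z_0)\neq 0$ some $\partial F/\partial\eta^i(z_0)\neq 0$, say $i=1$. The implicit function theorem then gives $\mathcal S_F = \{\eta^1 = h(\eta^2,\dots,\eta^k,\zeta^1,\dots,\zeta^s)\}$ near $z_0$ for a smooth $h$; but $\mathcal S_F$ being a union of $\{\eta=\mathrm{const}\}$-slices forces $h$ to be independent of $\zeta$, so $h=h(\eta^2,\dots,\eta^k)$. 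Setting $\tilde F(\eta^1,\dots,\eta^k)\coloneqq \eta^1 - h(\eta^2,\dots,\eta^k)$ yields, near $z_0$, the equivalence $F=0 \Leftrightarrow \tilde F(\eta^1,\dots,\eta^k)=0$ of the two maximal-rank equations, which is what was required.

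The main obstacles are structural rather than computational. First, the translation between the definition's phrasing (``$G$ sends solutions to solutions'') and the jet-space statement (``$\mathcal S_F$ is $\mathrm{pr}^{(n)}G$-invariant'') needs the nondegeneracy of $F=0$: maximal rank, so $\mathcal S_F$ is a smooth hypersurface with a well-defined tangent, and enough local solvability for solution graphs to sweep out $\mathcal S_F$ --- precisely the hypotheses that make the infinitesimal symmetry criterion an equivalence, and which are tacitly assumed in the statement. Second, the conclusion is inherently local: the Frobenius chart and the implicit function theorem only work near a point regular for both $F$ and the action, so ``equivalent'' must be read near the solution variety; one should note that the set of such regular points is open and dense when the prolonged action is semi-regular, so nothing essential is lost. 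Granting these, the proof is the short chain: infinitesimal criterion $\Rightarrow$ $\mathcal S_F$ is a union of orbits $\Rightarrow$ straighten the orbits $\Rightarrow$ implicit function theorem.
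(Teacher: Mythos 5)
The paper does not give a proof of this statement; it is quoted verbatim from \citet{olver1993applications} (Prop.\ 2.56), and your argument is essentially Olver's own: reformulate ``admits $G$ as a symmetry group'' as invariance of the solution variety $\mathcal S_F \subset M^{(n)}$ under the prolonged action, straighten the orbits into flat local coordinates in which the complete set of invariants supplies the transverse coordinates, and finish with the implicit function theorem. Your proof is correct, including the accurate identification of where the tacit maximal-rank/local-solvability hypotheses and the local reading of ``equivalent'' are needed.
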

% \vspace{-1mm}

Consequently, SR with a complete set of invariants precisely searches within the space of all symmetric differential equations and automatically excludes equations violating the specified symmetry.
% Also, since there are typically fewer independent differential invariants than original variables, the complexity of the equation search is reduced. As we will show in the experiments, this leads to increased accuracy and efficiency of equation discovery algorithms.

Our strategy of using differential invariants applies broadly to various equation discovery algorithms. For instance, in sparse regression, we can construct the function library using invariants rather than raw variables and derivatives. Similarly, in genetic programming, the variable set can be redefined to include only invariant functions.
% Similarly, in transformer-based symbolic regressors, the token vocabulary can be redefined to include only invariant functions.
In each case, the key benefit is the same: the search space is restricted to symmetry-respecting equations by construction. The reduced complexity of the equation search also leads to increased accuracy and efficiency.

Next, we describe how to construct a complete set of differential invariants (\cref{sec:compute-diff-invars}), and how to incorporate them into specific SR algorithms (\cref{sec:implementation}).

\subsection{Constructing a Complete Set of Invariants}\label{sec:compute-diff-invars}

Despite the simplicity of our strategy, we still need a concrete method for computing the invariants.
In this subsection, we provide a general guideline to construct a complete set of differential invariants up to a required order given the group action.

% Generally, for a (prolonged) infinitesimal generator $\mathbf v^{(n)} = \boldsymbol \xi \cdot \nabla_{\mathbf z}$ acting on $M^{(n)}$, where we denote $\mathbf z = (z^1, ..., z^N) = (\mathbf x, u^{(n)})$, $N = \mathrm{dim}M^{(n)}$, we look for functions $\eta(\mathbf x, u^{(n)})$ satisfying $\mathbf v^{(n)}(\eta)=0$. This is a first-order linear PDE which can be solved by the method of characteristics---we integrate the characteristic system
% % \begin{equation}
% $
%     dz^1/\xi^1 = dz^2/\xi^2 = \cdots = dz^N/\xi^N
%     % \label{eq:symm-char}
% $
% % \end{equation}
% to obtain the $N-1$ independent first integrals as $\eta^1, ..., \eta^{N-1}$. For multi-dimensional groups, we simultaneously solve the characteristic equations for all generators and find the common integrals. Although this method is standard and straightforward for low-order jet spaces (small $n$), it becomes computationally infeasible as the dimension grows combinatorially with order, and symbolic elimination becomes heavy.

By definition of differential invariants, we look for functions $\eta(\mathbf x, u^{(n)})$ satisfying $\mathbf v^{(n)}(\eta) = 0$ given a prolonged vector field $\mathbf v^{(n)}$. This is a first-order linear PDE that can be solved by the method of characteristics. However, in practice, if $E=X\times U \simeq \mathbb R^p \times \mathbb R$, there are $\binom{p+n-1}{n}$ partial derivatives of the independent variable $u$ of order exactly $n$. Therefore, as $n$ grows, it quickly becomes impractical to solve directly for $n$th-order differential invariants. The higher-order differential invariants, if necessary, can be computed recursively from lower-order ones by the following result:

\begin{proposition}
\label{prop:higher-order-di-p>1}
Let $G$ be a local group of transformations acting on $X \times U \simeq \mathbb R^p \times \mathbb R$. Let $\eta^1,\eta^2,\cdots,\eta^p$ be any $p$ differential invariants of $G$ whose horizontal Jacobian $J = [D_i\eta^j]$ is non-degenerate on an open subset $\Omega \subset M^{(n)}$. If there are a maximal number of independent, strictly $n$th-order differential invariants $\zeta^1,\cdots,\zeta^{q_n}$, $q_n=\binom{p+n-1}{n}$, then the following set contains a complete set of independent, strictly $(n+1)$th-order differential invariants defined on $\Omega$:
% \vspace{-1mm}
\begin{equation}
\mathrm{det}(D_i\tilde \eta^j_{(k,k')})
\big /\mathrm{det}(D_i \eta^j),\ \forall k\in [p], k' \in [q_n],\label{eq:higher-order-di-p>1}
\end{equation}
where $i, j\in [p]$ are matrix indices, $D_i$ denotes the total derivative w.r.t $i$-th independent variable and $\tilde \eta^j_{(k,k')} = [\eta^1, ..., \eta^{k-1}, \zeta^{k'}, \eta^{k+1}, ..., \eta^p]$.
\end{proposition}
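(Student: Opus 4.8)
The plan is to reduce the statement to the classical fact—due to Tresse and Lie, and presented in Olver's book as Theorem 2.36 / Proposition 2.53—that if $\eta^1,\dots,\eta^p$ are differential invariants whose horizontal Jacobian $J=[D_i\eta^j]$ is invertible, then the \emph{invariant differential operators}
\begin{equation}
\mathcal{D}_j \coloneqq (J^{-1})^{i}_{\ j}\, D_i
\end{equation}
commute with the prolonged group action, so that $\mathcal{D}_j(\zeta)$ is again a differential invariant whenever $\zeta$ is, and its order increases by one whenever $\zeta$ is not itself a function of $\eta^1,\dots,\eta^p$ alone. First I would state this lemma carefully (it is the engine of the whole construction) and indicate that it follows from the chain rule together with the commutation of prolonged vector fields with total derivatives, $\mathbf v^{(n+1)}\circ D_i = D_i\circ \mathbf v^{(n)}$ on invariants. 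Then I would apply it to each strictly $n$th-order invariant $\zeta^{k'}$: the functions $\mathcal{D}_j(\zeta^{k'})$, for $j\in[p]$ and $k'\in[q_n]$, are differential invariants of order $n+1$, and one checks they are strictly of order $n+1$ (not lower) because applying $D_i$ to a strictly $n$th-order quantity produces terms linear in the $(n+1)$th-order derivatives with nonvanishing coefficients, and the change of basis by $J^{-1}$—which involves only derivatives up to order $n$—cannot cancel them.

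Next I would identify the expression \eqref{eq:higher-order-di-p>1} with these $\mathcal{D}_j(\zeta^{k'})$ up to sign and reindexing. This is Cramer's rule: replacing the $k$th column of the matrix $[D_i\eta^j]$ by the column $[D_i\zeta^{k'}]$ and taking the ratio of determinants $\det(D_i\tilde\eta^j_{(k,k')})/\det(D_i\eta^j)$ yields exactly the $k$th component of $J^{-1}$ applied to the vector $(D_i\zeta^{k'})_i$, i.e. $\mathcal{D}_k(\zeta^{k'})$. So the displayed set is literally $\{\mathcal{D}_k(\zeta^{k'}): k\in[p],\,k'\in[q_n]\}$, which by the lemma consists of strictly $(n+1)$th-order differential invariants.

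It remains to argue \emph{completeness}: that among these $p\cdot q_n$ invariants (together with the lower-order ones already available) one can extract a maximal independent set of strictly $(n+1)$th-order invariants, of which there are $q_{n+1}=\binom{p+n}{n+1}$. I would do this by a dimension count. The fiber of $M^{(n+1)}\to M^{(n)}$ has dimension $q_{n+1}$, corresponding to the strictly $(n+1)$th-order derivatives $u_J$ with $|J|=n+1$; the prolonged group acts on these fibers, and (under the regularity/non-degeneracy hypothesis, which guarantees the action is locally free-ish in the sense needed) the number of independent invariants on the fiber equals $q_{n+1}$ minus the generic orbit dimension, but the orbit dimension does not increase from order $n$ to order $n+1$ once we have the full invariant frame—this is precisely why the $\mathcal{D}_j$ generate \emph{all} higher invariants. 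Concretely, I would show that the map sending $u^{(n+1)}$ to the tuple $(\mathcal{D}_k\zeta^{k'})_{k,k'}$, with $\eta^1,\dots,\eta^p$ and the $\zeta^{k'}$ held fixed, has differential of full rank $q_{n+1}$ in the strictly-$(n+1)$th-order directions: each $D_i$ is triangular in the derivative order, so $\partial (\mathcal{D}_k\zeta^{k'})/\partial u_J$ for $|J|=n+1$ is governed by an explicit, invertible combinatorial matrix coming from $J^{-1}$ and the total-derivative shift $J\mapsto J+e_i$.

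The main obstacle I anticipate is this last rank/completeness verification: showing that no ``new'' invariants appear at order $n+1$ beyond those produced by the operators $\mathcal{D}_k$, equivalently that the generic orbits of $\mathrm{pr}^{(n+1)}G$ on the order-$(n+1)$ fibers have the same dimension as the group's generic orbit dimension once it has stabilized. This is the content of the stabilization theorem for prolongations (Olver, Thm. 2.34) and requires the non-degeneracy of $J$ to rule out degenerate strata; I would invoke that theorem rather than reprove it, and spend the care instead on the explicit triangular-rank computation that converts ``there are exactly $q_{n+1}$ independent invariants'' into ``the listed set contains $q_{n+1}$ independent ones,'' which is the only genuinely computational piece and is routine once the combinatorics of $D_i$ acting on multi-indices is set up.
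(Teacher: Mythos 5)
Your overall strategy coincides with the paper's: both arguments hinge on the invariant differential operators $\mathcal D_k$ associated with the coframe $\{D\eta^1,\dots,D\eta^p\}$, and both identify $\mathcal D_k\zeta^{k'}$ with the ratio of determinants in \eqref{eq:higher-order-di-p>1} via Cramer's rule. The difference is in how the two halves are discharged. The paper shows that the total differentials $\omega^j = D\eta^j$ form a \emph{contact-invariant coframe} (decomposing $D\eta = d\eta + \theta$ with $\theta$ a contact form, and using that prolonged actions preserve invariant one-forms and contact forms separately), checks linear independence from the non-degeneracy of $[D_i\eta^j]$, and then cites Olver's Theorem 5.48 wholesale, which delivers both the invariance of the $\mathcal D_k\zeta^{k'}$ and the completeness claim in one stroke. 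You instead take the invariance of the operators as a known lemma and propose to prove completeness directly by a rank and dimension count on the order-$(n+1)$ fiber, invoking the stabilization theorem. That route can be made to work and is more self-contained, but it essentially re-proves what Theorem 5.48 already packages; the paper's only original labor is the verification that the $D\eta^j$ qualify as a contact-invariant coframe, a step your sketch does not address in this form.

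Two points in your write-up need repair before the sketch is a proof. First, the identity $\mathbf v^{(n+1)}\circ D_i = D_i\circ \mathbf v^{(n)}$ is false whenever the group moves the independent variables: the correct commutator is $[\mathrm{pr}\,\mathbf v,\, D_i] = -\sum_j (D_i\xi^j)\,D_j$, so $D_i$ applied to an invariant is generally \emph{not} invariant --- if your identity held as stated, the correction by $J^{-1}$ would be unnecessary and every $D_i\zeta$ would already be an invariant. The invariance of $\mathcal D_k = \sum_i (J^{-1})_{ki}D_i$ is precisely the statement that these correction terms cancel after contracting with $J^{-1}$ (equivalently, the contact-invariance of the coframe in the paper's formulation); your lemma is true, but the one-line justification you give for it is not. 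Second, your claim that ``the change of basis by $J^{-1}$ involves only derivatives up to order $n$'' is not guaranteed by the hypotheses: the $\eta^j$ are arbitrary differential invariants and may themselves be of order $n$, in which case $D_i\eta^j$, and hence $J^{-1}$, involves order-$(n+1)$ derivatives, so ruling out cancellation of the top-order terms in $\mathcal D_k\zeta^{k'}$ requires a more careful argument than the triangularity you invoke. Neither issue is fatal, and the remainder of your completeness argument (full rank of $[\partial\zeta^{k'}/\partial u_J]$ for $|J|=n$ propagating to the $u_{J,i}$ directions) is sound, but both gaps must be closed.
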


% For multiple independent variables ($p>1$), for each $\eta_k$ of $p$ functionally independent invariants $\eta = [\eta_1, ..., \eta_p]$ and another arbitrary invariant $\zeta$, a higher-order invariant can be constructed by
% % \todo{formulate this into a theorem or something}
% \begin{equation}
%     \frac{\det (D_i \tilde \eta^k_j)}{\det(D_i \eta_j)},
%     \label{eq:higher-order-di-p>1}
% \end{equation}

% When $p=1$, this reduces to \cref{prop:higher-order-di-p=1}.

In practice, we first solve for $\mathrm{pr}\ \mathbf v(\eta) = 0$ to obtain a sufficient number of lower-order invariants as required in Proposition \ref{prop:higher-order-di-p>1}, and then construct complete sets of invariants of arbitrary orders.
Notably, while in theory our framework operates on any complete set of differential invariants, the invariants computed this way may be algebraically complicated and poorly scaled, leading to difficulties in SR optimization. In practice, we start from such a complete set of differential invariants and then deliberately convert them into simpler, physically interpretable invariant functions (such as Laplacians for rotational symmetry) as the feature set for SR.
Then, we evaluate invariants on the dataset only where they are well-defined. If necessary, we shrink the domain and filter out data points that cause singularity (e.g., where the denominator of an invariant function vanishes).
% Direct results from applying \eqref{eq:higher-order-di-p>1} can be complicated, especially for higher-order invariants. Thus, we often combine them to get simpler invariants, which we later use as the variable set in equation discovery.
In Appendix \ref{sec:appd-di-examples}, we provide two examples of different symmetry groups and their differential invariants. Those results will also be used in our experiments.

\subsection{Implementation in SR Algorithms}\label{sec:implementation}

\begin{wrapfigure}{r}{5.5cm}
    \vspace{-4mm}
    \centering
    \includegraphics[width=.34\textwidth, viewport=150 150 630 460, clip]{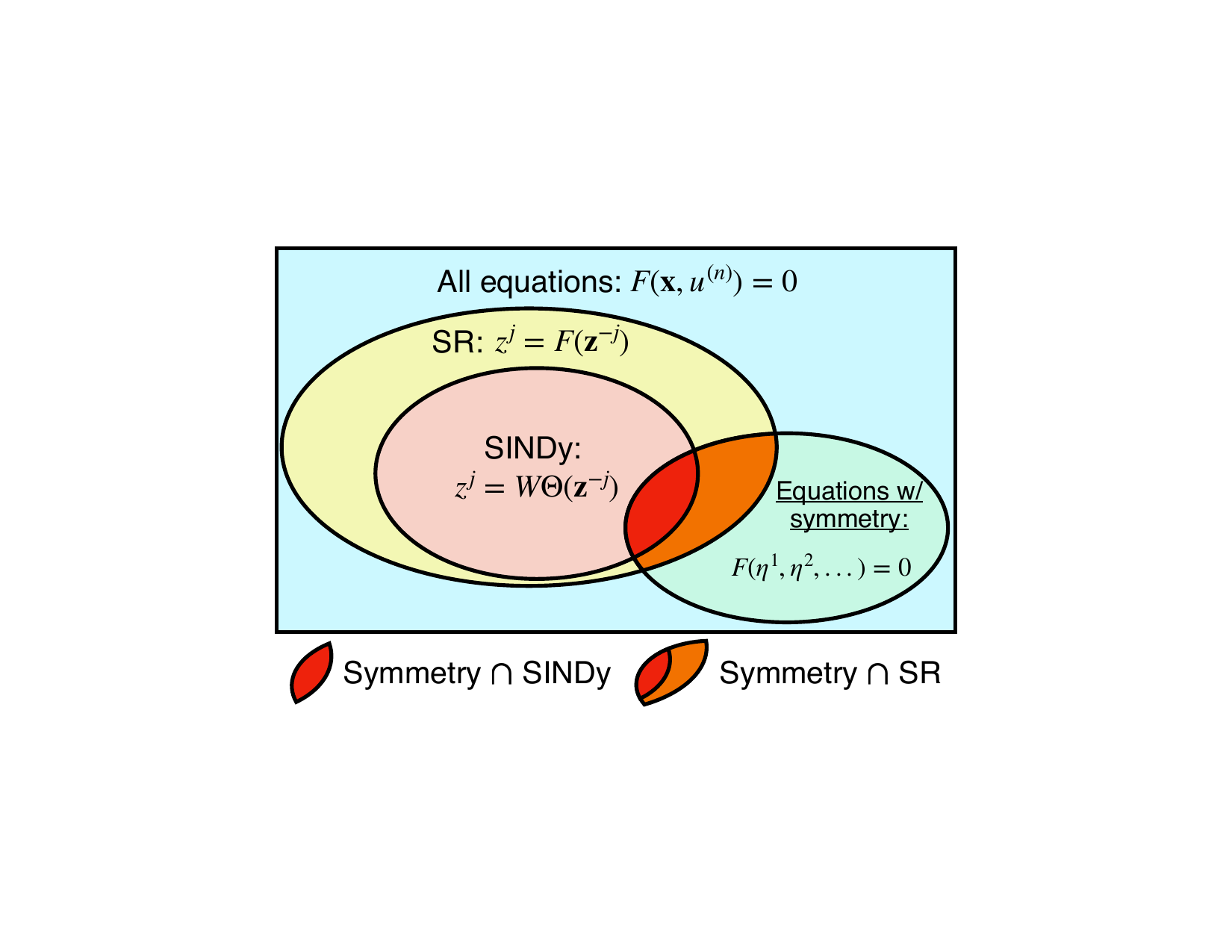}
    % \vspace{-4mm}
    \caption{Venn diagram of hypothesis spaces from base SR methods and our symmetry principle.}
    \label{fig:venn}
    \vspace{-4mm}
\end{wrapfigure}

% Now we discuss how to implement our strategy of using differential invariants in specific symbolic regression algorithms.
Our symmetry principle characterizes a subspace of all equations with a given symmetry. Generally, this subspace partially overlaps with the hypothesis spaces of SR algorithms, conceptually visualized in \cref{fig:venn}. As in \cref{thm:pde-symmetry-condition}, PDEs with symmetry can be expressed as \textit{implicit} functions of all differential invariants. However, symbolic regression methods typically learn \textit{explicit} functions mapping features to labels. Some algorithms, such as SINDy, impose even stronger constraints on equation forms. Therefore, adaptation is needed to implement our strategy of using differential invariants in specific symbolic regression algorithms. In general, it requires more effort to adapt our method to base algorithms with more structured and nontrivial hypotheses about possible equation forms. Below, we discuss in detail how to adapt our method to two common classes of base SR algorithms.

\paragraph{General explicit SR}
We start with general SR methods that learn an \textit{explicit} function $y=f(x)$ without additional assumptions about the form of $f$, e.g., genetic programming and symbolic transformer. When learning the equation with differential invariants, we do not know which one of them should be used as the LHS of the equation, i.e. the label $y$ in symbolic regression. Thus, we fit an equation for each invariant as LHS and choose the equation with the lowest data error, as described in \cref{alg:general-sr}. We use relative error to select the best equation since the scales of LHS terms differ.

% \begin{flushright}
% \begin{minipage}{0.7\textwidth}
% \vspace{-2mm}

\begin{algorithm}%[H]

\caption{General explicit SR for differential equations with symmetry invariants}
\label{alg:general-sr}

\begin{algorithmic}
    \REQUIRE PDE order $n$, dataset $\{ \mathbf z^i =(\mathbf x^i, (u^{(n)})^i) \in M^{(n)} \}_{i=1}^{N_D}$, base SR algorithm $\mathcal S: (\mathbf X, \mathbf y) \mapsto y=f(x)$, infinitesimal generators of the symmetry group $\mathcal B = \{\mathbf v_a\}$.
    \ENSURE A PDE admitting the given symmetry group.
    \STATE Compute the symmetry invariants of $\mathcal B$ up to $n$th-order: $\eta^1, \cdots,\eta^K$. \COMMENT{Prop. \ref{prop:higher-order-di-p>1}}\\
    \STATE Evaluate the invariant functions on the dataset: $\eta^{k,i} = \eta^k(\mathbf z^i)$, for $k \in [K], i \in [N_D]$. \\
    Initialize a list of candidate equations and their risks: $\texttt{E} = [ ]$.\\
    \FOR{$k$ in $1:K$}
    \STATE Use the $k$th invariant as label and the rest as features: $\mathbf y = \eta^{k,:}$, $\mathbf X = \eta^{-k, :}$. \\
    \STATE Run $\mathcal S(\mathbf X, \mathbf y)$ and get a candidate equation $\eta^{k} = f^k(\boldsymbol\eta^{-k}).$ \\
    \STATE Evaluate $\mathcal L^k = \| \mathbf y - f^k(\mathbf X) \| _1 / \| \mathbf y \| _ 1$ and set $\texttt{E}[k] = (f^k, \mathcal L^k)$. \\
    % \STATE Set $\texttt{E}[k] = (f^k, \mathcal L^k)$. \\
    \ENDFOR
    \STATE Choose the equation in $\texttt{E}$ with the lowest error: $k = \arg\min_j \texttt E[j][2]$. \\
    \algorithmicreturn\  $\eta^k = f^k(\boldsymbol\eta^{-k})$. \COMMENT{Optionally, expand all $\eta^j$ in terms of original variables $\mathbf z$.}
\end{algorithmic}
    
\end{algorithm}
% \vspace{-4mm}

% \end{minipage}
% \end{flushright}

\paragraph{Sparse regression} SINDy assumes a linear equation form \eqref{eq:sindy}. Generally, its function library differs from the set of differential invariants. Also, SINDy fixes a LHS term, while we do not single out an invariant as the LHS of the equation when constructing the set of invariants.

Assume we are provided the SINDy configuration, i.e. the LHS term $\ell$ and the function library $\{\theta^j\}$. To implement sparse regression with symmetry invariants, we assign an invariant $\eta^k$ that symbolically depends on $\ell$, i.e. $\partial \eta^k / \partial\ell \not=0$, as the LHS for the equation in terms of symmetry invariants.
% For example, if $\ell = u_{tt}$ and the set of invariants is given by \eqref{eq:invariants-scaling-translation}, we use $\eta_{(0,2)} = u_{tt}u_x^{(b-2)/(a-b)}$ as the LHS since it is the only invariant that involves $u_{tt}$.
The remaining invariants are included on the RHS, where they serve as inputs of the original SINDy library functions. In other words, the equation form is $\eta^k = \tilde w^j \theta^j(\boldsymbol \eta^{-k})$. Similar to \cref{alg:general-sr}, we can expand all $\eta$ variables to obtain the equation in original jet variables.
% We will refer to this approach as \textbf{SI-raw} in the experiments, meaning it produces equations in raw invariant functions instead of original variables.

The above approach optimizes an unconstrained coefficient vector $\tilde{\mathbf w}$ for functions of symmetry invariants. Alternatively, we can use the original SINDy equation form \eqref{eq:sindy} and implement the symmetry constraint as a constraint on the coefficient $\mathbf w$, as demonstrated in the following theorem. Here, we generalize the setup to multiple dependent variables and equations.
\begin{proposition}

Let $\boldsymbol{\ell}(\mathbf x, \mathbf u^{(n)}) = W\boldsymbol{\theta}(\mathbf x, \mathbf u^{(n)})$ be a system of $q$ differential equations admitting a symmetry group $G$, where $\mathbf x \in \mathbb R^p$, $\mathbf u \in \mathbb R^q$, $\boldsymbol\theta \in \mathbb R^m$. Assume there exist some $n$th-order invariants of $G$, $\eta_0^{1:q} \text{ and } \eta^{1:K}$, s.t. (1) the system of equations can be expressed as $\boldsymbol \eta_0 = W'\boldsymbol{\theta}'(\boldsymbol\eta)$, where $\boldsymbol\eta_0 = [\eta_0^{1:q}]$ and $\boldsymbol \eta = [\eta^{1:K}]$, and (2) $\eta_0^i = T^{ijk}\theta^{k}\ell^j$ and $(\theta')^i = S^{ij}\theta^j$, for some functions $\boldsymbol \theta'(\boldsymbol\eta)$ and constant tensors $W'$, $T$ and $S$. Then, the space of all possible $W$ is a linear subspace of $\mathbb R^{q \times m}$.
\label{prop:sindy-linear-constraint}

\end{proposition}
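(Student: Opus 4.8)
The plan is to reduce the defining condition on $W$ to a homogeneous linear system in $W$ jointly with an auxiliary coefficient tensor, and then project. First I would combine \cref{thm:pde-symmetry-condition} with hypothesis~(1): the system $\boldsymbol\ell = W\boldsymbol\theta$ admits $G$ as a symmetry group if and only if it is equivalent to an equation of the form $\boldsymbol\eta_0 = W'\boldsymbol\theta'(\boldsymbol\eta)$ for some constant tensor $W'$ (equivalently, the function $\boldsymbol\eta_0 - W'\boldsymbol\theta'(\boldsymbol\eta)$, being an expression purely in differential invariants, cuts out a $G$-invariant equation). So the set of admissible $W$ is exactly the set of $W$ for which such a $W'$ exists. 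Next I would substitute hypothesis~(2), namely $\eta_0^i = T^{ijk}\theta^k\ell^j$ and $(\theta')^i = S^{ij}\theta^j$, and eliminate $\boldsymbol\ell$ using the equation itself, $\ell^j = W^{jl}\theta^l$. The invariant-form equation $\boldsymbol\eta_0 = W'\boldsymbol\theta'$ then becomes the identity
\begin{equation}
T^{ijk}\,W^{jl}\,\theta^k\theta^l \;=\; (W')^{ir}\,S^{rl}\,\theta^l, \qquad i\in[q], \label{eq:sindy-constraint-plan}
\end{equation}
between smooth functions on the jet space $M^{(n)}$.

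To finish, I would observe that \eqref{eq:sindy-constraint-plan} is \emph{jointly linear} in the entries of the pair $(W,W')$: the only quantities it contains besides $W$ and $W'$ are the \emph{fixed} functions $\theta^k\theta^l$ and $\theta^l$ and the \emph{fixed} constant tensors $T$ and $S$. Concretely, $(W,W')\mapsto \bigl(T^{ijk}W^{jl}\theta^k\theta^l - (W')^{ir}S^{rl}\theta^l\bigr)_{i\in[q]}$ is a linear map from $\mathbb{R}^{q\times m}\times\mathbb{R}^{q\times m'}$ into the real vector space of $q$-tuples of smooth functions on $M^{(n)}$, where $m'$ is the number of components of $\boldsymbol\theta'$. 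Requiring this map to vanish is a homogeneous linear condition, so the set of admissible pairs $(W,W')$ is a linear subspace of $\mathbb{R}^{q\times m}\times\mathbb{R}^{q\times m'}$. The set of admissible $W$ is the image of this subspace under the coordinate projection $(W,W')\mapsto W$, which is a linear map, and the image of a linear subspace under a linear map is again a linear subspace. In particular it contains $W=0$ (paired with $W'=0$, corresponding to the trivially invariant equation $\boldsymbol\eta_0=0$), so it is a genuine linear subspace of $\mathbb{R}^{q\times m}$, not merely an affine one, as claimed.

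The main obstacle is justifying the passage, in the first paragraph, from the abstract equivalence of equations to the pointwise identity \eqref{eq:sindy-constraint-plan}. Hypothesis~(1) only asserts that $\boldsymbol\ell = W\boldsymbol\theta$ and $\boldsymbol\eta_0 = W'\boldsymbol\theta'(\boldsymbol\eta)$ have the same solutions, which a priori only gives \eqref{eq:sindy-constraint-plan} on the solution manifold of $\boldsymbol\ell = W\boldsymbol\theta$. I would upgrade this to an identity by noting that, after substituting $\ell^j = W^{jl}\theta^l$, the difference of the two sides of \eqref{eq:sindy-constraint-plan} is a function of the jet coordinates \emph{not} involving the designated left-hand-side terms $\boldsymbol\ell$; since the solution manifold of $\boldsymbol\ell = W\boldsymbol\theta$ surjects onto those remaining coordinates, a function of them alone that vanishes on the solution manifold vanishes identically. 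For the reverse direction — that \eqref{eq:sindy-constraint-plan} with a constant $W'$ implies symmetry of the specific equation $\boldsymbol\ell = W\boldsymbol\theta$, not just of the weaker invariant equation it defines — one also needs the multiplier matrix $[T^{ijk}\theta^k]_{i,j}$ to be nondegenerate on a dense open set, which holds because $\boldsymbol\eta_0$ is constructed to depend genuinely on $\boldsymbol\ell$ (the multi-equation analogue of the condition $\partial\eta^k/\partial\ell\neq 0$ used above to pick the left-hand-side invariant). The remaining bookkeeping — expanding the products $\theta^k\theta^l$ in a basis of the finite-dimensional function space they span so as to read off the linear equations on $(W,W')$ explicitly — is routine.
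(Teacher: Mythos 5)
Your proposal is correct and follows the same core reduction as the paper: substitute $\ell^j = W^{jl}\theta^l$ into the invariant form of the equation to obtain the jet-space identity $T^{ijk}W^{jl}\theta^k\theta^l = (W')^{ir}S^{rl}\theta^l$, and then exploit its linearity in the unknowns. Where you diverge is in how $W'$ is eliminated. You treat the identity as a homogeneous linear condition on the pair $(W,W')$ and invoke the fact that the coordinate projection of a linear subspace is again a linear subspace --- clean, sufficient for the stated claim, and it even settles that the subspace is linear rather than affine, but it is non-constructive. The paper instead first extends the library to $\tilde{\boldsymbol\theta}$ so that the products $\theta^k\theta^l$ (which you relegate to ``routine bookkeeping'') have well-defined coordinates, reads off coefficients with respect to $\tilde{\boldsymbol\theta}$ via the functional $M_{\tilde{\boldsymbol\theta}}$, and then kills $W'$ by contracting with a basis $\tilde S^\perp$ of the null space of $\tilde S$, arriving at the explicit homogeneous constraint \eqref{eq:sindy-w-constraint} on $W$ alone. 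That explicit form is what the downstream implementation actually requires: the basis $Q$ of the admissible subspace is obtained by SVD of the resulting constraint matrix, so the constructive route is not optional for the paper even though it is for the proposition as stated. Your two side remarks --- upgrading ``equivalence of equations'' on the solution manifold to a pointwise identity on $M^{(n)}$, and the nondegeneracy of the multiplier $[T^{ijk}\theta^k]$ needed for the converse direction --- address points the paper passes over silently (it simply asserts the identity holds for all $(\mathbf x,\mathbf u^{(n)})$); they are reasonable extra care rather than corrections, though the surjectivity argument you sketch does tacitly assume the coordinates entering $\boldsymbol\ell$ are disjoint from those entering $\boldsymbol\theta$, which holds in the standard SINDy setup but is worth stating.
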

% \vspace{-1mm}

Intuitively, the conditions above state that the equations can be expressed as a linear combination of invariant terms, similar to the form in \eqref{eq:sindy} w.r.t original jet variables. Also, every invariant term in $\boldsymbol\eta_0$ and $\boldsymbol\theta'(\boldsymbol\eta)$ is already encoded in the original library $\boldsymbol{\theta}$. In practice, we need to choose a suitable set of invariants according to the SINDy configuration to meet these conditions. For example, it is a common SINDy setup where $\boldsymbol\theta$ contains all monomials on $M^{(n)}$ up to degree $d$. In this case, any set of invariants where each invariant is a polynomial on $M^{(n)}$ up to degree $d$ satisfies these conditions.

% Proposition \ref{prop:sindy-linear-constraint} allows us to keep track of the original SINDy parameters $W$ during optimization. This enables straightforward integration of symmetry constraints to variants of SINDy, e.g. Weak SINDy \citep{messenger2021weak,messenger2021weaksindy} for noisy data. For example, if the constrained subspace has a basis $Q \in \mathbb R^{r\times q\times m}$, where $r$ is the subspace dimension, we write $W^{jk} = Q^{ijk}\beta^i$. While we directly optimize $\beta$, we can still easily compute the objective of Weak SINDy which explicitly depends on $W$. In comparison, if we use the raw invariant terms for regression, e.g. the equations take the form $\boldsymbol \eta_0 = W'\boldsymbol\theta'(\boldsymbol\eta)$, it is challenging to formulate the objective of Weak SINDy w.r.t $W'$.

The proof of Proposition \ref{prop:sindy-linear-constraint} is provided in Appendix \ref{sec:appd-sindy-align}, where we explicitly identify the basis of the linear subspace for $W$. Then, we can use this basis to build a SINDy model for PDEs with the corresponding symmetry, similar to how EMLP \citep{emlp} constructs equivariant linear layers and how \citet{yang2024symmetry} constructs equivariant SINDy for ODEs. Specifically, if the constrained subspace has a basis $Q \in \mathbb R^{r\times q\times m}$, where $r$ is the subspace dimension, we write $W^{jk} = Q^{ijk}\beta^i$. We then fix $Q$ and solve $\beta \in \mathbb R^r$ using the same least square optimizer in SINDy, effectively reducing the parameter space dimension from $q \times m$ to $r$.

In practice, we observe that the basis $Q$ obtained from the constructive proof of Proposition \ref{prop:sindy-linear-constraint} is not sparse. The lack of sparsity can pose a problem when we perform sequential thresholding in sparse regression. Specifically, in SINDy, the entries in $W$ that are close to zero are filtered out at the end of each iteration, which serves as a proxy to the sparsity-promoting $L_0$ regularization. Since we fix $Q$ and only optimize $\beta$, a straightforward modification to the sequential thresholding procedure is to threshold the entries in $\beta$ instead of those in $W$. However, if $Q$ is dense, even a sparse vector $\beta$ can lead to a dense $W$, which contradicts the purpose of sparse regression.
To address this issue, we apply a Sparse PCA to $Q$ to obtain a sparsified basis. More implementation details and examples of the computed basis $Q$ can be found in Appendix \ref{sec:appd-linear-constraints}.

One notable advantage of converting symmetry into linear constraints with Proposition \ref{prop:sindy-linear-constraint} is that it allows us to keep track of the original SINDy parameters $W$ during optimization. This enables straightforward integration of symmetry constraints to variants of SINDy, e.g. Weak SINDy \citep{messenger2021weak,messenger2021weaksindy} for noisy data. For $W^{jk} = Q^{ijk}\beta^i$, while we directly optimize $\beta$, we can still easily compute the objective of Weak SINDy which explicitly depends on $W$. In comparison, if we use the raw invariant terms for regression, e.g. the equations take the form $\boldsymbol \eta_0 = W'\boldsymbol\theta'(\boldsymbol\eta)$, it is challenging to formulate the objective of Weak SINDy with respect to $W'$. In Appendix \ref{sec:appd-wsindy}, we provide more details on how to implement this linear-constraint approach on Weak SINDy.

\subsection{Constraint Relaxation for Systems with Imperfect Symmetry}
\label{sec:relaxed-symmetry}
% \vspace{-1mm}

Our approach discovers PDEs assuming perfect symmetry. However, it is common in reality that a system shows imperfect symmetry due to external forces, boundary conditions, etc. \citep{wang2022approximately}. In such cases, the previous method cannot identify any symmetry-breaking factors.

To address this, we propose to relax the symmetry constraints by allowing symmetry-breaking terms to appear in the equation, but at a higher ``cost''.
We implement this idea in sparse regression, where the equation has a linear structure $\boldsymbol\ell = W \boldsymbol\theta$. We adopt the technique from Residual Pathway Prior (RPP) \citep{finzi2021rpp}, which is originally developed for equivariant linear layers in neural networks. Specifically, let $Q$ be the basis of the parameter subspace that preserves symmetry and $P$ be the orthogonal complement of $Q$. Instead of parameterizing $W$ in this subspace, we define $W = A+B$ where $A^{jk} = Q^{ijk}\beta^i$ and $B^{jk} = P^{ijk}\gamma^i$, where $\beta$ and $\gamma$ are learnable parameters, and place a stronger regularization on $\gamma$ than on $\beta$. While the model still favors equations in the symmetry subspace spanned by $Q$, symmetry-breaking components in $P$ can appear if it fits the data well.

% More implementation details related to \cref{sec:implementation} and \ref{sec:relaxed-symmetry} can be found in Appendix \ref{sec:appd-implementation}.

% \vspace{-2mm}
\section{Experiments}
% \vspace{-1mm}
\subsection{Datasets and Their Symmetries}
\label{sec:dataset}
% \vspace{-1mm}

We consider the following PDE systems, which cover different challenges in PDE discovery, such as high-order derivatives, generic equation form, multiple dependent variables and equations, noisy dataset, and imperfect symmetry. The datasets are generated by simulating the ground truth equation from specified initial conditions, with detailed procedures described in Appendix \ref{sec:appd-data-gen}.

\textbf{Boussinesq Equation.}
Consider the Boussinesq equation describing the unidirectional propagation of a solitary wave in shallow water \citep{newell1985solitons}:
\begin{equation}
    u_{tt} + uu_{xx} + u_x^2 + u_{xxxx} = 0
    \label{eq:boussinesq}
\end{equation}
This equation has a scaling symmetry $\mathbf v_1 = 2t\partial_t + x\partial_x - 2u\partial_u$ and the translation symmetries in space and time. The differential invariants are given by
$
    \eta_{(\alpha, \beta)} = u_{x^{(\alpha)}t^{(\beta)}}u_x^{-(2+\alpha+2\beta)/3}
$
where $\alpha$ and $\beta$ are the orders of partial derivatives in $x$ and $t$, respectively. To discover the $4$th-order equation, we compute all $\eta_{(\alpha,\beta)}$ for $0 \leq \alpha + \beta \leq 4$, except for $\eta_{(1,0)}=1$ which is a constant.
% This results in $14$ variables in total, reduced by $3$ compared to the original variables with up to $4$th-order partial derivatives.
% The original equation \eqref{eq:boussinesq} can be rewritten as
% \begin{equation}
%     I_{(0, 2)} + I_{(0, 0)}I_{(2, 0)} + I_{(4, 0)} + 1 = 0
% \end{equation}

\textbf{Darcy Flow.}
The following PDE describes the steady state of a 2D Darcy flow \citep{takamoto2022pdebench} with spatially varying viscosity $a(x,y) = e^{-4(x^2+y^2)}$ and a constant force term $f(x)=1$:
\begin{equation}
    -\nabla (e ^ {- 4(x ^ 2 + y ^ 2)} \nabla u) = 1 \label{eq:darcy}
\end{equation}
% or we can expand it into
% \begin{equation}
%     8 e ^ {- 4r ^ 2} (xu_x + yu_y) - e ^ {- 4r ^ 2} (u_{xx} + u_{yy}) - 1 = 0
% \end{equation}
% where $r = \sqrt{x^2 + y^2}$.
This equation admits an $\mathrm{SO}(2)$ rotation symmetry $\mathbf v = y\partial_x - x\partial_y$. A detailed calculation of the differential invariants of this group can be found in Example \ref{ex:so2-ext}. In our experiment, we use the following complete set of $2$nd-order invariants: $\{ \frac{1}{2}(x^2+y^2), u, xu_y-yu_x, xu_x+yu_y, u_{xx}+u_{yy}, u_{xx}^2 + 2u_{xy}^2 + u_{yy}^2, x^2u_{xx}+y^2u_{yy}+2xyu_{xy} \}$.

\textbf{Reaction-Diffusion.}
We consider the following system of PDEs from \citet{champion19}:
\begin{align}
    u_t &= d_1 \nabla^2 u + (1-u^2-v^2)u + (u^2+v^2)v \cr
    v_t &= d_2 \nabla^2 v - (u^2+v^2)u + (1-u^2-v^2)v
    \label{eq:rd-original}
\end{align}
In the default setup, we use $d_1=d_2=0.1$. The system then exhibits rotational symmetry in the phase space: $\mathbf v = u\partial_v - v\partial_u$. The ordinary invariants (functions of variables, not derivatives) are $\{ t,x,y, u^2+v^2 \}$. The higher-order invariants are $\{ \mathbf u \cdot \mathbf u_\mu , \mathbf u^\perp\cdot \mathbf u_\mu\}$, where $\mathbf u = (u,v)^T$ and $\mu$ is any multi-index of $t$, $x$ and $y$.
% One can verify that these invariants form a complete set for any order by comparing the number of invariants and the dimension of the jet space.
% For example, $r_t = \frac{\mathbf u \cdot \mathbf u_t}{r}$; $\phi_t = \frac{\mathbf u \times \mathbf u_t}{r^2}$. We use $r_t$ and $\phi_t$ as the LHS for symbolic regression.
% In terms of these invariants, the original reaction-diffusion equation \eqref{eq:rd-original} can be rewritten as
% \begin{align}
%     I_t &= 0.1(I_{xx} + I_{yy}) + R(1-R) \cr
%     E_t &= 0.1(E_{xx} + E_{yy}) - R^2
%     \label{eq:rd-invars}
% \end{align}

We also consider the following cases where the rotation symmetry is broken due to different factors:
% \vspace{-2mm}
\begin{itemize}[leftmargin=10mm]
    \item \textbf{Unequal diffusivities}\quad We use different diffusion coefficients for the two components: $d_1 = 0.1$, $d_2 = 0.1+\epsilon$. This can happen, for example, when two chemical species described by the equation diffuse at different rates due to molecular size, charge, or solvent interactions.
    % \vspace{-1mm}
    \item  \textbf{External forcing}\quad The ground truth equation \eqref{eq:rd-original} is modified by adding $-\epsilon v$ to the RHS of $u_t$ and $-\epsilon u$ to the RHS of $v_t$. This can reflect a weak parametric forcing on the system.
\end{itemize}

% \textbf{Asymmetric linear cross-inhibition}\quad We consider an anomaly where species $v$ suppresses $u$, adding a small linear term to the change rate of $u$: $u_t = d_1 \nabla^2u + \lambda(A) u + \omega (A)v - \color{red}{\epsilon v}$ with $\epsilon = 0.1$.

% \textbf{Unequal saturation rate}\quad The two reactants can saturate at different rates: $\lambda(A) = 1 - (u^2)$

% \paragraph{2D Heat.} Consider the heat equation given by
% \begin{equation}
%     u_t = u_{xx} + u_{yy} + f(r),
% \end{equation}
% where $r = \sqrt{x^2 + y^2}$. Still need to choose a specific $f$ to construct the dataset. Use the set of invariants from \cref{ex:so2}.

% \vspace{-3mm}
\subsection{Methods and Evaluation Criteria}\label{sec:methods-and-eval}

We consider three classes of algorithms for equation discovery: sparse regression (PySINDy, \citet{desilva2020,Kaptanoglu2022}), genetic programming (PySR, \citet{cranmer2023interpretable}), and a pretrained symbolic transformer (E2E, \citet{kamienny2022end}). For each class, we compare the original algorithm using the regular jet space variables (i.e. $(\mathbf x, u^{(n)})$ ) and our method using symmetry invariants. Our method will be referenced as SI (\textbf{S}ymmetry \textbf{I}nvariants) in the results.

To evaluate an equation discovery algorithm, we run it 100 times with randomly sampled data subsets and randomly initialized models if applicable. We record its \textit{success probability} (SP) of discovering the correct equation. Specifically, we expand the ground truth equation into $\sum_i c^if^i(\mathbf z) = 0$, where $c^i$ are nonzero coefficients, $\mathbf z$ denotes the variables involved in the algorithm, i.e., original jet variables $(\mathbf x, u^{(n)})$ for baselines and symmetry invariants for our method, and $f^i$ are functions of $\mathbf z$. Also, the discovered equation is expanded as $\sum_i \hat c^i \hat f^i(\mathbf z) = 0$, where $\hat c^i \not=0$. The discovered equation is considered correct if all the terms with nonzero coefficients match the ground truth, i.e., $\{f^i\} = \{\hat f^i\}$.
We also report the \textit{prediction error} (PE), which measures how well the discovered equation fits the data. For evolution equations with time derivatives on the LHS, we simulate each discovered equation from an initial condition and measure its difference from the ground truth solution at a specific timestep in terms of root mean square error (RMSE). Otherwise, we just report the RMSE of the discovered equation evaluated on all test data points.

\subsection{Results on Clean Data with Perfect Symmetry}
\label{sec:exp-clean}

% \vspace{-2mm}
\begin{table}[H]

\caption{Equation discovery results on clean data. $\mathcal C$, standing for \textit{complexity}, refers to the effective parameter space dimension in sparse regression and the number of variables in GP/Transformer. SP and PE stands for \textit{success probability} and \textit{prediction error}, as explained in \cref{sec:methods-and-eval}. The entries "-" suggest that the method does not apply to the specific PDE system, or the result is not meaningful. The arrows $\uparrow/\downarrow$ mean higher/lower metrics are better. Confidence intervals for PE are reported in \cref{tab:all-pe-error-bar}.}
\label{tab:all}
\centering

\resizebox{\textwidth}{!}{

\begin{tabular}{ccccccccccc}
\toprule
    \multicolumn{2}{c}{\multirow{2}{4em}{Method}} & \multicolumn{3}{c}{Boussinesq \eqref{eq:boussinesq}} & \multicolumn{3}{c}{Darcy flow \eqref{eq:darcy}} & \multicolumn{3}{c}{Reaction-diffusion \eqref{eq:rd-original}} \\
    & & $\mathcal C$ $\downarrow$ & SP $\uparrow$ & PE $\downarrow$ & $\mathcal C$ $\downarrow$ & SP $\uparrow$ & PE $\downarrow$ & $\mathcal C$ $\downarrow$ & SP $\uparrow$ & PE $\downarrow$ \\
\midrule
    \multirow{2}{4em}{Sparse Regression} & PySINDy & 15 & \textcolor{gray}{0.00} & 0.373 & - & - & - & 38 & 0.53 & 0.021 \\
    % & PySINDy$^*$ & 21 & 1.00 & - & - & 468 & 0.00 \\
    % & SI-aligned & 13 & $u_{tt} = - 1.00 u_{xxxx} - 0.99 u_x^2 - 0.98 uu_{xx}$ & 1.00 \\
    & SI & 15 & \textbf{1.00} & \textbf{0.098} & - & - & - & 28 & \textbf{0.54} & \textbf{0.008} \\
    % & SI-aligned & - & - & - & - & - & - & 14 & \textbf{0.56} & 0.019 \\
\hline
    \multirow{2}{4em}{Genetic Programming} & PySR & 17 & 0.90 & 0.098 & 8 & \textcolor{gray}{0.00} & 0.114 & 17
  & \textcolor{gray}{0.00} & - \\
    % & PySR-implicit & $F(\mathbf x, u^{(4)}) = 0$ &  \\
    & SI & \textbf{14} & \textbf{1.00} & 0.098 & \textbf{7} & \textbf{0.79} & \textbf{0.051} & \textbf{16} & \textbf{0.81} & \textbf{0.023} \\
    % & SI-implicit & $F(\mathbf I) = 0$ & \\
\hline
    \multirow{2}{4em}{Transformer} & E2E & 10 & 0.53 & 0.132 & 8 & \textcolor{gray}{0.00} & - & 17 & \textcolor{gray}{0.00} & - \\
    & SI & \textbf{7} & \textbf{0.85} & \textbf{0.104} & \textbf{7} & \textcolor{gray}{0.00} & - & \textbf{16} & \textcolor{gray}{0.00} & - \\
\bottomrule
\end{tabular}

}

\end{table}
% \vspace{-2mm}

% \input{tables/boussinesq}

% \begin{minipage}[t]{0.48\textwidth}
%   \begin{table}[H]
%   \centering
%     \caption{Results of GP methods for Darcy flow. }
%     \label{tab:darcy}
%     \resizebox{\textwidth}{!}{
%     \input{tables/darcy}
%     }
% \end{table}
% \end{minipage}%
% \todo{replace this with a figure}
% \hfill
% \begin{minipage}[t]{0.48\textwidth}
%   \begin{table}[H]
%     \caption{Equation discovery results for 2D reaction-diffusion. See \cref{tab:rd-hypo} for full descriptions of hypotheses used in different methods.}
%     \label{tab:rd}
%     \centering
%     \resizebox{\textwidth}{!}{
%     \input{tables/reac_diff}
%     }
%   \end{table}
% \end{minipage}

\cref{tab:all} summarizes the performance of all methods on the three PDE systems. For prediction errors (PE), we report the median, instead of the mean + standard deviation, of 100 runs for each algorithm. This is because some incorrectly discovered equations yield large prediction errors, making the mean and the standard deviation less meaningful. We additionally report the confidence intervals for prediction errors by [25\% quantile, 75\% quantile] in \cref{tab:all-pe-error-bar}. Comparisons are made within each class of methods. Generally, using symmetry invariants reduces the complexity, defined as the effective parameter space dimension in sparse regression and the number of variables in GP/Transformer, of equation discovery, and improves the chance of finding the correct equations compared to the baselines.

Specifically, in sparse regression, our method (SI) using symmetry invariants is only slightly better than PySINDy in the reaction-diffusion system, but constantly succeeds in the Boussinesq equation where PySINDy fails. The failure of PySINDy is because the $u_x^2$ term in \eqref{eq:boussinesq} is not supported by its function library, showing that SINDy's success relies heavily on the choice of function library. This exclusion of function terms also explains why the reported complexity ($\mathcal C$) of PySINDy is the same as that of our method (SI), which would otherwise be smaller because symmetry reduces the hypothesis space of equations. On the other hand, by enforcing the equation to be expressed in invariants, our method automatically identifies the proper function library.
Appendix \ref{sec:appd-more-exp-sindy} provides results for other variants of sparse regression, where we modify the implementation of PySINDy to include different terms in its library.

% For sparse regression with original variables, we include two sets of results with different function libraries. PySINDy is the official implementation from \citet{Kaptanoglu2022}. PySINDy$^*$ is our modified implementation, which supports a wider range of library functions. A complete description of their hypothesis space is available in \cref{sec:appd-hypotheses}. As \cref{tab:all} shows, both of them fail under different circumstances. PySINDy cannot discover the Boussinesq equation because the $u_x^2$ term is not supported by its function library. PySINDy$^*$ succeeds in the Boussinesq equation but fails in the reaction-diffusion system because its parameter space becomes too large due to a higher-dimensional total space $X \times U \simeq \mathbb R^2 \times \mathbb R^2$. This shows that SINDy's success relies heavily on an appropriate choice of function library.
% On the other hand, by enforcing the equation to be expressed in invariants, our method automatically identifies the proper function library.

For GP-based methods, \cref{tab:all} displays the results with a fixed number of GP iterations for each dataset. We also experiment with different numbers of iterations in \cref{fig:gp-success-prob}, showing that our method can identify the correct equations with fewer iterations and is considered more efficient.

\begin{minipage}[H]{\textwidth}
\centering
\includegraphics[width=.275\textwidth]{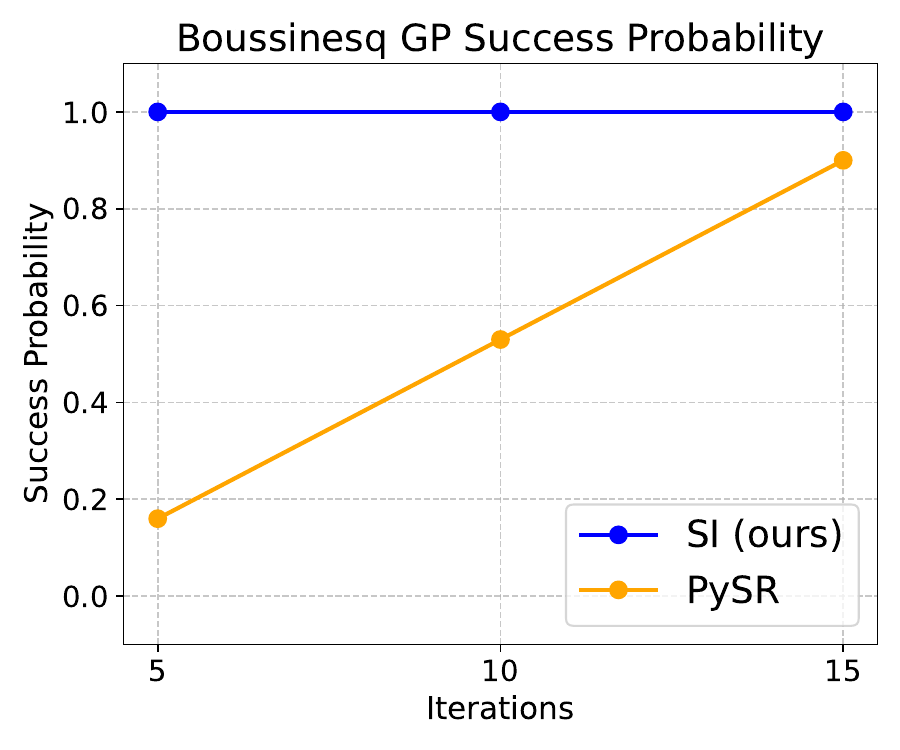}
\includegraphics[width=.275\textwidth]{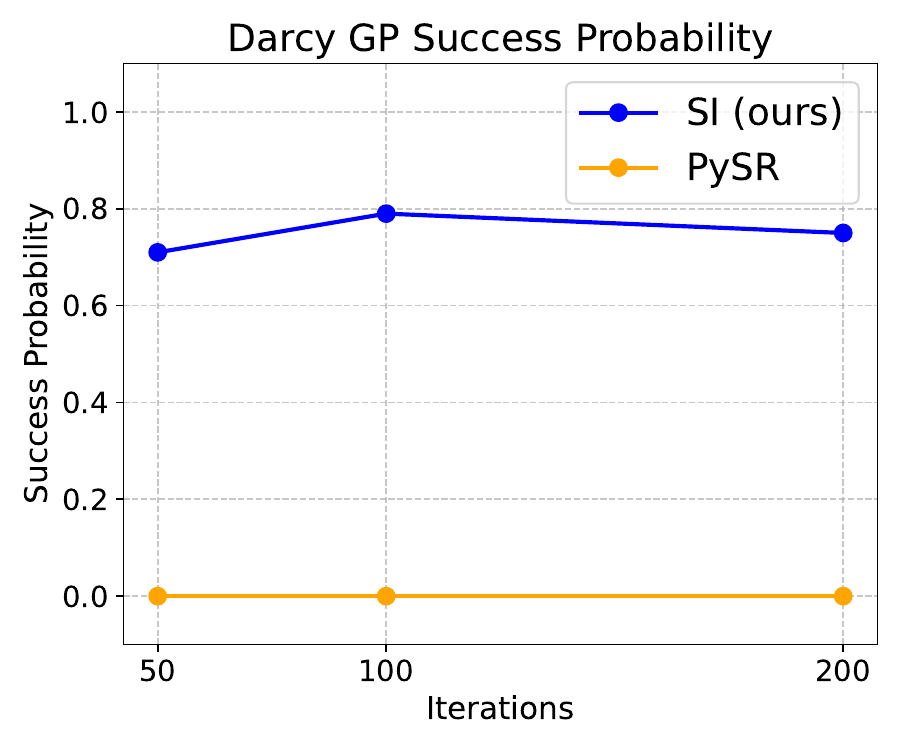}
\includegraphics[width=.275\textwidth]{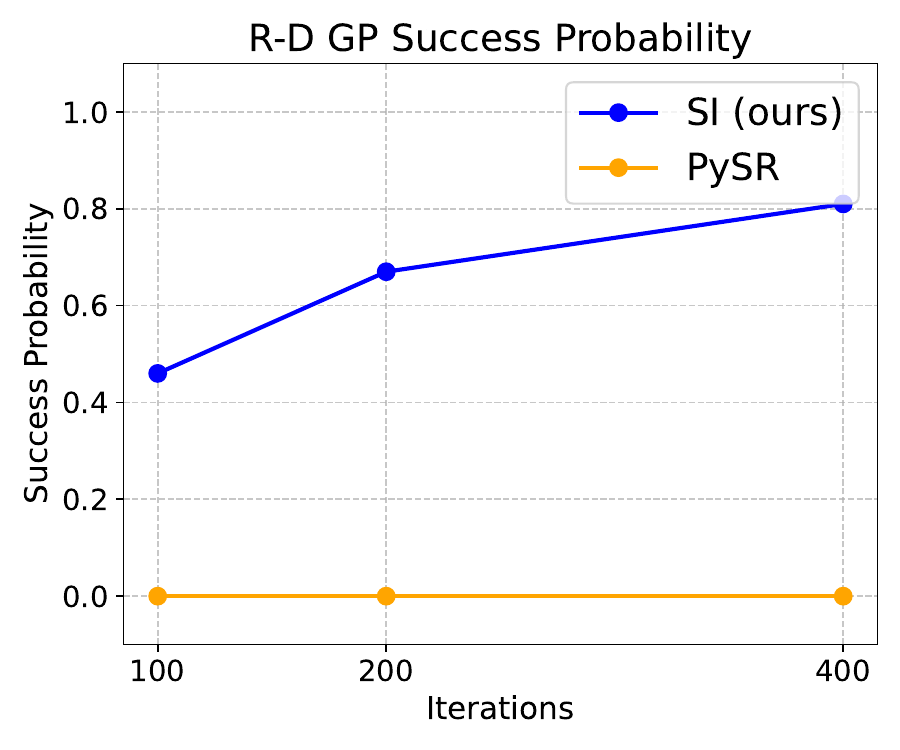}
\captionof{figure}{Success Probabilities of GP-based methods on different systems. Our method with symmetry invariants can discover the correct equations with fewer iterations.}
\label{fig:gp-success-prob}
\end{minipage}

On the other hand, the pretrained symbolic transformer fails on two of the three datasets. We conjecture this is because the data distribution from PDE solutions greatly differs from its pretraining dataset. However, the symbolic transformer can discover the Boussinesq equation correctly, where using symmetry invariants leads to a much higher success probability.

\subsection{Results on Noisy Data and Imperfect Symmetry}
\label{sec:exp-noisy}

We test the robustness of our method under two challenging scenarios: (1) noise in observed data, and (2) PDE with imperfect symmetry.

\begin{minipage}[ht]{\textwidth}
\centering
\includegraphics[width=.325\textwidth]{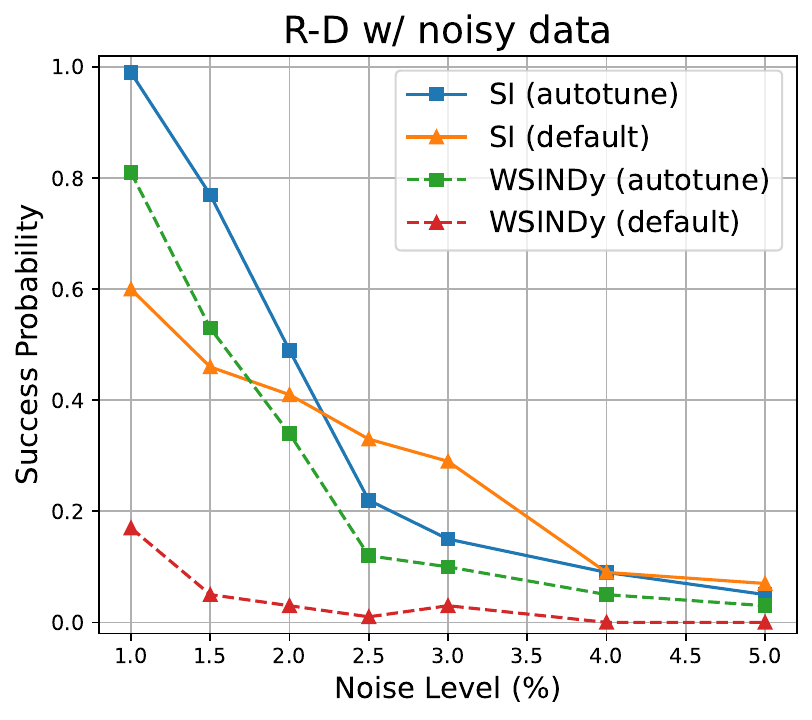}
\includegraphics[width=.325\textwidth]{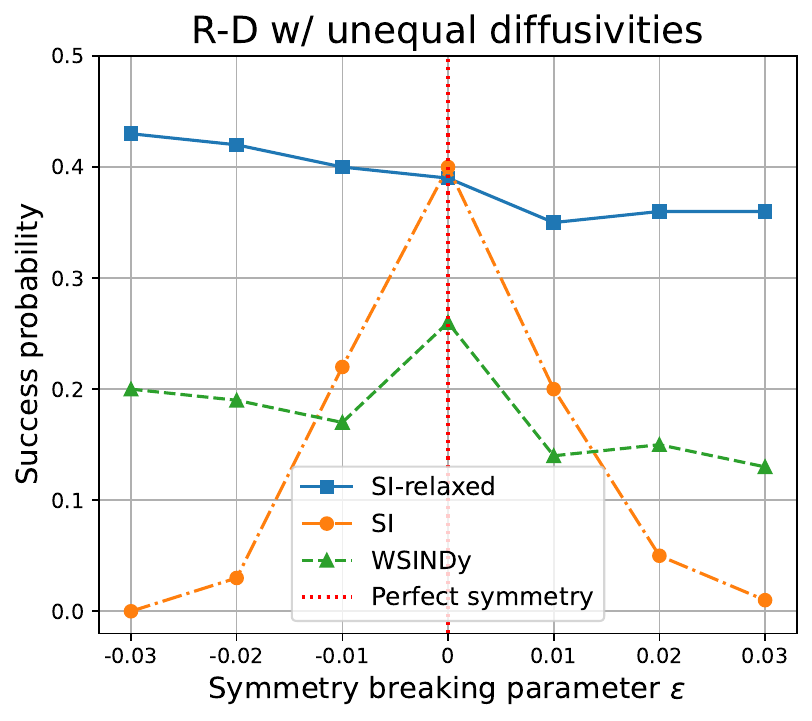}
\includegraphics[width=.325\textwidth]{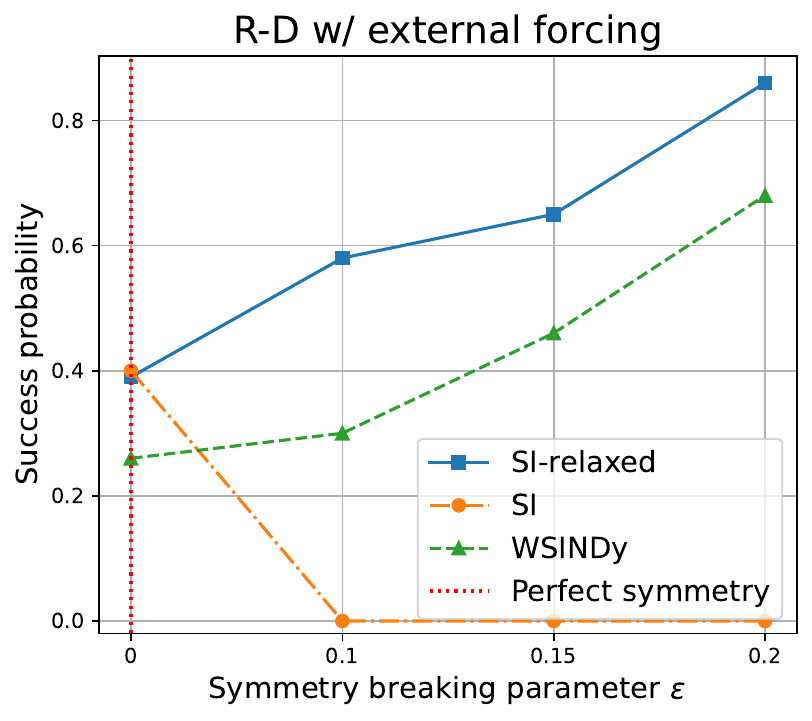}
\captionof{figure}{Success probabilities of sparse regression methods on the reaction-diffusion system with noisy data (left), unequal diffusivities (center) and external forcing (right). Under noisy data, our method (SI) consistently outperforms WSINDy under the same test function configuration. For systems with imperfect symmetry, strictly enforcing symmetry (SI) hurts performance, but a relaxed symmetry constraint (SI-relaxed, introduced in \cref{sec:relaxed-symmetry}) is still better than no inductive bias (WSINDy).}
\label{fig:rd-variants}
\end{minipage}

In the first experiment, we add different levels of white noise to the simulated solution of the reaction-diffusion system. Since the derivatives estimated by finite difference are inaccurate with the noisy solution, we use WSINDy \citep{messenger2021weak}, which does not require derivative estimation. The success probabilities of our method (SI) and WSINDy are shown in \cref{fig:rd-variants} (left). We adopt two different strategies for choosing the test function parameters in WSINDy, including the number of test functions, half-lengths of the square subdomains in each spatiotemporal direction, and the polynomial degree of test functions: (1) following \citet{messenger2021weak}, the parameters are \textit{auto-tuned} based on the noise characteristic of the data; and (2) the \textit{default} parameters in PySINDy are used. In both WSINDy setups, our method consistently achieves a higher success probability at different noise levels. Notably, when the noise level is high, our symmetry-constrained model performs better with the default parameter setup, which, as we observe, uses fewer test functions and lower polynomial degrees than the auto-tuned setup. We comment that choosing test functions and related hyperparameters is known to be a challenging problem \citep{bortz2023direct, tran2025weak}, and we leave further investigation of this phenomenon to future work.

In the second experiment, we simulate the two variants of \eqref{eq:rd-original} (unequal diffusivities and external forcing) with different values for the symmetry-breaking parameter $\epsilon$ and add $2\%$ noise to the numerical solutions.
% Similar to the previous experiment, we use weak SINDy as the base algorithm.
We compare three models: (1) our model with strictly enforced symmetry (SI), (2) our model with relaxed symmetry (SI-relaxed) introduced in \cref{sec:relaxed-symmetry}, and (3) WSINDy with default PySINDy parameters as the baseline. The results for the two systems with symmetry breaking are shown in \cref{fig:rd-variants} (center \& right). As expected, SI has a much lower success probability when the symmetry-breaking factor becomes significant. Meanwhile, SI-relaxed remains highly competitive. It also has a clear advantage over baseline SINDy, showing that even if the inductive bias of symmetry is slightly inaccurate, our model with relaxed constraints is still better than a model without any knowledge of symmetry. We also include the results on the same systems with auto-tuned WSINDy parameters in Appendix \ref{sec:appd-wsindy-autotune}.

More comprehensive results, e.g. variant sparse regression models, comparison with an additional D-CIPHER \citep{kacprzyk2023d} baseline, equations with higher spatial dimensions and larger symmetry groups, discovered equation samples, are provided in Appendix \ref{sec:appd-more-exp}.
% Details about experiment setups can be found in \cref{sec:appd-exp-details}.

\vspace{-1mm}
\section{Discussion}
\label{sec:dicussion}
\vspace{-2mm}

We propose to enforce symmetry in symbolic regression algorithms for discovering PDEs by using differential invariants of the symmetry group as the variable set. We implement this general strategy in different classes of algorithms and observe improved accuracy, efficiency and robustness of equation discovery, especially in challenging scenarios such as noisy data and imperfect symmetry.

It should be noted that our method assumes the symmetry group is already given. This assumption aligns with common practice: physicists often begin by hypothesizing the symmetries of a system and seek governing equations allowed by those symmetries. However, our current framework cannot be applied if symmetry is unknown, and will produce incorrect results with misspecified symmetry. This can be potentially addressed by incorporating automated symmetry discovery methods for differential equations \citep{yang2024symmetry,ko2024learning}, which we leave for future work.

Another caveat of our method is the calculation of differential invariants. While solving for $\mathbf v^{(n)}(\eta) = 0$ and applying the formula \eqref{eq:higher-order-di-p>1} is easy with any symbolic computation package, the resulting differential invariants may be complicated and require ad-hoc adjustment for better interpretability and compatibility with specific algorithm implementations (e.g. conditions in Proposition \ref{prop:sindy-linear-constraint}). Fortunately, this only requires a one-time effort. Once we have derived the invariants for a symmetry group, the results can be reused for any equation admitting the same symmetry.

\subsubsection*{Broader Impact Statement}
The method in this paper can potentially be used to expedite the process of discovering governing equations from data and aid researchers in other scientific domains. Equally important, equations inferred from imperfect or biased data may appear authoritative yet embed systematic errors. Thorough validation checks, uncertainty quantification, and domain-expert review protocols for the discovered equations are essential.

% \subsubsection*{Author Contributions}
% If you'd like to, you may include a section for author contributions as is done
% in many journals. This is optional and at the discretion of the authors. Only add
% this information once your submission is accepted and deanonymized. 

% \subsubsection*{Acknowledgments}
% Use unnumbered third level headings for the acknowledgments. All
% acknowledgments, including those to funding agencies, go at the end of the paper.
% Only add this information once your submission is accepted and deanonymized. 

\bibliography{ref}

@InProceedings{
gaucel14,
author="Gaucel, S{\'e}bastien
and Keijzer, Maarten
and Lutton, Evelyne
and Tonda, Alberto",
editor="Nicolau, Miguel
and Krawiec, Krzysztof
and Heywood, Malcolm I.
and Castelli, Mauro
and Garc{\'i}a-S{\'a}nchez, Pablo
and Merelo, Juan J.
and Rivas Santos, Victor M.
and Sim, Kevin",
title="Learning Dynamical Systems Using Standard Symbolic Regression",
booktitle="Genetic Programming",
year="2014",
publisher="Springer Berlin Heidelberg",
address="Berlin, Heidelberg",
pages="25--36",
}

@inproceedings{
cranmer20,
author = {Cranmer, Miles and Sanchez Gonzalez, Alvaro and Battaglia, Peter and Xu, Rui and Cranmer, Kyle and Spergel, David and Ho, Shirley},
booktitle = {Advances in Neural Information Processing Systems},
editor = {H. Larochelle and M. Ranzato and R. Hadsell and M.F. Balcan and H. Lin},
pages = {17429--17442},
publisher = {Curran Associates, Inc.},
title = {Discovering Symbolic Models from Deep Learning with Inductive Biases},
volume = {33},
year = {2020}
}

@misc{
cranmer19,
title={Learning Symbolic Physics with Graph Networks}, 
author={Miles D. Cranmer and Rui Xu and Peter Battaglia and Shirley Ho},
year={2019},
eprint={1909.05862},
archivePrefix={arXiv},
primaryClass={cs.LG}
}

@article{
brunton16,
author = {Steven L. Brunton  and Joshua L. Proctor  and J. Nathan Kutz },
title = {Discovering governing equations from data by sparse identification of nonlinear dynamical systems},
journal = {Proceedings of the National Academy of Sciences},
volume = {113},
number = {15},
pages = {3932-3937},
year = {2016},
doi = {10.1073/pnas.1517384113},
eprint = {https://www.pnas.org/doi/pdf/10.1073/pnas.1517384113}
}

@article{
champion19,
author = {Kathleen Champion  and Bethany Lusch  and J. Nathan Kutz  and Steven L. Brunton },
title = {Data-driven discovery of coordinates and governing equations},
journal = {Proceedings of the National Academy of Sciences},
volume = {116},
number = {45},
pages = {22445-22451},
year = {2019},
doi = {10.1073/pnas.1906995116},
eprint = {https://www.pnas.org/doi/pdf/10.1073/pnas.1906995116}
}

@article{
xie22,
author = {Xie, Xiaoyu and Samaei, Arash and Guo, Jiachen and Liu, Wing Kam and Gan, Zhengtao},
title = {Data-driven discovery of dimensionless numbers and governing laws from scarce measurements},
journal = {Nature Communications},
volume = {13},
number = {1},
pages = {7562},
year = {2022},
doi = {10.1038/s41467-022-35084-w},
}

@article{
bakarji22,
title={Dimensionally consistent learning with Buckingham Pi},
author={Bakarji, Joseph and Callaham, Jared and Brunton, Steven L. and Kutz, J. Nathan},
journal={Nature Computational Science},
volume={2},
pages={834--844},
year={2022},
month={12},
issn={2662-8457},
doi={10.1038/s43588-022-00355-5},
}

@inproceedings{dalton2024physics,
  title={Physics and Lie symmetry informed Gaussian processes},
  author={Dalton, David and Husmeier, Dirk and Gao, Hao},
  booktitle={Forty-first International Conference on Machine Learning},
  year={2024}
}

@article{zhang2023enforcing,
  title={Enforcing continuous symmetries in physics-informed neural network for solving forward and inverse problems of partial differential equations},
  author={Zhang, Zhi-Yong and Zhang, Hui and Zhang, Li-Sheng and Guo, Lei-Lei},
  journal={Journal of Computational Physics},
  volume={492},
  pages={112415},
  year={2023},
  publisher={Elsevier}
}

@book{olver1995equivalence,
  title={Equivalence, invariants and symmetry},
  author={Olver, Peter J},
  year={1995},
  publisher={Cambridge University Press}
}

@inproceedings{wang2022approximately,
title={Approximately Equivariant Networks for Imperfectly Symmetric Dynamics},
author={Rui Wang and Robin Walters and Rose Yu},
booktitle={International Conference on Machine Learning},
year={2022},
organization={PMLR}
}

@inproceedings{finzi2021rpp,
 author = {Finzi, Marc and Benton, Gregory and Wilson, Andrew G},
 booktitle = {Advances in Neural Information Processing Systems},
 editor = {M. Ranzato and A. Beygelzimer and Y. Dauphin and P.S. Liang and J. Wortman Vaughan},
 pages = {30037--30049},
 publisher = {Curran Associates, Inc.},
 title = {Residual Pathway Priors for Soft Equivariance Constraints},
 url = {https://proceedings.neurips.cc/paper_files/paper/2021/file/fc394e9935fbd62c8aedc372464e1965-Paper.pdf},
 volume = {34},
 year = {2021}
}

@book{newell1985solitons,
  title={Solitons in mathematics and physics},
  author={Newell, Alan C},
  year={1985},
  publisher={SIAM}
}

@article{takamoto2022pdebench,
  title={Pdebench: An extensive benchmark for scientific machine learning},
  author={Takamoto, Makoto and Praditia, Timothy and Leiteritz, Raphael and MacKinlay, Daniel and Alesiani, Francesco and Pfl{\"u}ger, Dirk and Niepert, Mathias},
  journal={Advances in Neural Information Processing Systems},
  volume={35},
  pages={1596--1611},
  year={2022}
}

@InProceedings{
lee22,
title = {Structure-preserving Sparse Identification of Nonlinear Dynamics for Data-driven Modeling},
author = {Lee, Kookjin and Trask, Nathaniel and Stinis, Panos},
booktitle = {Proceedings of Mathematical and Scientific Machine Learning},
pages = {65--80},
year = {2022},
editor = {Dong, Bin and Li, Qianxiao and Wang, Lei and Xu, Zhi-Qin John},
volume = {190},
series = {Proceedings of Machine Learning Research},
month = {15--17 Aug},
publisher = {PMLR},
pdf = {https://proceedings.mlr.press/v190/lee22a/lee22a.pdf},
}

@article{udrescu2020ai,
  title={AI Feynman: A physics-inspired method for symbolic regression},
  author={Udrescu, Silviu-Marian and Tegmark, Max},
  journal={Science Advances},
  volume={6},
  number={16},
  pages={eaay2631},
  year={2020},
  publisher={American Association for the Advancement of Science}
}

@article{messenger2021weaksindy,
  title={Weak SINDy: Galerkin-based data-driven model selection},
  author={Messenger, Daniel A and Bortz, David M},
  journal={Multiscale Modeling \& Simulation},
  volume={19},
  number={3},
  pages={1474--1497},
  year={2021},
  publisher={SIAM}
}

@inproceedings{qian2022dcode,
  title={D-code: Discovering closed-form odes from observed trajectories},
  author={Qian, Zhaozhi and Kacprzyk, Krzysztof and van der Schaar, Mihaela},
  booktitle={International Conference on Learning Representations},
  year={2022}
}

@article{akhound2023lie,
  title={Lie point symmetry and physics-informed networks},
  author={Akhound-Sadegh, Tara and Perreault-Levasseur, Laurence and Brandstetter, Johannes and Welling, Max and Ravanbakhsh, Siamak},
  journal={Advances in Neural Information Processing Systems},
  volume={36},
  pages={42468--42481},
  year={2023}
}

@misc{otto2023unified,
      title={A Unified Framework to Enforce, Discover, and Promote Symmetry in Machine Learning}, 
      author={Samuel E. Otto and Nicholas Zolman and J. Nathan Kutz and Steven L. Brunton},
      year={2023},
      eprint={2311.00212},
      archivePrefix={arXiv},
      primaryClass={cs.LG}
}

@inproceedings{emlp,
  title={A practical method for constructing equivariant multilayer perceptrons for arbitrary matrix groups},
  author={Finzi, Marc and Welling, Max and Wilson, Andrew Gordon},
  booktitle={International Conference on Machine Learning},
  pages={3318--3328},
  year={2021},
  organization={PMLR}
}

@inproceedings{
wang2021incorporating,
title={Incorporating Symmetry into Deep Dynamics Models for Improved Generalization},
author={Rui Wang and Robin Walters and Rose Yu},
booktitle={International Conference on Learning Representations},
year={2021},
}

@inproceedings{sahoo2018learning,
  title={Learning equations for extrapolation and control},
  author={Sahoo, Subham and Lampert, Christoph and Martius, Georg},
  booktitle={International Conference on Machine Learning},
  pages={4442--4450},
  year={2018},
  organization={PMLR}
}

@article{martius2016extrapolation,
  title={Extrapolation and learning equations},
  author={Martius, Georg and Lampert, Christoph H},
  journal={arXiv preprint arXiv:1610.02995},
  year={2016}
}

@inproceedings{brandstetter2022lie,
  title={Lie point symmetry data augmentation for neural PDE solvers},
  author={Brandstetter, Johannes and Welling, Max and Worrall, Daniel E},
  booktitle={International Conference on Machine Learning},
  pages={2241--2256},
  year={2022},
  organization={PMLR}
}

@misc{cranmer2023interpretable,
      title={Interpretable Machine Learning for Science with PySR and SymbolicRegression.jl}, 
      author={Miles Cranmer},
      year={2023},
      eprint={2305.01582},
      archivePrefix={arXiv},
      primaryClass={astro-ph.IM}
}

@book{olver1993applications,
  title={Applications of Lie groups to differential equations},
  author={Olver, Peter J},
  volume={107},
  year={1993},
  publisher={Springer Science \& Business Media}
}

@article{mialon2023self,
  title={Self-supervised learning with lie symmetries for partial differential equations},
  author={Mialon, Gr{\'e}goire and Garrido, Quentin and Lawrence, Hannah and Rehman, Danyal and LeCun, Yann and Kiani, Bobak},
  journal={Advances in Neural Information Processing Systems},
  volume={36},
  pages={28973--29004},
  year={2023}
}

@article{kamienny2022end,
  title={End-to-end symbolic regression with transformers},
  author={Kamienny, Pierre-Alexandre and d'Ascoli, St{\'e}phane and Lample, Guillaume and Charton, Fran{\c{c}}ois},
  journal={Advances in Neural Information Processing Systems},
  volume={35},
  pages={10269--10281},
  year={2022}
}

@inproceedings{yang2024symmetry,
  title={Symmetry-Informed Governing Equation Discovery}, 
  author={Yang, Jianke and Rao, Wang and Dehmamy, Nima and Walters, Robin and Yu, Rose},
  year={2024},
  booktitle={Advances in Neural Information Processing Systems (NeurIPS)}, 
}

@article{wang2019symbolic,
  title={Symbolic regression in materials science},
  author={Wang, Yiqun and Wagner, Nicholas and Rondinelli, James M},
  journal={MRS Communications},
  volume={9},
  number={3},
  pages={793--805},
  year={2019},
  publisher={Cambridge University Press}
}

@article{grundner2023data,
  title={Data-driven equation discovery of a cloud cover parameterization},
  author={Grundner, Arthur and Beucler, Tom and Gentine, Pierre and Eyring, Veronika},
  journal={arXiv preprint arXiv:2304.08063},
  year={2023}
}

@article{schmidt2009distilling,
  title={Distilling free-form natural laws from experimental data},
  author={Schmidt, Michael and Lipson, Hod},
  journal={science},
  volume={324},
  number={5923},
  pages={81--85},
  year={2009},
  publisher={American Association for the Advancement of Science}
}

@misc{dubvcakova2011eureqa,
  title={Eureqa: software review},
  author={Dub{\v{c}}{\'a}kov{\'a}, Ren{\'a}ta},
  year={2011},
  publisher={Springer}
}

@article{udrescu2020ai2.0,
  title={AI Feynman 2.0: Pareto-optimal symbolic regression exploiting graph modularity},
  author={Udrescu, Silviu-Marian and Tan, Andrew and Feng, Jiahai and Neto, Orisvaldo and Wu, Tailin and Tegmark, Max},
  journal={Advances in Neural Information Processing Systems},
  volume={33},
  pages={4860--4871},
  year={2020}
}

@article{kaheman2020sindy,
  title={SINDy-PI: a robust algorithm for parallel implicit sparse identification of nonlinear dynamics},
  author={Kaheman, Kadierdan and Kutz, J Nathan and Brunton, Steven L},
  journal={Proceedings of the Royal Society A},
  volume={476},
  number={2242},
  pages={20200279},
  year={2020},
  publisher={The Royal Society Publishing}
}

@article{rudy2017data,
  title={Data-driven discovery of partial differential equations},
  author={Rudy, Samuel H and Brunton, Steven L and Proctor, Joshua L and Kutz, J Nathan},
  journal={Science advances},
  volume={3},
  number={4},
  pages={e1602614},
  year={2017},
  publisher={American Association for the Advancement of Science}
}

@article{petersen2019deep,
  title={Deep symbolic regression: Recovering mathematical expressions from data via risk-seeking policy gradients},
  author={Petersen, Brenden K and Landajuela, Mikel and Mundhenk, T Nathan and Santiago, Claudio P and Kim, Soo K and Kim, Joanne T},
  journal={arXiv preprint arXiv:1912.04871},
  year={2019}
}

@InProceedings{pmlr-v139-biggio21a,
  title = 	 {Neural Symbolic Regression that scales},
  author =       {Biggio, Luca and Bendinelli, Tommaso and Neitz, Alexander and Lucchi, Aurelien and Parascandolo, Giambattista},
  booktitle = 	 {Proceedings of the 38th International Conference on Machine Learning},
  pages = 	 {936--945},
  year = 	 {2021},
  editor = 	 {Meila, Marina and Zhang, Tong},
  volume = 	 {139},
  series = 	 {Proceedings of Machine Learning Research},
  month = 	 {18--24 Jul},
  publisher =    {PMLR},
  pdf = 	 {http://proceedings.mlr.press/v139/biggio21a/biggio21a.pdf},
}

@article{holt2023deep,
  title={Deep generative symbolic regression},
  author={Holt, Samuel and Qian, Zhaozhi and van der Schaar, Mihaela},
  journal={arXiv preprint arXiv:2401.00282},
  year={2023}
}

@article{messenger2021weak,
  title={Weak SINDy for partial differential equations},
  author={Messenger, Daniel A and Bortz, David M},
  journal={Journal of Computational Physics},
  volume={443},
  pages={110525},
  year={2021},
  publisher={Elsevier}
}

@article{rao2022discovering,
  title={Discovering nonlinear PDEs from scarce data with physics-encoded learning},
  author={Rao, Chengping and Ren, Pu and Liu, Yang and Sun, Hao},
  journal={arXiv preprint arXiv:2201.12354},
  year={2022}
}

@article{desilva2020,
doi = {10.21105/joss.02104},
url = {https://doi.org/10.21105/joss.02104},
year = {2020},
publisher = {The Open Journal},
volume = {5},
number = {49},
pages = {2104},
author = {Brian de Silva and Kathleen Champion and Markus Quade and Jean-Christophe Loiseau and J. Kutz and Steven Brunton},
title = {PySINDy: A Python package for the sparse identification of nonlinear dynamical systems from data},
journal = {Journal of Open Source Software}
}

@article{Kaptanoglu2022,
doi = {10.21105/joss.03994},
url = {https://doi.org/10.21105/joss.03994},
year = {2022},
publisher = {The Open Journal},
volume = {7},
number = {69},
pages = {3994},
author = {Alan A. Kaptanoglu and Brian M. de Silva and Urban Fasel and Kadierdan Kaheman and Andy J. Goldschmidt and Jared Callaham and Charles B. Delahunt and Zachary G. Nicolaou and Kathleen Champion and Jean-Christophe Loiseau and J. Nathan Kutz and Steven L. Brunton},
title = {PySINDy: A comprehensive Python package for robust sparse system identification},
journal = {Journal of Open Source Software}
}

@article{merler2024context,
  title={In-context symbolic regression: Leveraging large language models for function discovery},
  author={Merler, Matteo and Haitsiukevich, Katsiaryna and Dainese, Nicola and Marttinen, Pekka},
  journal={arXiv preprint arXiv:2404.19094},
  year={2024}
}

@article{shojaee2024llm,
  title={Llm-sr: Scientific equation discovery via programming with large language models},
  author={Shojaee, Parshin and Meidani, Kazem and Gupta, Shashank and Farimani, Amir Barati and Reddy, Chandan K},
  journal={arXiv preprint arXiv:2404.18400},
  year={2024}
}

@article{shojaee2025llm,
  title={LLM-SRBench: A New Benchmark for Scientific Equation Discovery with Large Language Models},
  author={Shojaee, Parshin and Nguyen, Ngoc-Hieu and Meidani, Kazem and Farimani, Amir Barati and Doan, Khoa D and Reddy, Chandan K},
  journal={arXiv preprint arXiv:2504.10415},
  year={2025}
}

@article{grayeli2024symbolic,
  title={Symbolic regression with a learned concept library},
  author={Grayeli, Arya and Sehgal, Atharva and Costilla Reyes, Omar and Cranmer, Miles and Chaudhuri, Swarat},
  journal={Advances in Neural Information Processing Systems},
  volume={37},
  pages={44678--44709},
  year={2024}
}

@article{ko2024learning,
  title={Learning infinitesimal generators of continuous symmetries from data},
  author={Ko, Gyeonghoon and Kim, Hyunsu and Lee, Juho},
  journal={Advances in Neural Information Processing Systems},
  volume={37},
  pages={85973--86003},
  year={2024}
}

@article{kacprzyk2023d,
  title={D-CIPHER: discovery of closed-form partial differential equations},
  author={Kacprzyk, Krzysztof and Qian, Zhaozhi and van der Schaar, Mihaela},
  journal={Advances in Neural Information Processing Systems},
  volume={36},
  pages={27609--27644},
  year={2023}
}

@article{tran2025weak,
  title={Weak Form Scientific Machine Learning: Test Function Construction for System Identification},
  author={Tran, April and Bortz, David},
  journal={arXiv preprint arXiv:2507.03206},
  year={2025}
}

@article{messenger2024coarse,
  title={Coarse-graining Hamiltonian systems using WSINDy},
  author={Messenger, Daniel A and Burby, Joshua W and Bortz, David M},
  journal={Scientific Reports},
  volume={14},
  number={1},
  pages={14457},
  year={2024},
  publisher={Nature Publishing Group UK London}
}

@article{gurevich2024learning,
  title={Learning fluid physics from highly turbulent data using sparse physics-informed discovery of empirical relations (SPIDER)},
  author={Gurevich, Daniel R and Golden, Matthew R and Reinbold, Patrick AK and Grigoriev, Roman O},
  journal={Journal of Fluid Mechanics},
  volume={996},
  pages={A25},
  year={2024},
  publisher={Cambridge University Press}
}

@article{lu2021learning,
  title={Learning nonlinear operators via DeepONet based on the universal approximation theorem of operators},
  author={Lu, Lu and Jin, Pengzhan and Pang, Guofei and Zhang, Zhongqiang and Karniadakis, George Em},
  journal={Nature machine intelligence},
  volume={3},
  number={3},
  pages={218--229},
  year={2021},
  publisher={Nature Publishing Group UK London}
}

@article{chen2021solving,
  title={Solving and learning nonlinear PDEs with Gaussian processes},
  author={Chen, Yifan and Hosseini, Bamdad and Owhadi, Houman and Stuart, Andrew M},
  journal={Journal of Computational Physics},
  volume={447},
  pages={110668},
  year={2021},
  publisher={Elsevier}
}

@article{fabiani2024task,
  title={Task-oriented machine learning surrogates for tipping points of agent-based models},
  author={Fabiani, Gianluca and Evangelou, Nikolaos and Cui, Tianqi and Bello-Rivas, Juan M and Martin-Linares, Cristina P and Siettos, Constantinos and Kevrekidis, Ioannis G},
  journal={Nature communications},
  volume={15},
  number={1},
  pages={4117},
  year={2024},
  publisher={Nature Publishing Group UK London}
}

@article{floryan2022data,
  title={Data-driven discovery of intrinsic dynamics},
  author={Floryan, Daniel and Graham, Michael D},
  journal={Nature Machine Intelligence},
  volume={4},
  number={12},
  pages={1113--1120},
  year={2022},
  publisher={Nature Publishing Group UK London}
}

@article{dietrich2023learning,
  title={Learning effective stochastic differential equations from microscopic simulations: Linking stochastic numerics to deep learning},
  author={Dietrich, Felix and Makeev, Alexei and Kevrekidis, George and Evangelou, Nikolaos and Bertalan, Tom and Reich, Sebastian and Kevrekidis, Ioannis G},
  journal={Chaos: An Interdisciplinary Journal of Nonlinear Science},
  volume={33},
  number={2},
  year={2023},
  publisher={AIP Publishing}
}

@inproceedings{li2020fourier,
  title={Fourier neural operator for parametric partial differential equations},
  author={Li, Zongyi and Kovachki, Nikola and Azizzadenesheli, Kamyar and Liu, Burigede and Bhattacharya, Kaushik and Stuart, Andrew and Anandkumar, Anima},
  booktitle={International Conference on Learning Representations},
  year={2021}
}

@article{vlachas2020backpropagation,
  title={Backpropagation algorithms and reservoir computing in recurrent neural networks for the forecasting of complex spatiotemporal dynamics},
  author={Vlachas, Pantelis-Rafail and Pathak, Jaideep and Hunt, Brian R and Sapsis, Themistoklis P and Girvan, Michelle and Ott, Edward and Koumoutsakos, Petros},
  journal={Neural Networks},
  volume={126},
  pages={191--217},
  year={2020},
  publisher={Elsevier}
}

@article{lee2020coarse,
  title={Coarse-scale PDEs from fine-scale observations via machine learning},
  author={Lee, Seungjoon and Kooshkbaghi, Mahdi and Spiliotis, Konstantinos and Siettos, Constantinos I and Kevrekidis, Ioannis G},
  journal={Chaos: An Interdisciplinary Journal of Nonlinear Science},
  volume={30},
  number={1},
  year={2020},
  publisher={AIP Publishing}
}

@article{galaris2022numerical,
  title={Numerical bifurcation analysis of PDEs from lattice Boltzmann model simulations: a parsimonious machine learning approach},
  author={Galaris, Evangelos and Fabiani, Gianluca and Gallos, Ioannis and Kevrekidis, Ioannis and Siettos, Constantinos},
  journal={Journal of Scientific Computing},
  volume={92},
  number={2},
  pages={34},
  year={2022},
  publisher={Springer}
}

@article{vlachas2022multiscale,
  title={Multiscale simulations of complex systems by learning their effective dynamics},
  author={Vlachas, Pantelis R and Arampatzis, Georgios and Uhler, Caroline and Koumoutsakos, Petros},
  journal={Nature Machine Intelligence},
  volume={4},
  number={4},
  pages={359--366},
  year={2022},
  publisher={Nature Publishing Group UK London}
}

@article{lee2023learning,
  title={Learning black-and gray-box chemotactic PDEs/closures from agent based Monte Carlo simulation data},
  author={Lee, Seungjoon and Psarellis, Yorgos M and Siettos, Constantinos I and Kevrekidis, Ioannis G},
  journal={Journal of Mathematical Biology},
  volume={87},
  number={1},
  pages={15},
  year={2023},
  publisher={Springer}
}

@article{bortz2023direct,
  title={Direct estimation of parameters in ODE models using WENDy: Weak-form estimation of nonlinear dynamics},
  author={Bortz, David M and Messenger, Daniel A and Dukic, Vanja},
  journal={Bulletin of Mathematical Biology},
  volume={85},
  number={11},
  pages={110},
  year={2023},
  publisher={Springer}
}

@article{li2008public,
  title={A public turbulence database cluster and applications to study Lagrangian evolution of velocity increments in turbulence},
  author={Li, Yi and Perlman, Eric and Wan, Minping and Yang, Yunke and Meneveau, Charles and Burns, Randal and Chen, Shiyi and Szalay, Alexander and Eyink, Gregory},
  journal={Journal of Turbulence},
  number={9},
  pages={N31},
  year={2008},
  publisher={Taylor \& Francis}
}

@article{graham2016web,
  title={A web services accessible database of turbulent channel flow and its use for testing a new integral wall model for LES},
  author={Graham, Jason and Kanov, K and Yang, XIA and Lee, M and Malaya, N and Lalescu, CC and Burns, R and Eyink, G and Szalay, A and Moser, RD and others},
  journal={Journal of Turbulence},
  volume={17},
  number={2},
  pages={181--215},
  year={2016},
  publisher={Taylor \& Francis}
}
\bibliographystyle{tmlr}

\appendix
\newpage
\section{Math}
\label{sec:appd-math}

\subsection{Notations}

\begin{table}[h]
    \centering
    \caption{Descriptions of symbols used throughout the paper. The three blocks include (1) basic notations for PDEs, (2) notations for Lie symmetry of PDEs, and (3) notations for symbolic regression algorithms and miscellaneous.}
    \begin{tabular}{c|p{.84\textwidth}}
    \toprule
        Symbols & \multicolumn{1}{c}{Descriptions} \\
    \hhline{=|=}
    $p$ & Number of independent variables of a PDE. \\
    $q$ & Number of dependent variables of a PDE. \\
    $X$ & Space of independent variables of a PDE: $X \subset \mathbb R^p$. Also used to denote the feature space of SR algorithms. \\
    $U$ & Space of dependent variables of a PDE: $U \subset \mathbb R^q$. Assumed to be 1-dimensional unless otherwise stated. \\
    $E$ & Total space of all variables of a PDE: $E = X \times U$. \\
    $U_k$ & Space of strictly $k$th-order partial derivatives of variables in $U$ w.r.t variables in $X$. \\
    $U^{(n)}$ & Space of all partial derivatives up to $n$th order (including the original variables in $U$): $U^{(n)} = U \times U_1 \times \cdots \times U_n$. \\
    $M^{(n)}$ & $n$th-order jet space: $M^{(n)} \subset X \times U^{(n)}$. \\
    $T\mathcal M$ & The tangent bundle of a manifold $\mathcal M$. \\
    $\mathbf x$ & Independent variables of a PDE: $\mathbf x \in \mathbb R^p$. \\
    $t$ & Time variable. \\
    $x, y$ & Spatial variables in PDE contexts. Also used to denote the features and labels of SR algorithms, where $x$ can denote multi-dimensional features. \\
    $u, \mathbf u$ & Dependent variable(s) of a PDE: $u \in \mathbb R$ and $\mathbf u \in \mathbb R^q$. \\
    $u^{(n)}, \mathbf u^{(n)}$ & The collection of all up to $n$-th order partial derivatives of $u$ or $\mathbf u$. \\
    $df$ & The (ordinary) differential of a function. For a differential function $f: M^{(n)} \to \mathbb R$, $df = \sum_j\pd{f}{x^j}dx^j + \sum_\alpha\pd{f}{u_\alpha}du_\alpha$. \\
    $D_if$ & The total derivative of a differential function $f: M^{(n)} \to \mathbb R$ w.r.t the $i$th independent variable. For example, if $p=q=1$, $D_1 f = \pd{f}{x} + \sum_{k=0}^{\infty} u_{k+1}\pd{f}{u_k}$, where $u_k\coloneqq \partial^ku / \partial x^k$. \\
    $Df$ & The total differential of a differential function $f: M^{(n)} \to \mathbb R$, i.e. $Df = D_if\ dx^i$. \\
    \hhline{=|=}
    % $G$ & An abstract symmetry group. \\
    $g$ & A group element with an action on $E$ \eqref{eq:point-transform}. \\
    $\mathbf v$ & A vector field on the total space $E$ \eqref{eq:vf}, representing an infinitesimal transformation. A list of multiple vector fields are indexed by subscripts. \\
    $\mathrm{pr}^{(n)} g$ & $n$th-order prolongation of $g$ acting on $M^{(n)}$. \\
    $\mathrm{pr}^{(n)} \mathbf v$ & $n$th-order prolongation of $\mathbf v$ acting on $M^{(n)}$. \\
    $g^{(n)}, \mathbf v^{(n)}$ & Equivalent to $\mathrm{pr}^{(n)} g$ and $\mathrm{pr}^{(n)} \mathbf v$, respectively. \\
    $\mathrm{pr}\ \mathbf v$ & The (infinite) prolongation of $\mathbf v$. For an $n$th-order differential function $f(\mathbf x, \mathbf u^{(n)})$, $\mathrm{pr}\ \mathbf v(f) = \mathrm{pr}^{(n)}\mathbf v (f)$. \\
    $\eta, \zeta,\vartheta$ & Differential invariants of a symmetry group. $\eta$ is used by default. The other letters are used to distinguish between invariants of different orders. \\
    \hhline{=|=}
    $\ell, \boldsymbol\ell$ & The LHS of SINDy equation \eqref{eq:sindy}. Often assumed to be time derivatives. \\
    $\boldsymbol\theta$ & A column vector containing all SINDy library functions: $\boldsymbol\theta = [\theta^1, \cdots, \theta^m]$ \\
    $\mathbf w, W$ & The SINDy parameters. For only one equation, $\mathbf w = [w^1, \cdots, w^m]$ is a row vector. For multiple equations, $W = [w^{ij}]$ is a $q \times m$ matrix. \\
    $\mathbf X, \mathbf y$ & Concatenated matrix/vector of features/labels of all datapoints for symbolic regression. \\
    $[N]$ & List of positive integers up to $N$, i.e. $[1, 2, \cdots, N]$ for any $N \in \mathbb Z^+$. \\
    $1:N$ & Equivalent to $[N]$. \\
    LHS, RHS & Left- and Right-hand side of an equation. \\
    \bottomrule
    \end{tabular}
    
    \label{tab:symbols}
\end{table}

\subsection{Extended Background on PDE Symmetry}
\label{sec:appd-background}

References for the below material include \cite{olver1993applications}, \cite{olver1995equivalence}.
\paragraph{Prolonged group actions} Let $ E = X \times U \simeq \mathbb{R}^p \times \mathbb{R}^q$ be endowed with the action of a group $G$ via point transformations. Then group elements $g \in G$ act locally on functions $\mathbf u = f(\mathbf x)$, therefore also on derivatives of these functions. This in turn induces, at least pointwise, ``prolonged" transformations on jet spaces: $(\tilde{\mathbf x}, \tilde{\mathbf u}^{(n)}) = \mathrm{pr}^{(n)}g \cdot (\mathbf x, \mathbf u^{(n)})$.

Let $J = (j_1, \dots, j_n), 1 \leq j_\nu \leq p$ be an $n$-tuple of indices of independent variables and $1 \leq \alpha \leq q$. We will use the shorthand 
\[
u^\alpha_J := \frac{\partial ^J u^\alpha}{\partial x ^J} := \frac{\partial^{|J|} u^\alpha}{\partial x_{j_1}\cdots\partial x_{j_n}}
\] and
\[
D_J := D_{j_1}\cdots D_{j_n}.
\]

It is not practical to work explicitly with prolonged group transformations. Therefore one linearizes and considers the prolonged action of the infinitesimal generators of $G$. Explicitly, given a vector field
\[
\mathbf v = \sum_{i=1}^p \xi^i(\mathbf x, \mathbf u) \frac{\partial}{\partial x ^i} + \sum_{\alpha=1}^q\varphi^\alpha(\mathbf x, \mathbf u)\frac{\partial}{\partial u^\alpha},
\] 
its \textbf{characteristic} is a $q$-tuple $Q = (Q^1, \dots, Q^q)$ of functions with 
\[
Q^\alpha(\mathbf x, \mathbf u^{(1)}) = \varphi^\alpha(\mathbf x, \mathbf u) - \sum_{i=1}^p \xi^i(\mathbf x, \mathbf u) \frac{\partial u^\alpha}{\partial x ^i}.
\]
Now the prolongation of $\mathbf v$ to order $n$ is defined by
\begin{equation}
\mathrm{pr}^{(n)}\mathbf v = \sum_{i=1}^p \xi^i(\mathbf x, \mathbf u) \frac{\partial}{\partial x ^i} + \sum_{\alpha=1}^q \sum_{\#J = n}\varphi_J^\alpha(\mathbf x, \mathbf u^{(n)})\frac{\partial}{\partial u^\alpha_J}.
\label{eq:prolongation-formula}
\end{equation}
Here $J$ ranges over all $n$-tuples $J = (j_1, \dots, j_n), 1 \leq j_\nu \leq p$ and the $\varphi^\alpha_J$ are given by
\[
\varphi^\alpha_J = D_JQ^\alpha + \sum_{i=1}^p \xi^i \mathbf u^\alpha
_{J,i}.\]

We remark that the prolongation of $\mathbf v$ has been described explicitly in terms of the coefficients of $\mathbf v$ and their derivatives.

\begin{figure}[h]
    \centering
    \begin{subfigure}{.45\textwidth}
        \includegraphics[width=\textwidth]{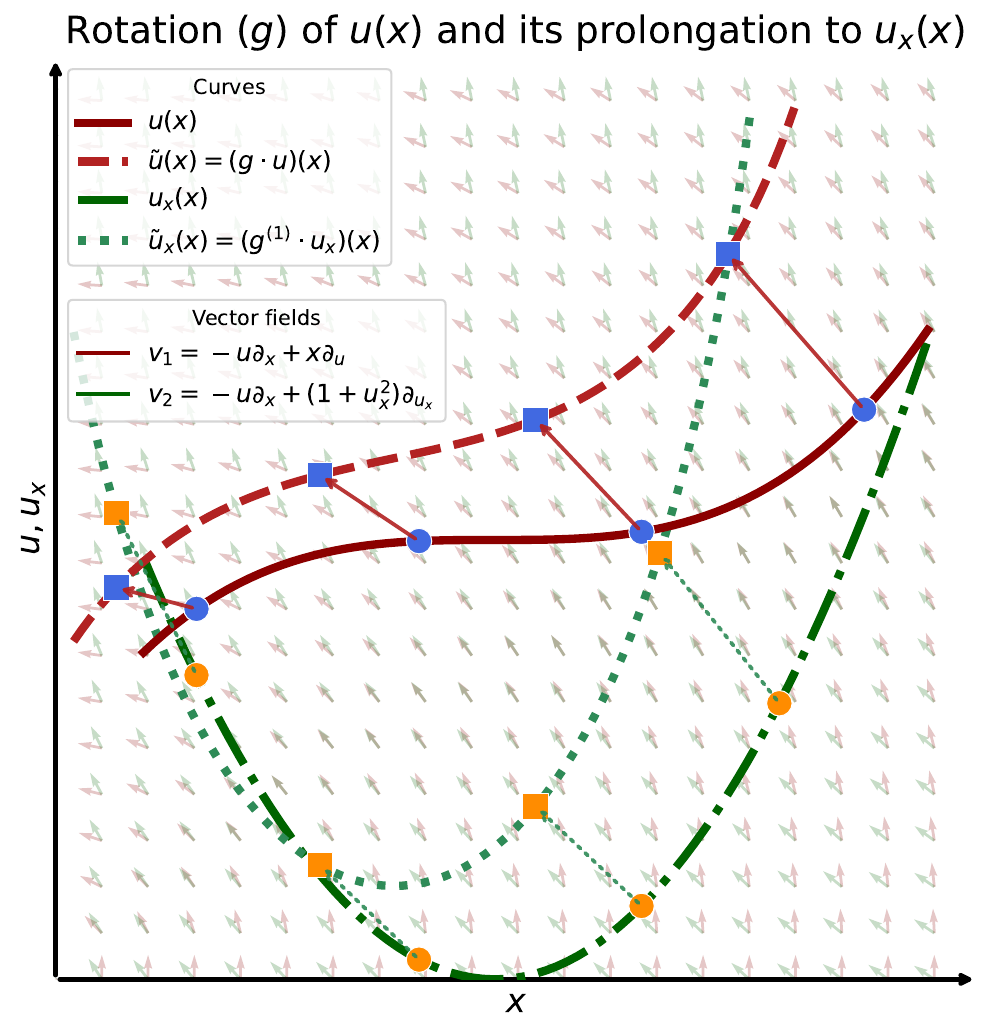}
    \end{subfigure}
    \begin{subfigure}{.45\textwidth}
        \includegraphics[width=\textwidth]{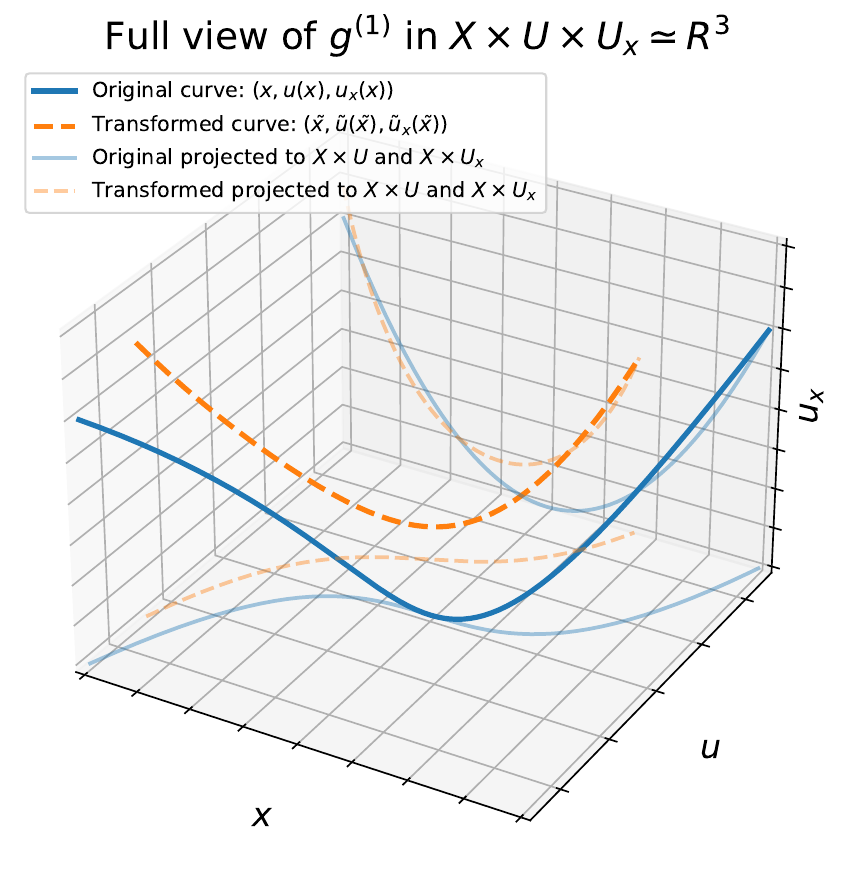}
    \end{subfigure}
    \caption{Demonstration of the rotation $\mathbf v = -u\partial_x + x\partial_u$ acting on $X \times U$, and its first-order prolongation acting on $X \times U \times U_1$.}
    \label{fig:prolongation-demo}
\end{figure}

\cref{fig:prolongation-demo} visualizes the group action of a Lie point transformation and its prolongation with a simple example. Consider the total space $X \times U \simeq \R \times \R$, and the standard rotation generator in 2D space given by $\mathbf v = -u\partial_x + x\partial_u$. The vector field is visualized in dark red arrows in the background. We also consider a function $X \to U$ given by $u(x) = 0.5(x-1)^3$, whose graph is visualized by the dark red solid line in \cref{fig:prolongation-demo} left. The graph of its first-order derivative, $u_x(x)=1.5(x-1)^2$, is visualized by the dark green dash-dot line.

Then, we choose a random group element $g=\exp (\theta \mathbf v)$ that rotates a 2D vector $(x,u) \in X \times U$ by angle $\theta$. Applying this pointwise transformation to every point on the graph of $u(x)$, we have a transformed graph visualized by the dark red dashed line. The transformed function, $\tilde u = g\cdot u$, is defined as the function whose graph is the transformed graph. In other words, $\tilde u(x) = (g \cdot u)(x)$ is visualized by the dark red dashed line in \cref{fig:prolongation-demo} left.

Next, we consider how the rotation of $(x, u)$ transforms the first-order derivative $u_x \coloneqq \frac{\mathrm du}{\mathrm dx}$. The prolonged vector field, i.e., the infinitesimal generator of the prolonged group action, can be computed by \eqref{eq:prolongation-formula}: $\mathbf v^{(1)} = \mathbf v + (1+u_x^2) \partial _ {u_x}$. The projection of $\mathbf v^{(1)}$ onto $X \times U_1$ is visualized in the dark green arrows in \cref{fig:prolongation-demo} left. Similarly, the prolonged group action $g^{(1)} = \exp(\theta \mathbf v^{(1)})$ is applied to every point on the graph of $u_x(x)$, yielding the graph of the transformed derivative function, $\tilde u_x(x) = \frac{\mathrm d \tilde u}{\mathrm dx}(x)$, visualized in the dark green dotted line.

The full transformation of the prolonged $g^{(1)}$ in the 3D space $X\times U \times U_1$ is shown on \cref{fig:prolongation-demo} right. The graph of the original prolonged function $u^{(1)}(x) = (u(x), u_x(x))$ is shown in the solid line, which is transformed into the dashed line by $g^{(1)}$.

\subsection{Proof of Proposition \ref{prop:higher-order-di-p>1}}
\label{sec:proof-prop4.3}

% TODO: add necessary definitions

\citet{olver1995equivalence} provides the following general theorem to construct higher-order differential invariants from a contact-invariant coframe. We refer the readers to Chapter 5 of \citet{olver1995equivalence} for definitions of relevant concepts, e.g., contact forms and contact-invariant forms and coframes.

\begin{theorem}[Thm. 5.48, \citep{olver1995equivalence}]
\label{thm:higher-order-di-fundamental}
    Let $G$ be a transformation group acting on a space with $p$ independent variables and $q$ dependent variables. Suppose $\omega^1, ..., \omega^p$ is a contact-invariant coframe for $G$, and let $\mathcal D_j$ be the associated invariant differential operators defined via $Df = D_jf\ dx^j = \mathcal D_j f \ \omega^j$. If there are a e number of independent, strictly $n$th-order differential invariants $\zeta^1,\cdots,\zeta^{q_n}$, $q_n=\binom{p+n-1}{n}$, then the set of differentiated invariants $\mathcal D_i \zeta^\nu$, $i \in [p]$, $\nu \in [q_n]$, contains a complete set of independent, strictly $(n+1)$th-order differential invariants.
\end{theorem}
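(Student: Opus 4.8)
The plan is to prove the three assertions bundled in the statement in sequence: that each differentiated invariant $\mathcal{D}_i\zeta^\nu$ is (i) a differential invariant, (ii) of order exactly $n+1$, and (iii) that the full collection contains a complete set at order $n+1$. First I would establish the engine behind (i), namely that invariant differentiation preserves invariance. For any differential invariant $\eta$, the total differential $D\eta$ is a horizontal one-form that is contact-invariant: since $(\mathrm{pr}\,g)^*\eta=\eta$ and a prolonged point transformation preserves the contact ideal, applying $(\mathrm{pr}\,g)^*$ to $d\eta$ and discarding contact terms gives $(\mathrm{pr}\,g)^*(D\eta)\equiv D\eta$. Expanding in the coframe via the defining relation yields $D\eta=(\mathcal{D}_j\eta)\,\omega^j$; pulling back and matching coefficients, using that each $\omega^j$ is itself contact-invariant and that the $\omega^j$ form a basis, forces $\mathcal{D}_j\eta\circ\mathrm{pr}\,g=\mathcal{D}_j\eta$, so $\mathcal{D}_j\eta$ is again invariant. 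This is precisely the correction that invariant differentiation supplies over the ordinary total derivative, for which the standard commutation identity $[\mathrm{pr}\,\mathbf{v},D_i]=-(D_i\xi^j)\,D_j\neq 0$ shows $D_i\zeta^\nu$ need not be invariant. Applying the result to each $\zeta^\nu$ gives (i).

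Next I would treat (ii) and (iii) together through the leading symbols. Writing the coframe as $\omega^j=W^j_k\,dx^k$ modulo contact forms with $W$ of fixed order, the defining relation inverts to $\mathcal{D}_i=(W^{-1})^k_i D_k$; since $D_k$ raises order by one and $W^{-1}$ has order below $n$ once $n$ is large, $\mathcal{D}_i\zeta^\nu$ has order at most $n+1$, with strictly-$(n+1)$th-order part $(W^{-1})^k_i\sum_{|J|=n}(\partial\zeta^\nu/\partial u_J)\,u_{J,k}$, which is generically nonzero, establishing (ii). For completeness, the key fact is that $\{\zeta^\nu\}$ being a maximal set of strictly-$n$th-order invariants makes its order-$n$ Jacobian $[\partial\zeta^\nu/\partial u_J]_{\nu,\,|J|=n}$ an invertible $q_n\times q_n$ matrix. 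Hence, for each direction $k$, the symbols of $\{D_k\zeta^\nu\}_\nu$ span, modulo lower order, the coordinates $\{u_{J,k}:|J|=n\}$; since every $(n+1)$-multi-index is some $J$ with one extra index, ranging over all $k\in[p]$ shows the differentiated invariants determine modulo lower order all strictly-$(n+1)$th-order coordinates. By the stabilization of the prolonged action at high order --- past which each new prolongation adds $\dim U_{n+1}=q_{n+1}=\binom{p+n}{n+1}$ fresh invariants and no orbit dimension --- the $\{\mathcal{D}_i\zeta^\nu\}$ therefore separate generic orbits at order $n+1$, so a functionally independent subset of $q_{n+1}$ of them is a complete set. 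The count $p\,q_n\geq q_{n+1}$, which reduces to $p\geq 1$, makes the inevitable redundancy (the syzygies among differentiated invariants) explicit.

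The main obstacle I anticipate is the completeness step (iii). The symbol-spanning argument must be made precise with the correct notion of functional independence modulo lower-order invariants, and it rests on the stabilization theorem for prolonged group actions, which is what guarantees both the invariant count $q_{n+1}$ and the regularity of the action at high order that lets ``determining all $(n+1)$th-order coordinates'' translate into ``complete set of invariants.'' By comparison, the invariance-preservation lemma, though conceptually the heart of invariant differentiation, is routine once the contact-invariance of the coframe is granted, and the order count (ii) is a direct symbol computation.
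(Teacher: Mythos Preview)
The paper does not prove this theorem. It is quoted verbatim as Theorem~5.48 from \cite{olver1995equivalence} and used as a black box in the proof of Proposition~\ref{prop:higher-order-di-p>1}; the paper supplies no argument for it beyond the citation. So there is no ``paper's own proof'' to compare your proposal against.

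That said, part~(i) of your plan --- the lemma that invariant differentiation preserves invariance --- does appear in the paper, just not as a proof of this theorem. In the proof of Proposition~\ref{prop:higher-order-di-p>1}, the paper shows that the total differential $D\eta$ of a differential invariant $\eta$ decomposes as $d\eta+\theta$ with $\theta$ a contact form, and that prolonged group actions fix $d\eta$ and preserve the contact ideal, so $D\eta$ is contact-invariant. This is exactly the engine you invoke for~(i), and the paper uses it to verify that the one-forms $D\eta^1,\ldots,D\eta^p$ form a contact-invariant coframe, at which point it appeals to the cited theorem rather than proving the order and completeness claims~(ii)--(iii) itself. Your symbol-level argument for those steps is the standard route in Olver's treatment, and your identification of the stabilization theorem as the real load-bearing input for~(iii) is correct; the paper simply does not engage with that part.
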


Specifically, the condition that there exist a maximal number of differential invariants of order exactly $n$ is guaranteed if $n$ is at least $\mathrm{dim} G$.

Our proposition is a derived result from the above theorem, which provides a concrete way of computation from lower-order invariants to higher-order ones:

\begin{proposition}
\label{prop:higher-order-di-p>1-restate}
Let $G$ be a local group acting on $X \times U \simeq \mathbb R^p \times \mathbb R$. Let $\eta^1,\eta^2,\cdots,\eta^p$ be any $p$ differential invariants of $G$ whose horizontal Jacobian $J = [D_i\eta^j]$ is non-degenerate on an open subset $\Omega \subset M^{(n)}$. If there are a maximal number of independent, strictly $n$th-order differential invariants $\zeta^1,\cdots,\zeta^{q_n}$, $q_n=\binom{p+n-1}{n}$, then the following set contains a complete set of independent, strictly $(n+1)$th-order differential invariants defined on $\Omega$:
\begin{equation}
\frac{\mathrm{det}(D_i\tilde \eta^j_{(k,k')})}{\mathrm{det}(D_i \eta^j)},\ \forall k\in [p], k' \in [q_n],\label{eq:higher-order-di-p>1-restate}
\end{equation}
where $i, j\in [p]$ are matrix indices, $D_i$ denotes the total derivative w.r.t $i$-th independent variable and $\tilde \eta^j_{(k,k')} = [\eta^1, ..., \eta^{k-1}, \zeta^{k'}, \eta^{k+1}, ..., \eta^p]$.
\end{proposition}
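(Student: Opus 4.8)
The plan is to reduce the claim to Theorem~\ref{thm:higher-order-di-fundamental} by building an explicit contact-invariant coframe out of the given invariants $\eta^1,\dots,\eta^p$, and then to recognize the resulting differentiated invariants as the determinantal ratios appearing in \eqref{eq:higher-order-di-p>1-restate} via Cramer's rule. Throughout, one works on the open set $\Omega\subset M^{(n)}$ on which the horizontal Jacobian $J=[D_i\eta^j]$ is invertible.

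First I would set $\omega^j\coloneqq D\eta^j = D_i\eta^j\,dx^i$ for $j\in[p]$ and check that $\{\omega^1,\dots,\omega^p\}$ is a contact-invariant coframe on $\Omega$. That the $\omega^j$ are linearly independent modulo contact forms is immediate from the non-degeneracy of $J$, since they are obtained from the horizontal forms $dx^1,\dots,dx^p$ by the invertible matrix $J$. For contact-invariance, decompose $d\eta^j = D\eta^j + (\text{contact part})$ and use that (i) $\eta^j$ is a differential invariant, so $(\mathrm{pr}\,g)^*\eta^j=\eta^j$ and hence $(\mathrm{pr}\,g)^*(d\eta^j)=d\eta^j$, and (ii) pullback by a prolonged transformation preserves the contact ideal; projecting onto the horizontal component then yields $(\mathrm{pr}\,g)^*\omega^j\equiv\omega^j$ modulo contact forms. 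I expect this verification to be the step requiring the most care, as it rests on the bigraded splitting of forms on the jet bundle and the precise behavior of prolonged pullbacks on contact forms; I would carry it out following Chapter~5 of \cite{olver1995equivalence}. (In our applications the $\eta^j$ have order at most $n$, so each $\omega^j$ has order at most $n+1$.)

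Next I would compute the associated invariant differential operators $\mathcal D_j$, defined by $Df = D_if\,dx^i = \mathcal D_jf\,\omega^j$. Inverting $\omega^j = D_i\eta^j\,dx^i$ gives $dx^i=(J^{-1})_{ji}\,\omega^j$, whence $\mathcal D_jf=(J^{-1})_{ji}\,D_if$. Applying Theorem~\ref{thm:higher-order-di-fundamental} to this coframe, together with the hypothesis that a maximal number $q_n=\binom{p+n-1}{n}$ of independent strictly $n$th-order invariants $\zeta^1,\dots,\zeta^{q_n}$ exists, shows that the set $\{\,\mathcal D_k\zeta^{k'} : k\in[p],\ k'\in[q_n]\,\}$ contains a complete set of independent, strictly $(n+1)$th-order differential invariants on $\Omega$.

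Finally I would identify each $\mathcal D_k\zeta^{k'}$ with the corresponding expression in \eqref{eq:higher-order-di-p>1-restate}. Put $\mathbf b=(D_i\zeta^{k'})_{i\in[p]}$ and let $\mathbf c=J^{-1}\mathbf b$, i.e.\ $\mathbf c$ solves $J\mathbf c=\mathbf b$; then $c_k=(J^{-1})_{ki}D_i\zeta^{k'}=\mathcal D_k\zeta^{k'}$. Cramer's rule gives $c_k=\det(J_{(k)})/\det(J)$, where $J_{(k)}$ is $J$ with its $k$th column replaced by $\mathbf b$. But, since the $j$th column of $J$ is $(D_i\eta^j)_i$ and the list $\tilde\eta^j_{(k,k')}$ agrees with $(\eta^1,\dots,\eta^p)$ except that $\eta^k$ is replaced by $\zeta^{k'}$ in position $k$, that substitution produces exactly the matrix $[D_i\tilde\eta^j_{(k,k')}]$. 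Hence $\mathcal D_k\zeta^{k'}=\det(D_i\tilde\eta^j_{(k,k')})/\det(D_i\eta^j)$, and the proposition follows from the previous step. The remaining checks --- that $\det(D_i\eta^j)\neq 0$ and all the expressions are smooth on $\Omega$ (both assumed), and that the clause ``contains a complete set'' carries over verbatim from Theorem~\ref{thm:higher-order-di-fundamental} --- are routine.
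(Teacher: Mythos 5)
Your proposal is correct and follows essentially the same route as the paper's proof: build the contact-invariant coframe $\omega^j = D\eta^j$ from the given invariants (using the splitting of $D\eta^j$ into the invariant ordinary differential plus a contact form and the fact that prolonged pullbacks preserve the contact ideal), invoke Theorem~\ref{thm:higher-order-di-fundamental}, and recover the determinantal formula for $\mathcal D_k\zeta^{k'}$ by Cramer's rule applied to $J=[D_i\eta^j]$. The only cosmetic difference is the direction of your decomposition ($d\eta^j = D\eta^j + \text{contact}$ versus the paper's $D\eta^j = d\eta^j + \text{contact}$), which is immaterial.
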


\begin{proof}

We show that the total differentials of the differential invariants $\eta^1, ..., \eta^p$ can be used to construct a \textit{contact-invariant coframe} of $G$ and then derive the associated invariant differential operators to complete the proof.

First, note that for any differential invariant $\eta$ of $G$, its total differential $\omega = D\eta = D_j\eta \ dx^j$ can be written as
\begin{equation}
\omega = \omega_o + \theta, \label{eq:total-diff-decompose}
\end{equation}
where $\omega_o \coloneqq d\eta = \sum_{i \in [p]} \pd{F}{x^i}dx^i + \sum_{|\alpha| \leq n}\pd{F}{u_\alpha}du_\alpha$ is the ordinary differential of $\eta: M^{(n)} \to \mathbb R$ and $\theta$ is a contact form.

Since $\eta$ is a differential invariant, its differential $\omega_o =d\eta$ is an invariant one-form on $M^{(n)}$, i.e. $(g^{(n)})^*\omega_o = \omega_o$.

Also, a prolonged group action maps contact forms to contact forms. To see this, note that a prolonged group action $g^{(n)}$ maps the prolonged graph of any function to the prolonged graph of a transformed function. Then, for any contact form $\theta$, $(g^{(n)})^*\theta$ is annihilated by all prolonged functions $f^{(n)}$, thus a contact form by definition:
\begin{align}
    (f^{(n)})^*((g^{(n)})^*\theta) &= ( g^{(n)} \circ f^{(n)})^* \theta \cr
    &= ( (g\cdot f)^{(n)})^* \theta \cr 
    &= 0.
\end{align}

Then, from \eqref{eq:total-diff-decompose}, we have
\begin{align}
    (g^{(n+1)})^*\omega &= (g^{(n)})^* \omega_o + (g^{(n+1)})^*\theta \cr
    &= \omega_o + \theta' \cr
    &= \omega + (\theta'-\theta)
\end{align}
where $\theta'$ is some contact form and so is $\theta'-\theta$. Thus, $\omega$ is contact-invariant. For the $p$ differential invariants $\eta^1, \cdots, \eta^p$, we have $p$ contact-invariant one-forms $\omega^1, \cdots, \omega^p$, respectively.

Next, we prove that $\omega^1, \cdots, \omega^p$ are linearly independent and form a coframe. Assume there exists smooth coefficients $c^j$ such that $\sum_j c^j \omega^j = 0$. Then, regrouping the coefficients of the horizontal forms $dx^i$, we have
\begin{equation}
    0 = \sum_{i,j} c^j D_i \eta^jdx^i = \sum_i \left( \sum_j c^j D_i \eta^j \right) dx^i.
\end{equation}

Because the $dx^i$ are linearly independent, each coefficient of $dx^i$ must vanish, i.e. $\T{J}{_i^j}c^j = 0$. Since the Jacobian $J = [D_i \eta^j]$ is non-degenerate, the only solution is $c^j = 0$ (on the open subset $\Omega \in M^{(n)}$). Thus, $\omega^1, \cdots, \omega^p$ form a contact-invariant coframe. According to \cref{thm:higher-order-di-fundamental}, the associated invariant differential operators of the coframe take a complete set of same-order invariants to a complete set of one-order-higher invariants.

The remaining step is to obtain the invariant differential operators explicitly in terms of $\eta^j$. Recall the formula in \cref{thm:higher-order-di-fundamental} that defines the invariant differential operators:
\begin{equation}
    D_i f\ dx^i = \mathcal D_j f \ \omega^j.
\end{equation}

Expanding $\omega^j = D\eta^j = D_i\eta^j\ dx^i$, we have the following linear system of invariant differential operators $\mathcal D_j$:
\begin{equation}
    \begin{bmatrix}
        D_1 \\ D_2 \\ \vdots \\ D_p
    \end{bmatrix}
    =
    \begin{bmatrix}
        D_1\eta^1 & D_1\eta^2 & \cdots & D_1\eta^p \\
        D_2\eta^1 & D_2\eta^2 & \cdots & D_2\eta^p \\
        \vdots & \vdots & & \vdots \\
        D_p\eta^1 & D_p\eta^2 & \cdots & D_p\eta^p
    \end{bmatrix}
    \begin{bmatrix}
        \mathcal D_1 \\ \mathcal D_2 \\ \vdots \\ \mathcal D_p
    \end{bmatrix}.
\end{equation}

Since $J = [D_i\eta^j]$ is non-degenerate, Cramer's rule yields

\begin{equation}
    \mathcal D_k \zeta = \frac{\mathrm{det}(D_i\eta^1\ |\ \cdots \ |\ D_i \eta^{k-1}\ |\ D_i \zeta\ |\ D_i\eta^{k+1}\ |\ \cdots\ |\ D_i\eta^p)}{\mathrm{det}(D_i\eta^j)}.
\end{equation}

\end{proof}

\begin{remark}

We require that the differential invariants $\eta^1, \cdots, \eta^p$ has a nondegenerate horizontal Jacobian $[D_i\eta^j]$, which is a stronger condition than functional independence. Since the differential invariants are functions on the jet space, it is possible that a set of such functions is functionally independent, i.e., has a nondegenerate full Jacobian $[\partial_{i}\eta^j]$, where $i \in [q_n]$ indexes the jet space variables $(\mathbf x, u^{(n)})$, but has a lower-rank horizontal Jacobian. For example, consider $\eta^1 = u_x$ and $\eta^2 = u_y$. In the full Jacobian, $\partial\eta^j/\partial u_x$ and $\partial\eta^j/\partial u_y$ form the identity, so it has full rank. However, its horizontal Jacobian containing total derivatives is given by $\begin{bmatrix}
    u_{xx} & u_{xy} \\ u_{xy} & u_{yy}
\end{bmatrix}$, which is not invertible on the subset of the jet space where $u_{xx}u_{yy}-u_{xy}^2 = 0$.

In practice, this non-degeneracy condition can be easily checked once we have the symbolic expressions of the $p$ differential invariants.
    
\end{remark}

\begin{remark}

When $p=1$, Proposition \ref{prop:higher-order-di-p>1-restate} is equivalent to the following (Prop. 2.53, \citet{olver1993applications}):

If $y = \eta (x, u^{(n)})$ and $w = \zeta (x, u^{(n)})$ are $n$-th order differential invariants of $G$, then $\frac{dw}{dy}\equiv\frac{D_x \zeta}{D_x \eta}$ is an $(n+1)$-th order differential invariant of $G$.
Specifically, if $y = \eta (x, u)$ and $w = \zeta (x, u, u_x)$ form a complete set of functionally independent differential invariants of $\mathrm{pr}^{(1)}G$, the complete set of functionally independent differential invariants for $\mathrm{pr}^{(n)}G$ is then given by
\label{prop:higher-order-di-p=1}
\begin{equation}
    y, w, dw/dy, ..., d^{n-1}w/dy^{n-1}. \label{eq:higher-order-di-p=1}
\end{equation}
    
\end{remark}

\subsection{Examples of Computing Differential Invariants}
\label{sec:appd-di-examples}
\begin{example}
\label{ex:so2-ext}
Consider the group $\mathrm{SO}(2)$ acting on $X \times U \simeq \mathbb R^2 \times \mathbb R$ by standard rotation in the 2D space of independent variable and trivial action on $U$, i.e. its infinitesimal generator given by $\mathbf v = y \partial_x - x \partial_y$.

First, we solve for a complete set of the ordinary and first-order invariants.
By definition, the ordinary invariants $\eta = \eta(x,y,u)$ should satisfy $y \partial_x\eta - x\partial_y\eta=0$. Since the vector field does not involve $u$, an immediate solution is $\eta=u$. On the othe hand, by method of characteristics, we convert the PDE to the characteristic equations $dx/ds = y, dy/ds = -x$. That is, the characteristics curves $(x(s),y(s))$ are just circles around origin. Because $\eta$ is constant along characteristic curves, it must be a function of $R^2 = x^2+y^2$. Therefore,
we pick the following two ordinary invariants: $\eta_1 (x, y, u) = \frac{1}{2} (x^2 + y^2)$ and $\eta_2 (x, y, u) = u$. \eqref{eq:higher-order-di-p>1} dictates how we construct higher-order invariants using these two functionally independent invariants and another arbitrary invariant. For notational convenience, we convert \eqref{eq:higher-order-di-p>1} to operators defined according to $\eta_2$ and $\eta_1$, respectively:
\begin{align}
    \mathcal O_1 &= \frac{xD_y - yD_x}{xu_y - yu_x} \\
    \mathcal O_2 &= \frac{u_yD_x - u_xD_y}{xu_y - yu_x}
\end{align}
Then, we need to find another new differential invariant, because applying these operators on $\eta_1$ and $\eta_2$ leads to trivial results. Since $\eta_1$ and $\eta_2$ generate all ordinary (zeroth-order) invariants, we must look for the first-order invariants. To do this, note the prolonged vector field is given by
\begin{equation}
    \mathrm{pr}^{(1)}\mathbf v = \mathbf v + u_y \partial_{u_x} - u_x \partial_{u_y}
\end{equation}
Solving for $\mathrm{pr}^{(1)}\mathbf v$ gives two first-order invariants, $\zeta_1 = xu_y - yu_x$ and $\zeta_2 = xu_x + yu_y$. Note that the differential invariant $\zeta_1$ is exactly the common denominator in $\mathcal O_1$ and $\mathcal O_2$, so we can simplify $\mathcal O_1$ and $\mathcal O_2$ by using only their numerators, i.e.
\begin{align}
    \mathcal O_1 &= xD_y - yD_x \\
    \mathcal O_2 &= u_yD_x - u_xD_y \label{eq:so2-complex-inv-diff-op}
\end{align}

Note that $\mathcal O_2$ has first-order coefficients, which may complicate things in the subsequent calculation. Denoting the space of all continuous functions of the existing four invariants as $\mathcal I = \mathcal C(\eta_1, \eta_2, \zeta_1, \zeta_2)$, we can choose any new operator within the $\mathcal I$-module spanned by $\mathcal O_1$ and $\mathcal O_2$ that makes things easier. Specifically, we use the following operator
\begin{align}
    \tilde {\mathcal O}_2 &= \frac{\zeta_2}{\zeta_1} \mathcal O_1 + \frac{2\eta_1}{\zeta_1} \mathcal O_2 \cr
    &= xD_x + yD_y \label{eq:so2-scaling-inv-diff-op}
\end{align}

Then, we apply these operators to the first-order invariants, which raise the order by one and give us the second-order invariants. For example, applying $O_1$ to $\zeta_1$, we have
\begin{align}
    \mathcal O_1 \zeta_1 &= xD_y\zeta_1 - yD_x\zeta_1 \cr
    &= x(xu_{yy} - u_x - yu_{xy}) - y(u_y + xu_{xy} - yu_{xx}) \cr
    &= x^2u_{yy} + y^2u_{xx} - xu_x - yu_y - 2xy u_{xy}
\end{align}

Note that $\zeta_2 = xu_x + yu_y$ is a first-order invariant, so we can further remove it from the formula and get a simplified second-order invariant
\begin{equation}
    \vartheta_1 = x^2u_{yy} + y^2u_{xx} - 2xyu_{xy}
\end{equation}

Similarly, we compute $\mathcal O_1\zeta_2$, $\tilde{\mathcal O}_2\zeta_1$ and $\tilde{\mathcal O}_2\zeta_2$ and obtain the following, respectively:
\begin{align}
    \vartheta_2 = \vartheta_3 &= \zeta_1 + xy(u_{yy} - u_{xx}) + (x^2 - y^2) u_{xy} \cr
    &\equiv xy(u_{yy} - u_{xx}) + (x^2 - y^2) u_{xy} \\
    \vartheta_4 &= \zeta_2 + x^2u_{xx} + y^2u_{yy} + 2xyu_{xy} \cr
    &\equiv x^2u_{xx} + y^2u_{yy} + 2xyu_{xy}
\end{align}

The above 8 invariants should form a complete set of second-order differential invariants of $\mathbf v = x\partial_y - y\partial_x$. To verify, note that the Laplacian $\Delta u = u_{xx} + u_{yy}$, which is a well-known rotational invariant, can be written in terms of these differential functions:
\begin{align}
    \Delta u = u_{xx} + u_{yy} &= \frac{(x^2+y^2) (u_{xx} + u_{yy})}{x^2 + y^2} \cr
    &= \frac{\vartheta_1 + \vartheta_4}{2\eta_1}
\end{align}

Another second-order rotational invariant, the trace of the squared Hessian matrix, $u_{xx}^2 + 2u_{xy}^2 + u_{yy}^2$, is recovered by
\begin{equation}
    u_{xx}^2 + 2u_{xy}^2 + u_{yy}^2 = \frac{\vartheta_1^2 + 2\vartheta_2^2 + \vartheta_4^2}{4\eta_1^2}
\end{equation}

On the other hand, these 8 invariants are apparently not functionally independent - note that $\vartheta_2 = \mathcal O_1 \zeta_2$ and $\vartheta_3 = \tilde{\mathcal O}_2 \zeta_1$ are the same. While this may be some coincidence, eventually it is not surprising because we would expect to see 3 functionally independent strictly second-order differential invariants instead of 4, since $(u_{xx}, u_{yy}, u_{xy}) \in U_2$ is only 3-dimensional.
% More generally, given all the necessary non-degeneracy conditions, one would expect $\binom{p+n}{n}$ strictly $n$-th order differential invariants for a one-parameter group for $n\geq 1$, while the recursion formula \eqref{eq:higher-order-di-p>1} gives $p^n$ such invariants. This would not be a severe problem, though, since we can always simplify the output expression of symbolic regression.
\end{example}

\begin{example}[Scaling and translation]\label{ex:scale_translate}
Consider the vector field $\mathbf v_1 = t\partial_t + ax\partial_x + bu\partial_u$. It generates the scaling symmetry $t \mapsto \lambda t, x \mapsto \lambda^a x, u \mapsto \lambda^b u$. The ordinary invariants of this symmetry are $t^bu^{-1}$ and $x^au^{-1}$.
The higher-order invariants are given by
$
    \eta_{(\alpha,\beta)} = x^\alpha t^\beta u_{x^{(\alpha)}t^{(\beta)}}u^{-1}
$,
where $\alpha$ and $\beta$ denote the orders of partial derivatives w.r.t $t$ and $x$, e.g. $u_{x^{(2)}t^{(1)}} \coloneqq u_{xxt}$.

Besides the scaling symmetry, we can consider other common symmetries simultaneously, e.g. translation symmetries in both space and time, $\mathbf v_2 = \partial_x$ and $\mathbf v_3 = \partial_t$. These symmetries, along with the scaling symmetry $\mathbf v_1$, span a three-dimensional symmetry group. There are no ordinary invariants due to the translation symmetries. A convenient maximal set of functionally independent differential invariants is given by
\begin{equation}
    \eta_{(\alpha, \beta)} = u_{x^{(\alpha)}t^{(\beta)}}u_x^{\frac{b - a\alpha - \beta}{a-b}},\ \alpha \geq 0, \beta \geq 0.
    \label{eq:invariants-scaling-translation}
\end{equation}

% Alternatively, this complete set of differential invariants can be re-expressed by some fundamental differential invariants $\{I_k\}$, and a set of invariant differential operators $\{\mathcal O_j\}$ s.t. every differential invariant can be written as the fundamental invariants and the invariant operators: $\mathcal O_{j_1}...\mathcal O_{j_n} I_k$. In this specific example, a possible (but non-unique) choice is

% \begin{align}
%     I = I_{(0,1)} &= \frac{u_t}{u_x^{(b-1)/(b-a)}} \cr
%     \mathcal O_1 u &= u_x^{1/(b-a)} D_t \cr
%     \mathcal O_2 &= u_x^{a/(b-a)} D_x \label{eq:scale-translate-operators}
% \end{align}

\end{example}

\subsection{Proof of Proposition \ref{prop:sindy-linear-constraint}}
\label{sec:appd-sindy-align}

Proposition \ref{prop:sindy-linear-constraint}, restated below, aligns our symmetry constraint into the SINDy framework and results in a set of constraints on the SINDy parameters.

\begin{proposition}

Let $\boldsymbol{\ell}(\mathbf x, \mathbf u^{(n)}) = W\boldsymbol{\theta}(\mathbf x, \mathbf u^{(n)})$ be a system of $q$ differential equations admitting a symmetry group $G$, where $\mathbf x \in \mathbb R^p$, $\mathbf u \in \mathbb R^q$, $\boldsymbol\theta \in \mathbb R^m$. Assume there exist some $n$th-order invariants of $G$, $\eta_0^{1:q} \text{ and } \eta^{1:K}$, s.t. (1) the system of equations can be expressed as $\boldsymbol \eta_0 = W'\boldsymbol{\theta}'(\boldsymbol\eta)$, where $\boldsymbol\eta_0 = [\eta_0^{1:q}]$ and $\boldsymbol \eta = [\eta^{1:K}]$, and (2) $\eta_0^i = T^{ijk}\theta^{k}\ell^j$ and $(\theta')^i = S^{ij}\theta^j$, for some functions $\boldsymbol \theta'(\boldsymbol\eta)$ and constant tensors $W'$, $T$ and $S$. Then, the space of all possible $W$ is a linear subspace of $\mathbb R^{q \times m}$.
\label{prop:sindy-linear-constraint-restate}

\end{proposition}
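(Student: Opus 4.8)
The plan is to rephrase both the SINDy form and the invariant form of the system as identities between functions on the jet space, and then recognize the admissible coefficient matrices $W$ as the preimage of a linear subspace under a linear map. I would first package condition~(2): write $A(\boldsymbol\theta)$ for the $q\times q$ matrix of library functions with entries $A^{ij}=T^{ijk}\theta^k$, so that (2) reads $\boldsymbol\eta_0 = A(\boldsymbol\theta)\,\boldsymbol\ell$ and $\boldsymbol\theta'(\boldsymbol\eta) = S\boldsymbol\theta$. Substituting the SINDy ansatz $\boldsymbol\ell = W\boldsymbol\theta$ into the first relation yields a $q$-tuple of functions
\[
\Phi(W) \;:=\; A(\boldsymbol\theta)\,W\,\boldsymbol\theta, \qquad \Phi(W)^i = T^{ijk}W^{jl}\,\theta^k\theta^l ,
\]
which is linear in $W$, while the right-hand side of~(1) is
\[
\Psi(W') \;:=\; W'\,S\,\boldsymbol\theta, \qquad \Psi(W')^i = (W')^{ij}S^{jl}\,\theta^l ,
\]
which is linear in $W'$. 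Both $\Phi$ and $\Psi$ take values in a common finite-dimensional vector space $V$ of functions (spanned by the $q$-tuples whose single nonzero entry is a product $\theta^k\theta^l$ or, if needed, a $\theta^l$), so I may treat them as linear maps into $V$.

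The core of the argument is the equivalence: the system $\boldsymbol\ell = W\boldsymbol\theta$ admits $G$ if and only if $\Phi(W)=\Psi(W')$ for some $W'$. For the ``if'' direction, observe that $A(\boldsymbol\theta)$ is invertible on an open dense subset in all settings of interest (for phase-space rotation, $A=\bigl(\begin{smallmatrix}u&v\\-v&u\end{smallmatrix}\bigr)$ is invertible off $\{u^2+v^2=0\}$); there, $\boldsymbol\ell=W\boldsymbol\theta$ is equivalent to $\boldsymbol\eta_0 = \Phi(W) = \Psi(W') = W'\boldsymbol\theta'(\boldsymbol\eta)$, an equation written purely in differential invariants, so the system admits $G$ by the ``if'' direction of \cref{thm:pde-symmetry-condition} (the symmetry property, being closed, extends from the dense subset to all of $M^{(n)}$). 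For the ``only if'' direction, a $G$-symmetric system is, by the ``only if'' direction of \cref{thm:pde-symmetry-condition}, equivalent to some $\tilde F(\eta^1,\dots,\eta^k)=0$; condition~(1), applied uniformly in $W$ as permitted by the discussion following the statement (e.g.\ $\boldsymbol\theta$ = all monomials up to degree $d$ and every invariant a polynomial up to degree $d$), sharpens this to the posited form $\boldsymbol\eta_0 = W'\boldsymbol\theta'(\boldsymbol\eta)$, which is exactly $\Phi(W)=\Psi(W')$.

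Given this equivalence, the set of admissible $W$ is precisely $\Phi^{-1}\!\bigl(\operatorname{Im}\Psi\bigr)$. Since $\Phi$ is linear and $\operatorname{Im}\Psi$ is a linear subspace of $V$, its preimage is a linear subspace of $\mathbb R^{q\times m}$, which is the assertion. A basis is obtained constructively: fixing a monomial basis of $V$ turns $\Phi$ and $\Psi$ into ordinary matrices, and solving the linear system $\Phi(W)=\Psi(W')$ for $(W,W')$ and reading off the $W$-block gives a basis of the subspace — this is the basis $Q$ used in \cref{sec:implementation}.

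The main obstacle I anticipate is the ``only if'' direction, namely ensuring that \emph{every} $G$-symmetric system of the form $\boldsymbol\ell = W\boldsymbol\theta$ — not just the single one fixed in the hypothesis — can be rewritten in the special shape $\boldsymbol\eta_0 = W'\boldsymbol\theta'(\boldsymbol\eta)$ using the \emph{same} tensors $T$ and $S$. This is exactly what the structural hypotheses on the chosen invariants buy us, and a careful write-up should spell out the closure property being used: that the products $\theta^k\theta^l$ appearing in $\Phi(W)$ span a space inside which both the invariant library $\boldsymbol\theta'(\boldsymbol\eta)=S\boldsymbol\theta$ and the admissibility constraint are expressible. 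The remaining ingredients — linearity of $\Phi$ and $\Psi$, and the preimage argument — are routine.
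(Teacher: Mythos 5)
Your proof is correct and follows essentially the same route as the paper: substitute the SINDy ansatz into the invariant relation, observe that the resulting identity is linear in $W$ and in $W'$ separately, and identify the admissible set as $\Phi^{-1}(\operatorname{Im}\Psi)$ --- the paper's explicit computation with the coordinate functional $M_{\tilde{\boldsymbol\theta}}$ on the extended library and the null-space condition involving $\tilde S^\perp$ is just a concrete realization of this same preimage-of-an-image argument. The full equivalence with $G$-symmetry that you attempt to establish (and whose ``only if'' direction you rightly flag as delicate) is neither needed for the stated conclusion nor attempted in the paper's proof, which simply treats the existence of a $W'$ satisfying conditions (1)--(2) as defining the admissible set of $W$.
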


\begin{proof}

(Note: In this proof, we do not distinguish between superscripts and subscripts. All are used for tensor indices, not partial derivatives.)

For simplicity, we omit the dependency of functions and write
\begin{equation}
    \ell^i = W^{ij}\theta^j.\label{eq:align-sindy}
\end{equation}

Combining the conditions about the differential invariants, we know that the equation can be equivalently expressed as
\begin{equation}
    T^{ijk}\theta^k\ell^j = (W')^{ij}S^{jk}\theta^k
    \label{eq:align-si}
\end{equation}
for some $W' \in \mathbb R^{q \times m'}$, where $m'$ is the number of invariant functions in $\boldsymbol\theta'$.

Substituting \eqref{eq:align-sindy} into \eqref{eq:align-si} and rearranging the indices, the principle of symmetry invariants then translates to the following constraint on $W$: there exists some $W' \in \mathbb R^{q\times m'}$ s.t.
\begin{equation}
    \T{T}{_i^r^k}\theta_k\T{W}{_r^l}\theta_l = (W'\T{)}{_i^k}\T{S}{_k^j}\theta_j, \forall \mathbf x, \mathbf u^{(n)}. \label{eq:aligned-sindy-si}
\end{equation}
% for $T, S$ depending on how the symmetry invariants are chosen.

To solve for $W$, we first eliminate the dependency on the variables $\mathbf x$ and $\mathbf u^{(n)}$ from the equation. We adopt a procedure similar to \citet{yang2024symmetry}. Denote $\mathbf z =(\mathbf x, \mathbf u^{(n)})$. Define a functional $M_{\boldsymbol \theta}$ as mapping a function to its coordinate in the function space spanned by ${\boldsymbol \theta}$, i.e. $M_{\boldsymbol \theta}: (\mathbf z \mapsto \T{c}{^j} \theta_j(\mathbf z)) \mapsto (c^1,c^2,\cdots, c^m)$. Before we proceed, note that the LHS of \eqref{eq:aligned-sindy-si} contains the products of functions $\theta_k(\mathbf z)\theta_l(\mathbf z)$, which may or may not be included in the original function library $\boldsymbol \theta$. Therefore, we denote $\tilde {\boldsymbol \theta}(\mathbf z) = [ {\boldsymbol \theta}(\mathbf z) \ || \ \{ \theta_k\theta_l \notin {\boldsymbol \theta} \} ]$ as the collection of all library functions $\theta_k$ and all their products $\theta_k\theta_l$. The invariant functions ${\boldsymbol \theta}'(\boldsymbol \eta)$ can also be rewritten in terms of the prolonged library: ${\boldsymbol \theta}'(\boldsymbol \eta) = \tilde S \tilde {\boldsymbol \theta}$, where $\tilde S_{1:m} = S$.

Then, applying $M_{\tilde {\boldsymbol \theta}}$ to \eqref{eq:aligned-sindy-si}, we have
\begin{equation}
    M_{\tilde {\boldsymbol \theta}}(\T{T}{_i^r^k}\theta_k\T{W}{_r^l}\theta_l) = (W'\T{)}{_i^k}\T{\tilde S}{_k^j}.
\end{equation}

Further expanding the LHS, we have
\begin{equation}
    \T{T}{_i^r^k} \T{W}{_r^l} \T{\Gamma}{_k_l^j} = (W'\T{)}{_i^k} \T {\tilde S}{_k^j},
\end{equation}
where $\Gamma$ satisfies $\theta_k\theta_l = \T{\Gamma}{_k_l^j} \tilde\theta_j$. In other words, the rows of the LHS fall in the row space of $\tilde S$. Let $\tilde S ^ \perp$ be the basis matrix for the null space of $\tilde S$, i.e. $\tilde S \tilde S ^ \perp = 0$, we have
\begin{equation}
    \T{T}{_i^r^k} \T{W}{_r^l} \T{\Gamma}{_k_l^j} (\tilde S^\perp\T{)}{_j_s} = 0,
    \label{eq:sindy-w-constraint}
\end{equation}
suggesting that $W$ must lie in a linear subspace of $\mathbb R^{q \times m}$.

% Thus, given proper choices of SINDy library functions and symmetry invariants, we can transform the rule that the equation only contains symmetry invariants to the linear constraints on the sparse regression coefficients $W$. The proper choice of ${\boldsymbol \theta}$ guarantees we can find the correct equation, while a good choice of symmetry invariants $\mathbf I$ ensures computational traceability.

\end{proof}

\begin{remark}

In practice, to solve for \eqref{eq:sindy-w-constraint}, we first rearrange \eqref{eq:sindy-w-constraint} into $M\mathrm{vec}(W) = 0$, where $M$ has shape $(\tilde S\text{.shape}[2]\times q,\ q\times m)$. Then, we perform SVD on $M$ and apply a threshold of $10^{-6}$ to the singular values. The right singular vectors corresponding to the singular values smaller than the threshold then form a basis of the linear subspace $\mathrm{vec}(W)$ lies in.

\end{remark}

\section{Implementation Details}
\label{sec:appd-implementation}

This section discusses some detailed considerations in implementing the sparse regression-based methods described in \cref{sec:implementation} and \ref{sec:relaxed-symmetry}. Contents include:
\begin{itemize}
    \item Appendix \ref{sec:appd-direct-sparse-regression}: An algorithmic description of direct sparse regression with symmetry invariants.
    \item Appendix \ref{sec:appd-linear-constraints}: Converting the symmetry invariant condition as linear constraints on the sparse regression parameters.
    \item Appendix \ref{sec:appd-wsindy}: Using differential invariants in weak SINDy via the linear constraints, as well as other considerations.
\end{itemize}

\subsection{Direct Sparse Regression With Symmetry Invariants}
\label{sec:appd-direct-sparse-regression}

The first approach to enforcing symmetry in sparse regression, as discussed in \cref{sec:implementation}, is to directly use the symmetry invariants as the variables and their functions specified by a function library as the RHS features. Similar to \cref{alg:general-sr} for general symbolic regression methods, we provide a detailed algorithm for sparse regression below. Following the setup from SINDy, we aim to discover a system of $q$ differential equations for $q$ dependent variables.

\begin{algorithm}%[H]

\caption{Sparse regression with symmetry invariants}
\label{alg:sindy-si-raw}

\begin{algorithmic}
    \REQUIRE PDE order $n$, dataset $\{ \mathbf z^i =(\mathbf x^i, (\mathbf u^{(n)})^i) \in M^{(n)} \}_{i=1}^{N_D}$, SINDy LHS $\boldsymbol \ell$, SINDy function library $\{\theta^j\}$, infinitesimal generators of the symmetry group $\mathcal B = \{\mathbf v_a\}$.
    \ENSURE A PDE system admitting the given symmetry group.
    \STATE Compute the symmetry invariants of $\mathcal B$ up to $n$th-order: $\eta^1, \cdots,\eta^K$. \COMMENT{Prop. \ref{prop:higher-order-di-p>1}}\\
    % \STATE Evaluate the invariant functions on the dataset: $\eta^{k,i} = \eta^k(\mathbf z^i)$, for $k \in [K], i \in [N_D]$. \\
    \STATE Choose an invariant function $\eta^{k_i}$ s.t. $\partial\eta^{k_i}/\partial\ell^i \not=0$ for SINDy LHS component $\ell^i$. \\
    \STATE Let $\boldsymbol\eta_0 = [\eta^{k_1}, ..., \eta^{k_q}]^T$ and $\boldsymbol \eta$ denote the column vector containing the remaining $K-q$ invariants. \\
    \STATE Instantiate the sparse regression model as $\boldsymbol\eta_0 = W\boldsymbol\theta(\boldsymbol\eta)$.\\
    \STATE Optimize $W$ with the SINDy objective: $\sum_i\| \boldsymbol\eta_0(\mathbf z^i) - W\boldsymbol\theta(\boldsymbol\eta(\mathbf z^i)) \|^2 + \lambda \| W \|_0$. \\
    \algorithmicreturn\  $\boldsymbol\eta_0 = W\boldsymbol\theta(\boldsymbol\eta)$. \COMMENT{Optionally, expand all $\eta^j$ in terms of original variables $\mathbf z$.}
\end{algorithmic}
    
\end{algorithm}

The configuration from the original SINDy model, i.e., the LHS $\boldsymbol\ell$ and the function library $\{\theta^j\}$, are used to construct a new equation model in terms of the invariants.
It should be noted that the functions in the SINDy function library does not specify their input variables. For example, in the PySINDy \citep{Kaptanoglu2022} implementation, a function $\theta$ is provided in a lambda format \texttt{lambda x, y: x * y}. Thus, $\theta$ can be applied to both the original variables, e.g. $\theta(z_1, z_2)=z_1z_2$, and the invariant functions, e.g. $\theta(\eta_1,\eta_2) = \eta_1\eta_2$.

\subsection{Symmetry Invariant Condition as Linear Constraints}
\label{sec:appd-linear-constraints}

Instead of directly using the invariant functions $\eta$ as the features and labels for regression, we can derive a set of linear constraints from the fact that the equation can be rewritten in terms of invariant functions. As shown in Appendix \ref{sec:appd-sindy-align}, a basis $Q$ of the constrained parameter space can be obtained from the right singular vectors of a constraint matrix $M$. We rearrange $Q$ to a tensor of shape $(r,q,m)$, where $r$ is the dimension of the constrained parameter space, and $(q,m)$ is the original shape of the parameter matrix $W$. Then, we can parameterize $W$ by $W^{jk}= Q^{ijk}\beta^i$, where $\beta$ is the learnable parameter, and discover the equation using the original SINDy objective as described in \cref{sec:background-sr}.

In practice, we observe that the basis $Q$ obtained from SVD on \eqref{eq:sindy-w-constraint} is not sparse. Indeed, SVD does not inherently encourage sparsity in the singular vectors. As mentioned in \cref{sec:implementation}, the lack of sparsity can pose a problem when we perform sequential thresholding in sparse regression.
Therefore, after performing SVD, we apply a Sparse PCA to $Q$ to obtain a sparsified basis, also of shape $(r,q,m)$:

\begin{verbatim}
    spca = SparsePCA(n_components=r)
    spca.fit(Q.reshape(r, q*m))
    Q_sparse = spca.components.reshape(r,q,m)
\end{verbatim}

\cref{fig:symmetry-basis} shows an example of the original basis solved from SVD (top $7 \times 2$ grid) and the sparsified basis using sparse PCA (bottom $7 \times 2$ grid). This is used in our experiment on the reaction-diffusion system \eqref{eq:rd-original}.

\begin{figure}
    \centering
    \begin{subfigure}{.95\textwidth}
        \includegraphics[width=\textwidth]{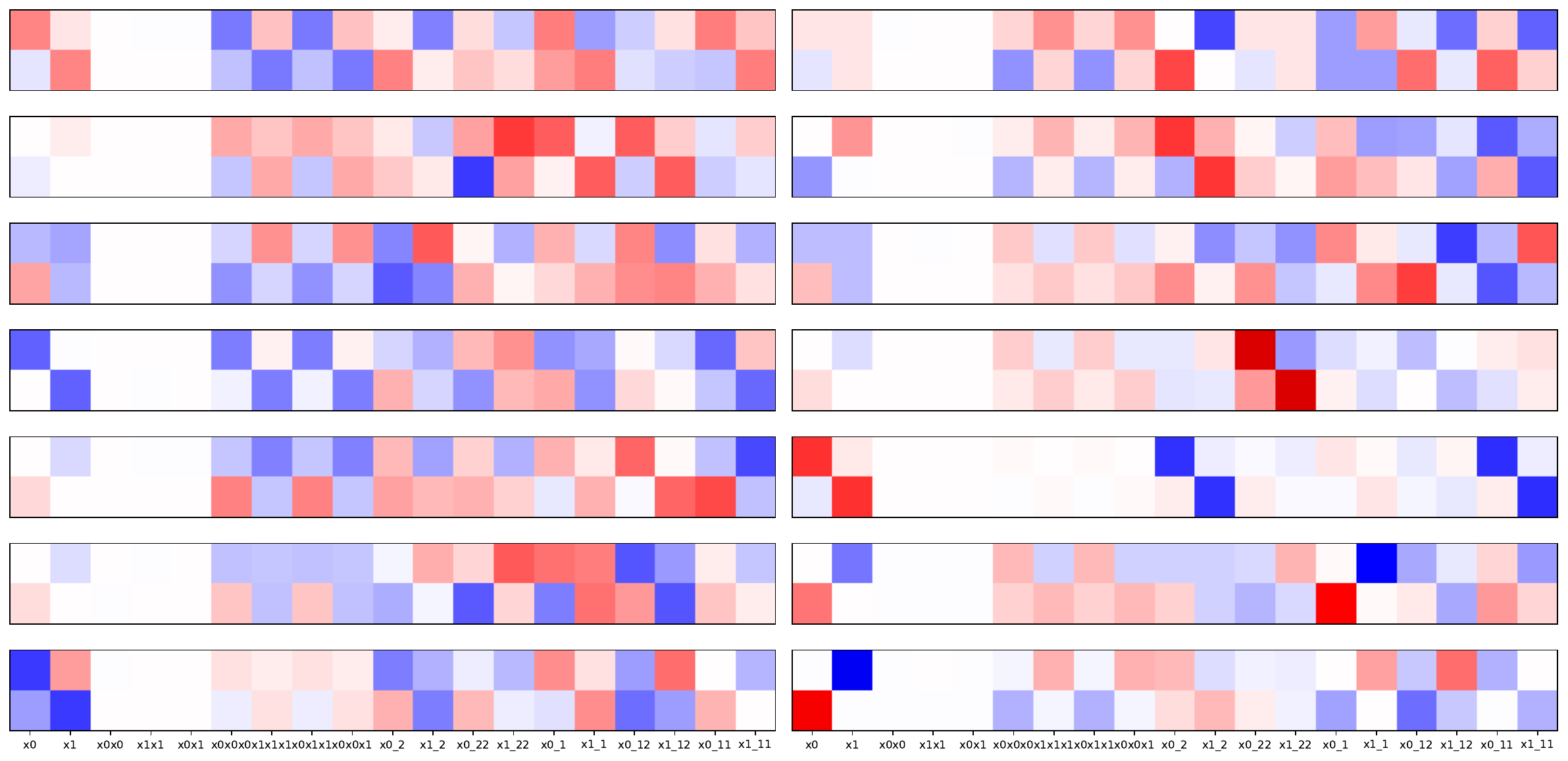}
    \end{subfigure}
    \begin{subfigure}{.95\textwidth}
        \includegraphics[width=\textwidth]{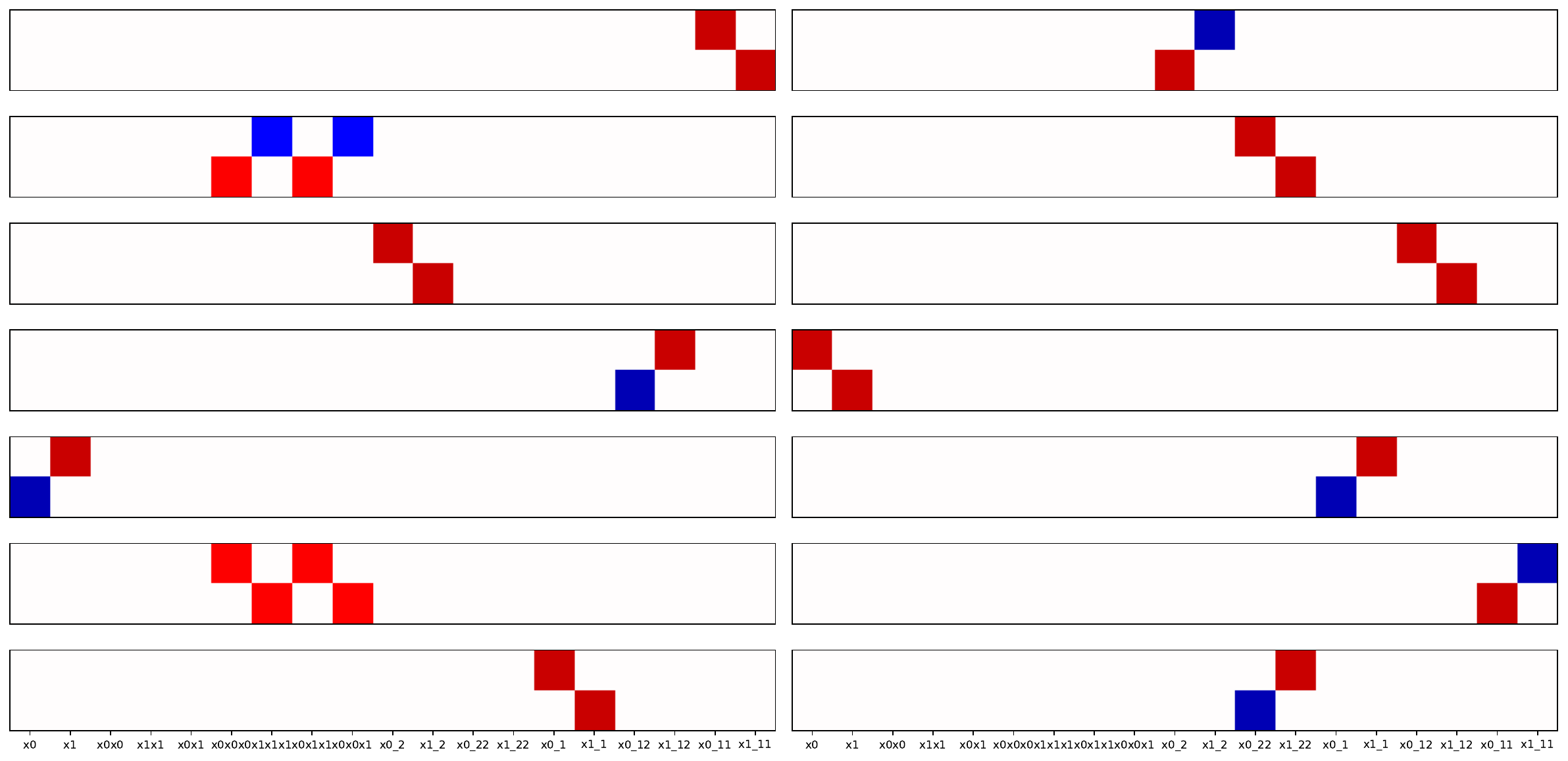}
    \end{subfigure}
    \caption{Basis for the SINDy parameter subspace that preserves $\mathrm{SO}(2)$ symmetry $\mathbf v = -v\partial_u + u\partial_v$. The SINDy parameter $W$ has dimension $2 \times 19$. The two rows correspond to the two equations with $u_t$ and $v_t$ as the LHSs. The RHS contains $19$ features, including all monomials of $u,v$ up to degree $3$ and their spatial derivatives up to order $2$. The set of symmetry invariants used to compute the basis is given by $\{ t,x,y,u^2+v^2 \} \bigcup \{ \mathbf u\cdot\mathbf u_\mu \}\bigcup \{\mathbf u^\perp \cdot \mathbf u_\mu\}$, where $\mathbf u=(u,v)^T$ and $\mu$ is a multiindex of $t,x,y$ with order no more than $2$. The top $7\times 2$ grid displays the original basis solved from SVD, and the bottom $7\times 2$ grid displays the sparsified basis.}
    \label{fig:symmetry-basis}
\end{figure}

\subsection{Using Differential Invariants in Weak SINDy}
\label{sec:appd-wsindy}

In this subsection, we discuss the formulation of weak SINDy and how to implement our strategy of using differential invariants within the weak SINDy framework. To maintain a similar notation to the original works on weak SINDy \citep{messenger2021weak, messenger2021weaksindy}, we use $D_{\alpha_s}$ to denote partial derivative operators, where $\alpha_s = (s_1, s_2, ..., s_p)$ is a multi-index, instead of using subscripts for partial derivatives. Thus, we no longer strictly differentiate subscripts and superscripts--both can be used for indexing lists, vectors, etc.

Given a differential equation in the form
\begin{equation}
    D_{\alpha_0} u = \sum_{s, j} W_{sj} D_{\alpha_s} f_j(u),\label{eq:weak-sindy-pde}
\end{equation}
we can perform integration by parts (i.e., divergence theorem) to move the derivatives from $u$ to some analytic test function and thus bypass the need to estimate derivatives numerically. First, we multiply both sides of \eqref{eq:weak-sindy-pde} by a test function $\phi$ with compact support $\mathcal B \subset X$ and integrate over the spacetime domain:
\begin{equation}
    \int_X D_{\alpha_0} u(\mathbf x)\phi(\mathbf x)d\mathbf x = \sum_{s,j} W_{sj} \int_X D_{\alpha_s} f_j(u(\mathbf x))\phi(\mathbf x) d\mathbf x
\end{equation}

WLOG, assume that $s_1 \not= 0$, and denote $\alpha_{s'} = (s_1-1, s_2, ..., s_p)$. Then, each term in the RHS can be integrated by parts as
\begin{align}
    \int_X D_{\alpha_s} f_j(u(\mathbf x))\phi(\mathbf x) d\mathbf x &= \int_\mathcal B D_{\alpha_s} f_j(u(\mathbf x))\phi(\mathbf x) d\mathbf x \cr
    &= -\int_\mathcal B D_{\alpha_{s'}} f_j(u(\mathbf x)) D_1 \phi(\mathbf x) d\mathbf x + \int_{\partial \mathcal B} \nu_1 D_{\alpha_{s'}} f_j(u(\mathbf x)) \phi(\mathbf x) d\mathbf x \cr
    &= -\int_\mathcal B D_{\alpha_{s'}} f_j(u(\mathbf x)) D_1 \phi(\mathbf x) d\mathbf x,
\end{align}
where $D_1$ denotes the partial derivative operator w.r.t the first independent variable, and $\nu_1$ is the first component of the unit outward normal vector.

Repeating this process until all the derivative operations move from $f_j(u)$ to the test function $\phi$, we have
\begin{equation}
    \int_X D_{\alpha_s} f_j(u(\mathbf x))\phi(\mathbf x) d\mathbf x = (-1)^{|\alpha_s|} \int_X f_j(u(\mathbf x)) D_{\alpha_s} \phi(\mathbf x) d\mathbf x \label{eq:weak-sindy-rhs}
\end{equation}

Similarly for the LHS:
\begin{equation}
    \int_X D_{\alpha_0} u(\mathbf x)\phi(\mathbf x)d\mathbf x = (-1)^{|\alpha_0|} \int_X u(\mathbf x) D_{\alpha_0} \phi(\mathbf x) d\mathbf x \label{eq:weak-sindy-lhs}
\end{equation}

The final optimization problem is to solve for $\mathbf b = \mathbf G \mathbf w$, where $\mathbf w$ is the vectorized coefficient matrix $W$, and each row in $\mathbf b$ and $\mathbf G$ is given by computing the integrals in \eqref{eq:weak-sindy-rhs} and \eqref{eq:weak-sindy-lhs} against a single test function. The number of rows equals the number of different test functions used.

\paragraph{Direct integration of symmetry via linear constraints}
As we have discussed in Appendix \ref{sec:appd-linear-constraints}, we can enforce symmetry by converting it to a set of linear constraints on the parameter $W$. With this approach, we can directly incorporate symmetry in weak SINDy. Specifically, we just parameterize $W$ as in terms of a precomputed basis $Q$ and a trainable vector $\beta$ and directly substitute this parameterization of $W$ into the optimization problem of weak SINDy. We adopt this strategy in our experiments concerning weak SINDy.

\paragraph{Expressing the equations with differential invariants}
The above approach is only possible when the conditions in Proposition \ref{prop:sindy-linear-constraint} about the selected set of symmetry invariants hold. We should note that it is not always possible to find a set of invariants so that the symmetry condition can be converted to linear constraints on the parameter $W$ via the procedure in the proof of Proposition \ref{prop:sindy-linear-constraint}. One may ask the following question: can we simply express the equations in terms of differential invariants and apply weak SINDy, similar to \cref{alg:sindy-si-raw} for the original SINDy formulation? Here, we do not provide a definite conclusion for this question, but only discuss several cases where directly using differential invariants in equations might succeed or fail in weak SINDy.

To adapt to the weak SINDy formulation \eqref{eq:weak-sindy-pde}, it is more helpful to consider the symmetry invariants as generated by some fundamental invariants and some invariant differential operators, instead of specifying a complete set of differential invariants for every order. Concretely, there exists a set of invariant differential operators $\{\mathcal O_j \}$ and a set of fundamental differential invariants $\mathbf I = \{ \eta_k \}$ s.t. every differential invariant can be written as $\mathcal O_{j_1}...\mathcal O_{j_n} \eta_k$. For the $\mathrm{SO}(2)$ symmetry group in Example \ref{ex:so2-ext}, one possible choice is
\begin{equation}
    \eta_1 = \frac{1}{2}(x^2 + y^2),\ \eta_2 = u,\ \mathcal O_1 = xD_y - yD_x,\ \mathcal O_2 = xD_x + yD_y.
\end{equation}

We can compose these generating invariant operators to obtain a full library of eligible differential operators up to some order, denoted $\mathbf D = \{ \mathcal D_j \}$. The exact compositions can vary and we can choose the most convenient one for subsequent calculations. For the above $\mathrm{SO}(2)$ example, for up to second-order differential operators, we can choose $\{ \mathcal O_1, \mathcal O_2, \mathcal O_1^2, \mathcal O_2^2, \frac{2}{\eta_1}(\mathcal O_1^2 + \mathcal O_2 ^ 2) \}$. Note the last operator is exactly the Laplacian.
% Then, the Darcy flow equation \eqref{eq:darcy}, for example, can be written as
% \begin{equation}
%     8 \mathcal O_2I_2 - \Delta I_2 - \exp(8I_1) = 0
% \end{equation}

Then, the complete set of eligible terms (respecting the symmetry) in the equation is $\{ \mathcal D_j \eta_k : \mathcal D_j \in \mathbf D, \eta_k\in \mathbf I\}$. If we assume, as in SINDy, that the governing equation can be written in linear combination of these symmetry invariants, then we can assign a weight for each $\mathcal D_j \eta_k$ and form a coefficient matrix $W = [W_{jk}]$. That is,
\begin{equation}
    \mathcal D_{j_0} \eta_{k_0} = \sum_{(j,k) \not= (j_0,k_0)} W_{jk} \mathcal D_j\eta_k.
\end{equation}

Then, multiplying each side by a test function $\phi(\mathbf x)$, we have
\begin{equation}
    \int_X \mathcal D_{j_0} \eta_{k_0} \phi(\mathbf x) d\mathbf x = \sum_{(j,k) \not= (j_0,k_0)} W_{jk} \int_X \mathcal D_j\eta_k \phi(\mathbf x) d\mathbf x. \label{eq:weak-sindy-invars}
\end{equation}

The question then boils down to whether we can apply the technique of integration by parts similarly to this set of differential operators and differential functions, since the original algorithm only deals with partial derivative operators $D_{\alpha_s}$ and ordinary functions $f_j(u)$.

To check this, let us explicitly write out the dependency of these operators and fundamental invariants.

\paragraph{Case 1}
A relatively simple case is when all invariant operators take the form $\mathcal D_j = \sum_s a_s(\mathbf x) D_{\alpha_s}$ and $ \eta_k = \eta_k(\mathbf x, u(\mathbf x))$. Each term in the RHS of \eqref{eq:weak-sindy-invars} can be expanded as
\begin{align}
    \int_X \mathcal D_j\eta_k \phi(\mathbf x) d\mathbf x &= \sum_s \int_X a_{s}(\mathbf x) D_{\alpha_s}\eta_k(\mathbf x, u(\mathbf x)) \phi(\mathbf x) d\mathbf x \cr
    &= \sum_s (-1)^{|\alpha_s|}\int_X \eta_k(\mathbf x, u(\mathbf x)) D_{\alpha_s}[a_s(\mathbf x)\phi(\mathbf x)] d\mathbf x \label{eq:weak-sindy-invars-ordinary-rhs}
\end{align}

Evaluating \eqref{eq:weak-sindy-invars-ordinary-rhs} does not require estimating partial derivatives of $u$. Therefore, weak SINDy can be applied to this case quite straightforwardly.

\paragraph{Case 2}
However, it is not always possible to have all $\mathcal D_j$ as classical linear differential operators and all $\eta_k$ as ordinary functions.
% In \cref{ex:scale_translate}, we can only obtain a differential function $\eta_{(0,1)}$ and two nonlinear operators involving $u_x$ as in \eqref{eq:scale-translate-operators}.
For instance, in Example \ref{ex:scale_translate}, there are no ordinary symmetry invariants due to the constraint of translation symmetry.

If we still have linear operators $\mathcal D_j = \sum_s a_s(\mathbf x) D_{\alpha_s}$, but on the other hand we have differential functions $\eta_k = \eta_k(\mathbf x, u^{(n)})$, we can still perform integration by parts as in \eqref{eq:weak-sindy-invars-ordinary-rhs}, but the final result becomes
\begin{equation}
    \sum_s (-1)^{|\alpha_s|}\int_X \eta_k(\mathbf x, u^{(n)}) D_{\alpha_s}[a_s(\mathbf x)\phi(\mathbf x)] d\mathbf x, \label{eq:weak-sindy-invars-differential-rhs}
\end{equation}
meaning we still have to evaluate whatever partial derivatives remain in $\eta_k$. It is possible that we can decrease the order of partial derivatives compared to vanilla sparse regression, but we cannot eliminate all partial derivatives compared to Weak SINDy without any symmetry information.

\paragraph{Case 3}
The most challenging case is when the invariant differential operators  explicitly involve the partial derivative, such as $\mathcal D_j = \sum_s a_s (\mathbf x, u^{(n)})D_{\alpha_s}$. Then, similar to \eqref{eq:weak-sindy-invars-differential-rhs}, integration by parts yields:
\begin{equation}
    \sum_s (-1)^{|\alpha_s|}\int_X \eta_k(\mathbf x, u^{(n)}) D_{\alpha_s}[a_s(\mathbf x, u^{(n)})\phi(\mathbf x)] d\mathbf x. \label{eq:weak-sindy-invars-differential-rhs-complex}
\end{equation}

In this case, we still need to compute the partial derivatives, not only those in $\eta_k$, but also those arising from $a_s$ and $D_{\alpha_s}(a_s)$. The latter might involve higher-order derivatives and the benefit of using the weak formulation may further diminish.

\section{Additional Experiment Results}
\label{sec:appd-more-exp}

Contents of this section include:
\begin{itemize}
    \item Appendix \ref{sec:appd-error-bar}: Extended results in \cref{tab:all} with confidence intervals for the prediction error metric over 100 runs.
    \item Appendix \ref{sec:appd-more-exp-sindy}: Results for some variants of the sparse regression models considered in \cref{tab:all}.
    % \item Appendix \ref{sec:appd-more-exp-gp}: Results for genetic programming-based algorithms under different computational budgets.
    \item Appendix \ref{sec:appd-wsindy-autotune}: Results for the symmetry-breaking reaction-diffusion systems with auto-tuned WSINDy parameters, complementing \cref{fig:rd-variants}.
    \item Appendix \ref{sec:appd-rd3d}: Results on a 3D PDE system with $\mathrm{SO}(3)$ spatial symmetry.
    \item Appendix \ref{sec:appd-more-exp-dcipher}: Results for the D-CIPHER \citep{kacprzyk2023d} baseline and our method applied to D-CIPHER on the Darcy flow dataset.
    \item Appendix \ref{sec:appd-more-exp-eq-samples}: Samples of equations discovered by different methods.
    \item Appendix \ref{sec:appd-pred-errors}: Visualized prediction errors of equations discovered by different methods.
\end{itemize}

\subsection{Results in \cref{tab:all} With Error Estimates}
\label{sec:appd-error-bar}

\begin{table}[H]

\caption{Extended results in \cref{tab:all} with confidence intervals for the prediction error metric over 100 runs. Each table entry is formatted as median [25\% quantile, 75\% quantile].}
\label{tab:all-pe-error-bar}
\centering

\resizebox{\textwidth}{!}{

\begin{tabular}{ccccc}
\toprule
    \multicolumn{2}{c}{Method} & Boussinesq \eqref{eq:boussinesq} & Darcy flow \eqref{eq:darcy} & Reaction-diffusion \eqref{eq:rd-original} \\
\midrule
    \multirow{2}{4em}{Sparse Regression} & PySINDy & 0.373 [0.367, 0.380] & - & 0.021 [0.020, 0.022] \\
    & SI & \textbf{0.098} [0.098, 0.098] & - & \textbf{0.008} [0.007, 0.013] \\
\hline
    \multirow{2}{4em}{Genetic Programming} & PySR & 0.098 [0.098, 0.098] & 0.114 [0.089, 0.169] & - \\
    & SI & 0.098 [0.098, 0.098] & \textbf{0.051} [0.031, 0.053] & \textbf{0.023} [0.015, 0.036] \\
\hline
    \multirow{2}{4em}{Transformer} & E2E & 0.132 [0.109, 0.322] & - & - \\
    & SI & \textbf{0.104} [0.100, 0.109] & - & - \\
\bottomrule
\end{tabular}

}

\end{table}

\subsection{Variant Sparse Regression Models }
\label{sec:appd-more-exp-sindy}

\begin{table}[H]

\caption{Results of sparse regression models on the Boussinesq equation and the reaction-diffusion system. $\mathcal C$ stands for complexity, i.e., the dimensionality of the parameter space. SP stands for success probability. The PySINDy and SI rows present the same results as the corresponding rows in \cref{tab:all}.}
\label{tab:sindy-extended}
\centering

\begin{tabular}{ccccc}
\toprule
    \multirow{2}{4em}{Method} & \multicolumn{2}{c}{Boussinesq \eqref{eq:boussinesq}} & \multicolumn{2}{c}{Reaction-diffusion \eqref{eq:rd-original}} \\
    &  $\mathcal C$ $\downarrow$ & SP $\uparrow$ & $\mathcal C$ $\downarrow$ & SP $\uparrow$ \\
\midrule
    PySINDy & 15 & \textcolor{gray}{0.00} & 38 & 0.53 \\
    PySINDy$^*$ & 21 & \textbf{1.00} & 468 & 0.00 \\
    {PySINDy$^{**}$} & {15} & {\textbf{1.00}} & {198} & {0.00} \\
    SI & {15} & \textbf{1.00} & 28 & 0.54 \\
    % & SI-aligned & 13 & $u_{tt} = - 1.00 u_{xxxx} - 0.99 u_x^2 - 0.98 uu_{xx}$ & 1.00 \\
    SI-aligned & - & - & \textbf{14} & \textbf{0.56} \\
\bottomrule
\end{tabular}

\end{table}

{PySINDy \citep{desilva2020, Kaptanoglu2022} constructs its library $\bm \theta$ from a list of variables and derivatives, $[\mathbf u\ ||\ \mathbf u_\alpha]\ (|\alpha| > 0)$ and a set of scalar functions specified in lambda format. For example, to include up to quadratic monomial terms in the library, we can specify the following functions: $x \mapsto x$ and $(x, y) \mapsto xy$}. However, their original implementation does not allow these functions to be applied to partial derivative terms. As a result, terms such as $u_x^2$ cannot be modeled. This leads to its failure to discover the Boussinesq equation \eqref{eq:boussinesq}, as we have shown in \cref{tab:all}.

We modify the implementation and include an additional set of results with different libraries, denoted as PySINDy$^*$ in \cref{tab:sindy-extended}.
The PySINDy$^*$ model supports a wider range of library functions, including functions of partial derivatives, e.g., $u_x^2$. {Further more, we notice that the PySINDy$^*$ library while comprehensive, contains many redundant terms, such as interactions between derivatives like $u_xu_{xx}$. Therefore, we implement another library, denoted PySINDy$^{**}$, where functions such as $(x, y)\mapsto xy$ are only applied when their arguments do not contain at least two different partial derivatives. Therefore, PySINDy$^{**}$ library still includes all the necessary terms to recover the Boussinesq equation but becomes much more compact.}
A complete description of the hypothesis spaces of different sparse regression-based methods is available in Appendix \ref{sec:appd-hypotheses}.

% As \cref{tab:sindy-extended} shows, PySINDy$^*$ succeeds in the Boussinesq equation. However, it fails in the reaction-diffusion system because its parameter space becomes too large due to a higher-dimensional total space $X \times U \simeq \mathbb R^2 \times \mathbb R^2$. This augments the point that SINDy's success relies on an appropriate choice of function library. If the library is too small to contain all the terms appearing in the equation of interest, the discovery is sure to fail. If the library is too large, the optimization problem becomes more difficult in the high-dimensional parameter space.
% On the other hand, by introducing the inductive bias of symmetry, our method automatically identifies a proper function library that contains all the necessary terms for a PDE with a specific symmetry group, but not other redundant terms.

{As \cref{tab:sindy-extended} shows, both PySINDy$^*$ and PySINDy$^{**}$ succeed in the Boussinesq equation. However, they fails in the reaction-diffusion system because their parameter spaces become too large due to a higher-dimensional total space $X \times U \simeq \mathbb R^2 \times \mathbb R^2$. Even with the more compact PySINDy$^{**}$, there are still 198 possible terms for the reaction-diffusion system, and the algorithm never succeeded in 100 runs.} This augments the point that SINDy's success relies on an appropriate choice of function library. If the library is too small to contain all the terms appearing in the equation of interest, the discovery is sure to fail. If the library is too large, the optimization problem becomes more difficult in the high-dimensional parameter space.
On the other hand, by introducing the inductive bias of symmetry, our method automatically identifies a proper function library that contains all the necessary terms for a PDE with a specific symmetry group, but not other redundant terms.

We include another model in \cref{tab:sindy-extended}, SI-aligned, where we derive a set of linear constraints on the sparse regression parameters from the fact that the equations can be expressed in terms of symmetry invariants. In this way, we still optimize the original parameters (though in a constrained subspace) as in the base SINDy model without symmetry, effectively "aligning" the hypotheses about equations from symmetry and the base SINDy model. This method is discussed in detail in \cref{sec:implementation} and Appendix \ref{sec:appd-linear-constraints}. We should also note that this method is mainly developed for incorporating the symmetry constraints into the weak formulation of SINDy. However, it is perfectly acceptable to implement it in the original formulation of SINDy, so we provide its results in \cref{tab:sindy-extended} for reference.

For the reaction-diffusion system, SI-aligned has a $14$-dimensional parameter space. The basis for its parameter space is visualized in \cref{fig:symmetry-basis}. It achieves a slightly higher success probability than SI (regression with symmetry invariants) and PySINDy (without symmetry information). We do not apply SI-aligned to the Boussinesq equation, because it is not necessary to align the hypotheses from SINDy and symmetry in this case. We can readily convert any equation discovered from SI (regression with symmetry invariants) by multiplying both sides by $u_x^2$.

We note that the results on the reaction-diffusion system in \cref{tab:sindy-extended} are for models with the original SINDy formulation, in contrast to the weak SINDy formulation used in \cref{fig:rd-variants}. Therefore, the results in \cref{fig:rd-variants} should not be directly compared to those in \cref{tab:all} and \cref{tab:sindy-extended}.

\subsection{Parameter Selection for Weak SINDy (WSINDy)}
\label{sec:appd-wsindy-autotune}

In \cref{sec:exp-noisy}, we have mentioned that different methods for selecting the test functions used in WSINDy have a significant impact on the equation discovery outcome. As is shown in \cref{fig:rd-variants} (left), the auto-tuned test function parameters from the data result in better performance for baseline WSINDy (no symmetry constraint) in all noise levels considered, and for our symmetry-constrained method under relatively low noise levels.

In \cref{fig:rd-variants} (center \& right), we have used the default test function parameters from PySINDy for the variants of the reaction-diffusion system with imperfect symmetry. We find that the default parameters result in higher success probabilities of finding the correct equations in these cases. However, for comparison, we also show the results of the same experiments using the auto-selected test function parameters in \cref{fig:rd-sb-autotune-wsindy}.

\begin{figure}[h]
    \centering
    \begin{subfigure}{.45\textwidth}
        \includegraphics[width=\textwidth]{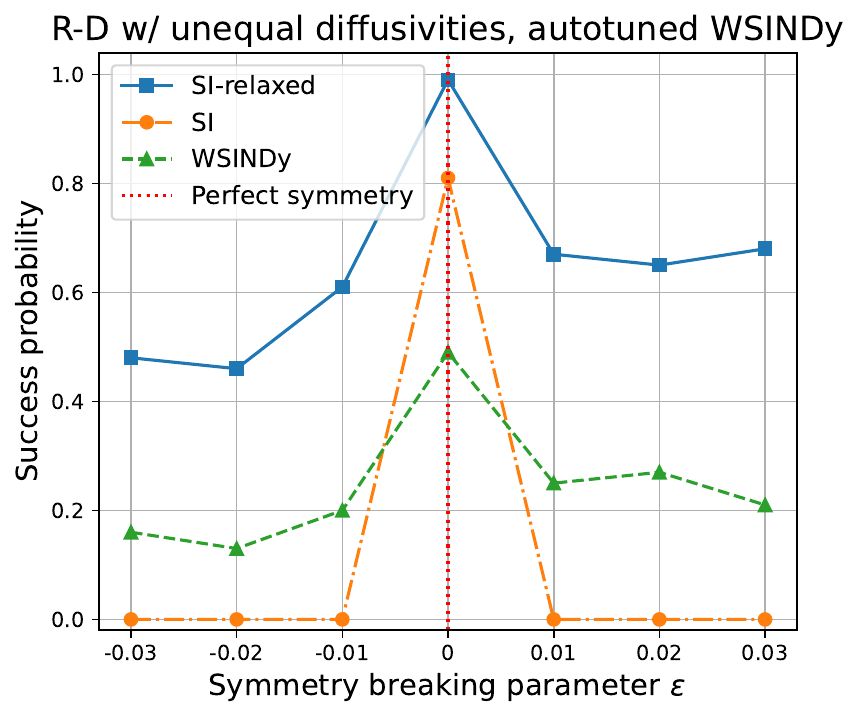}
    \end{subfigure}
    \begin{subfigure}{.431\textwidth}
        \includegraphics[width=\textwidth]{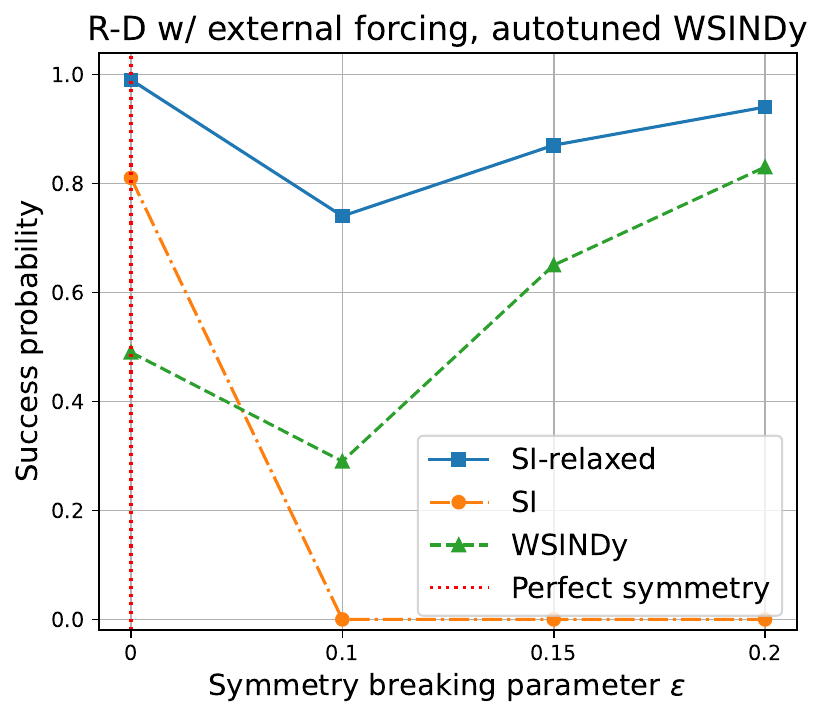}
    \end{subfigure}
    \caption{Success probabilities on the reaction-diffusion system with unequal diffusivities (left) and external forcing (right) with auto-tuned WSINDy parameters.}
    \label{fig:rd-sb-autotune-wsindy}
\end{figure}

With the auto-tuning of test function parameters following the procedure in \citet{messenger2021weak}, the relaxed version of our method (SI-relaxed) still has superior performance over the baseline WSINDy in both symmetry-breaking scenarios. Notably, as the auto-tuning procedure typically suggests more test functions with higher polynomial degrees than the default setup, the performance of SI-relaxed is even better than its performance shown in \cref{fig:rd-variants}. However, the baseline WSINDy does not benefit from this selection strategy as much, while the version of our method that enforces hard constraints fails to recover any equation with imperfect symmetry.

\subsection{Equations with More Spatial Dimensions}
\label{sec:appd-rd3d}

Our previous examples focus on equations with 2 spatial dimensions at most. In this subsection, we test whether our method works well for discovering equations with higher spatial dimensionality. We consider a 3D isotropic reaction-diffusion system governed by
\begin{equation}
    u_t = 0.2 \Delta u + u - u^3,\label{eq:rd-3d}
\end{equation}
where $u$ is a scalar field, and the nonlinear reaction term is given by $u-u^3$.

For data generation, we use a random perturbation around $u=0$. Then, similar to the 2D reaction-diffusion system \eqref{eq:rd-original}, we simulate the solution from this initial condition using an FFT-based spectral method with RK45 solver and periodic boundary conditions. We use $32$ grid points per dimension and simulate up to $T=10$, recording the solution at every $\Delta t = 0.1$. After simulation, we add a $1\%$ noise to the trajectory and compute the derivatives from the noisy trajectory. We then split the data along the time dimension, using $t \in [0,8)$ for training and $t \in [8, 10)$ for testing.

\begin{table}[h]
    \centering
     \caption{Results on the 3D reaction-diffusion dataset over 100 runs for each method. The prediction error (PE) column is formatted as median [25\% quantile, 75\% quantile].}
    \begin{tabular}{cccccc}
    \toprule
        \multicolumn{2}{c}{Method} & $\mathcal C$ $\downarrow$ & SP $\uparrow$ & PE $\downarrow$ \\
    \midrule
        \multirow{2}{4em}{Sparse regression} & PySINDy & 286 & 0.51 & 0.216 [0.207, 0.977] \\
        ~ & SI & \textbf{84} & \textbf{1.00} & \textbf{0.208} [0.204, 0.210] \\
        \hline
        \multirow{2}{4em}{Genetic programming} & PySR & 10 & 0.00 & 1.228 [1.227, 1.230] \\
        ~ & SI & \textbf{6} & \textbf{0.20} & 1.227 [0.324, 1.229] \\
    \bottomrule
    \end{tabular}
    \label{tab:rd3d}
\end{table}

The equation admits an $\mathrm{SO}(3)$ spatial symmetry. We use a set of $\mathrm{SO}(3)$ invariants up to second order given by $\mathcal I_\text{RD-3D} = \{ u, (\nabla u)^2, \Delta u, (\nabla u)^TH\nabla u, \mathrm{tr}(H^2), \mathrm{det}(H) \}$, where $H$ denotes the Hessian matrix $H_{ij} = u_{ij}$, and $\mathrm{tr}$ and $\mathrm{det}$ stands for matrix trace and determinant, respectively. Note that a complete set of invariants would also include terms with independent variables and time derivatives, e.g., $\{ u_t, u_{tt}, \|\mathbf x\|^2_2, \mathbf x\cdot\nabla u, \mathbf x^TH\mathbf x, \mathbf x^TH\nabla u \}$. However, the standard setup of SINDy excludes time derivatives and independent variables in the RHS features. To align with the SINDy setup, we only use those 6 invariants in $\mathcal I_\text{RD-3D}$. Similarly, for genetic programming, even though it is possible to also include the spatial variables, we only use the standard coordinate in the 10-dimensional jet space (excluding $t$): $\{ u, u_x, u_y, u_z, u_{xx}, u_{xy}, ..., u_{yz}, u_{zz} \}$. Finally, for the SINDy function library, we include all monomials up to degree 3 of all variables and derivatives.

\cref{tab:rd3d} shows the results for sparse regression and GP-based methods on this 3D dataset. Our symmetry-invariant-based method (SI), when applied to SINDy, always discovers the correct functional form of the RHS of \eqref{eq:rd-3d}. In comparison, SINDy with regular variables only succeeds in about half of the 100 runs. Also, the complexity ($\mathcal C$), which stands for the total number of feature terms for SINDy-based methods, is much lower with our method. As such, we comment that our method's advantage of reducing the equation space complexity becomes more obvious in these higher-dimensional examples.

On the other hand, genetic programming proves to be a less favorable base SR algorithm in this case due to its overly large search space. PySR with standard jet variables completely fails to recover the correct equation. Our method uses the invariant set $\mathcal I_\text{RD-3D}$ as PySR input features and achieves a success probability (SP) of $20\%$. Finally, the symbolic transformer, which we used in the main experiments in \cref{tab:all} (E2E), has $0$ success probability in this dataset, with either regular variables or symmetry invariants. We therefore do not report its results in \cref{tab:rd3d}. Again, we comment that the failure of the symbolic transformer might be because the data distribution from the PDE solution is largely different from its pretraining dataset.

\subsection{Comparison with D-CIPHER}
\label{sec:appd-more-exp-dcipher}

A main advantage of our proposed method is its compatibility with various algorithms for symbolic regression. In the main experiments in \cref{sec:exp-clean} in \cref{sec:exp-noisy}, we have shown that our method works well with SINDy \citep{brunton16}, weak SINDy \citep{messenger2021weak}, genetic programming \citep{cranmer2023interpretable}, and symbolic transformer \citep{kamienny2022end}. To further demonstrate this advantage, we include another base algorithm for symbolic regression, D-CIPHER \citep{kacprzyk2023d}, in this section.

Similar to weak SINDy, D-CIPHER \citep{kacprzyk2023d} uses a variational objective for equation discovery. It defines the extended derivative as
\begin{equation*}
    \mc E[\mathbf u] (\mathbf x) = a(\mathbf x) \partial_\alpha h(\mathbf x,\mathbf u),
\end{equation*}
where $a$ and $h$ are some functions and $\alpha$ is a multi-index indicating partial derivatives. Then, a library $\{\mc E^s\}_{s=1}^S$ of such extended derivatives is specified by the user by providing $S$ triples of $(a, \alpha, h)$. The algorithm then optimizes for a coefficient $\bm \beta \in \R^S$ and a symbolic function $g(\mathbf x, \mathbf u)$ under a variational loss and outputs the equation
\begin{equation*}
    \sum_{s=1}^{S}\beta^s\mc E^s[\mathbf u](\mathbf x) = g(\mathbf x, \mathbf u).
\end{equation*}

As discussed in Appendix \ref{sec:appd-wsindy}, our approach can be directly used in D-CIPHER to enforce symmetry. We demonstrate this with an experiment on the Darcy flow dataset with $\mathrm{SO}(2)$ symmetry. First, we obtain the generating invariant operators of $\mathrm{SO}(2)$, i.e. $\mc O_1 = xD_y - yD_x$ and $\mc O_2 = xD_x + yD_y$. To discover this second-order PDE, we enumerate the following 5 up to second-order differential operators composed by $\mc O_1$ and $\mc O_2$: $\mathbf O = \{ \mathcal O_1, \mathcal O_2, \mathcal O_1^2, \mathcal O_2^2, \frac{1}{x^2+y^2}(\mathcal O_1^2 + \mathcal O_2 ^ 2) \}$. Note the last operator is exactly the Laplacian. On the other hand, we have 2 fundamental differential invariants $x^2+y^2$ and $u$. To enforce symmetry, we replace the manually defined set of extended derivatives $\{\mc E^s\}$ in D-CIPHER by all nontrivial differential functions obtained from applying an operator in $\mathbf O$ to one of the fundamental differential invariants. Also, instead of searching for general functions of all variables (in this case, $x,y,u$) for the RHS expression, we restrict the search space to functions of fundamental differential invariants, i.e. $x^2+y^2$ and $u$. Since D-CIPHER uses genetic programming to find a free-form expression $g$, we can simply replace the variable set in genetic programming by $\{x^2+y^2, u\}$ to achieve this.

\begin{table}[h]
    \centering
    \caption{Discovery results of D-CIPHER-based methods on the Darcy flow dataset. The ground truth equation is $8(xu_x+yu_y) -(u_{xx}+u_{yy}) - e^{4(x^2+y^2)}=0$.}
    \resizebox{\textwidth}{!}{
    \begin{tabular}{cc}
    \hline
        Method & Discovered equation \\
    \hline
        % D-CIPHER & $u_y + 1.16u_{xy} = 18.74 (u + e^u - 1.35)e^{y^2}$ \\
        D-CIPHER & $xu_x + yu_y - 2.09xu_y - 2.09yu_x - 0.19u_y = 7.98x^2y^2 + 2.51xy + 0.80 $ \\
        D-CIPHER-SI (ours) & $(xu_x + yu_y) - 0.13(u_{xx}+u_{yy}) = 0.13 e^{4.12(x^2+y^2)}$ \\
    \hline
    \end{tabular}
    }
    \label{tab:dcipher}
\end{table}

\cref{tab:dcipher} shows the equations discovered by the D-CIPHER baseline and our method. Our method can find the correct functional form of the Darcy flow equation, while D-CIPHER with the original variables and derivative operators cannot. We comment that the benefit of symmetry is even greater here for D-CIPHER than for other SR methods like SINDy, because D-CIPHER requires the user to specify both the function coefficient $a(\mathbf x)$ and the function to be differentiated $h(\mathbf x, \mathbf u)$ for an extended derivative. Such choices of functions can be largely arbitrary if no prior knowledge is available. On the other hand, our symmetry-based approach automatically selects this dictionary of differential functions.

\subsection{Samples of Discovered Equations}
\label{sec:appd-more-exp-eq-samples}

In \cref{tab:eq-samples-boussinesq}, we list some randomly selected equations discovered by different methods for the Boussinesq equation \eqref{eq:boussinesq}. Some methods almost consistently discover correct/incorrect equations (i.e., have success probabilities close to 1 or 0), so we only select one sample for each. For other methods with a large variance in the discovered equations, we display two samples: a correct equation and an incorrect one.

The ground truth equation in the original variables is given in \eqref{eq:boussinesq}. The ground truth equation in the symmetry invariants is given by
\begin{equation}
    \eta_{(0,2)} + \eta_{(0,0)}\eta_{(2,0)} +  \eta_{(4,0)} + 1 = 0
    \label{eq:boussinesq-si}
\end{equation}

\begin{table}[H]
    \centering
    \caption{Samples of discovered equations from the observed solution of the Boussinesq equation \eqref{eq:boussinesq}. For GP-based methods, we include results from different numbers of iterations (indicated by "$N$ its"). For transformer-based methods, we include two samples for each method because of the large variance of discovered equations from different runs.}
    \begin{tabular}{ccc}
    \hline
        \multicolumn{2}{c}{Method} & Equation sample(s) \\
    \hline
        \multirow{3}{5em}{Sparse regression} & PySINDy & $u_{tt} = -1.01 u_{xxxx} - 0.79 uu_{xx}$ \\
        & PySINDy$^*$ & $u_{tt} = -1.01 u_{xxxx} - 0.99u_x^2 - 0.98 uu_{xx}$ \\
        & SI & $\eta_{(0,2)} = -1.00 - 1.00 \eta_{(4,0)} - 1.00 \eta_{(0,0)}\eta_{(2,0)}$ \\
    \hline
        \multirow{3}{5em}{Genetic programming} & PySR (5 its) & $uu_{xx} + 1.00 u_{tt} + u_{xxxx} = 0$ \\
        & PySR (15 its) & $uu_{xx} + u_{tt} + u_x^2 + 1.00u_{xxxx} = 0$ \\
        & SI (5 its) & $1.00\eta_{(0,0)}\eta_{(2,0)} + 1.00 \eta_{(0,2)} + 1.00\eta_{(4,0)} + 1 = 0$ \\
    \hline
        \multirow{4}{5em}{Transformer} & \multirow{2}{2.5em}{E2E} & (1) $u_{tt} = -1.13 uu_{xx} - 0.98 u_{xxxx} - 0.30 |u_x|$ \\
        & & (2) $u_{tt} = -0.85uu_{xx} - 0.75 u_x^2 - 0.99 u_{xxxx}$ \\
        & \multirow{2}{1.8em}{SI} & (1) $\eta_{(0,2)} = -1.05\eta_{(0,0)}\eta_{(2,0)} - 1.00 \eta_{(4,0)} - 0.96$ \\
        & & (2) $\eta_{(0,2)} = -0.81\eta_{(0,0)}\eta_{(2,0)} - 0.40 \eta_{(0,0)} - 0.98 \eta_{(4,0)} - 0.90$ \\
    \hline
    \end{tabular}
    \label{tab:eq-samples-boussinesq}
\end{table}

\cref{tab:eq-samples-darcy} lists the equation samples discovered from the Darcy flow dataset. The ground truth equation in original variables is given in \eqref{eq:darcy}, and the ground truth equation in symmetry invariants is given by
\begin{equation}
    8\zeta_2 - \Delta u - e^{4R^2} = 0,
    \label{eq:darcy-si}
\end{equation}
where $\zeta_2 = xu_x + yu_y$, $\Delta u = u_{xx} + u_{yy}$, and $R^2 = x^2+y^2$ are among the rotational invariants used in symbolic regression.

\begin{table}[H]
    \centering
    \caption{Samples of discovered equations for the Darcy flow dataset.}
    \begin{tabular}{ccc}
    \hline
        \multicolumn{2}{c}{Method} & Equation sample \\
    \hline
        \multirow{2}{5em}{Genetic programming} & PySR & $u - 0.47x^2y^2 - 0.38e^{0.09(u_{xx}+u_{yy})} + 0.20 = 0$ \\
        & SI & $\zeta_2 - 0.13\Delta u - 0.13 e^{4.01R^2} = 0$ \\
    \hline
        \multirow{2}{5em}{Transformer} & E2E & $u_{xx} = -7.43 \sqrt{u^2 + 0.65 u_x^2}$ \\
        & SI & $\Delta u = -2.56 u + 0.85\zeta_2 + 0.29$ \\
    \hline
    \end{tabular}
    \label{tab:eq-samples-darcy}
\end{table}

Finally, \cref{tab:eq-samples-rd} lists the equation samples discovered from the reaction-diffusion dataset. The ground truth equation in original variables is given in \eqref{eq:rd-original} with $d_1 = d_2 = 0.1$, and the ground truth equation in symmetry invariants is given by
\begin{align}
    I_t &= 0.1 (I_{xx} + I_{yy}) + A(1-A) \cr
    E_t &= 0.1 (E_{xx} + E_{yy}) - A^2
    \label{eq:rd-si}
\end{align}
where $I_\mu = uu_\mu + vv_\mu$ and $E_\mu = -vu_\mu + uv_\mu$ for any multiindex $\mu$ of $t,x,y$, and $A = u^2+v^2$.

\begin{table}[ht]
    \centering
    \caption{Samples of discovered equations for the reaction-diffusion system dataset. Each discovered result contains two equations, since this is an evolution system with two dependent variables $u,v$.}

    \resizebox{\textwidth}{!}{
    
    \begin{tabular}{ccc}
    \hline
        \multicolumn{2}{c}{Method} & Equation sample \\
    \hline
        \multirow[b]{4}{5em}{Sparse regression} & PySINDy & $\begin{cases}
            u_t = 0.96u - 0.97 u^3 + 1.00^3 - 0.97 uv^2 + 1.00 u^2v + 0.09 u_{xx} + 0.09 u_{yy} \\
            v_t = 0.96v - 1.00u^3 - 0.97v^3 - 1.00 uv^2 - 0.96u^2v + 0.09 v_{xx} + 0.09 v_{yy}
        \end{cases}$ \\
        & PySINDy$^*$ & $\begin{cases}
            u_t = 0.21 u - 0.24 u^3 + 1.00 v^3 - 0.23 uv^2 + 0.99u^2v \\
            v_t = 0.21 v - 1.01 u^3 - 0.24 v^3 - 0.99 uv^2 - 0.23 u^2v \\
        \end{cases}$ \\
        & SI & $\begin{cases}
            I_t = 0.10 I_{xx} + 0.10 I_{yy} + 0.96A - 0.96A^2 \\
            E_t = 0.10 E_{xx} + 0.10 E_{yy} - 1.00 A^2
        \end{cases}$ \\
        & SI-aligned & $\begin{cases}
            u_t = 0.95u - 0.96u^3 + 1.00 v^3 - 0.96uv^2 + 1.00 u^2v + 0.09 u_{xx} + 0.09 u_{yy} \\
v_t = 0.95v - 1.00 u^3 - 0.96v^3 - 1.00 uv^2 - 0.96u^2v + 0.09 v_{xx} + 0.09 v_{yy}
        \end{cases}$ \\
    \hline
        \multirow[b]{2}{5em}{Genetic programming} & PySR & $\begin{cases}
        u_t = 0.92v\\
        v_t = -0.92u \\
        \end{cases}$ \\
        & SI & $\begin{cases}
        I_t = 0.10I_{xx} + 0.10I_{yy} + A - 1.00A^2\\
            E_t = 0.10E_{xx} + 0.10E_{yy} - 1.00A^2
        \end{cases}$ \\
    \hline
        \multirow[b]{2}{5em}{Transformer} & E2E & $\begin{cases}
            u_t = 0.89 u_{y} \\
            v_t = -0.91 u 
        \end{cases}$ \\
        & SI & $\begin{cases}
            I_t = 0 \\
            E_t = 0.50 \arctan (0.45 E_y - 0.31 E_y / (-540.12AE_y + ...) + ...) + ...
        \end{cases}$ \\
    \hline
    \end{tabular}
    }
    \label{tab:eq-samples-rd}
\end{table}

\subsection{Prediction Errors of Discovered Equations}
\label{sec:appd-pred-errors}

In \cref{tab:all}, we report the prediction errors of the discovered equations on the three PDE systems. Specifically, for the Boussinesq equation and the reaction-diffusion system, we simulate the discovered PDE from an initial condition for a certain time period, e.g., $t \in [0,20]$ for the Boussinesq equation and $t \in [0,10]$ for the reaction-diffusion system. Then, we compare the numerical solution with the ground truth solution from the same initial condition at the end of the time period.

\begin{figure}[H]
    \centering
    \begin{subfigure}{.48\textwidth}
        \includegraphics[width=\textwidth]{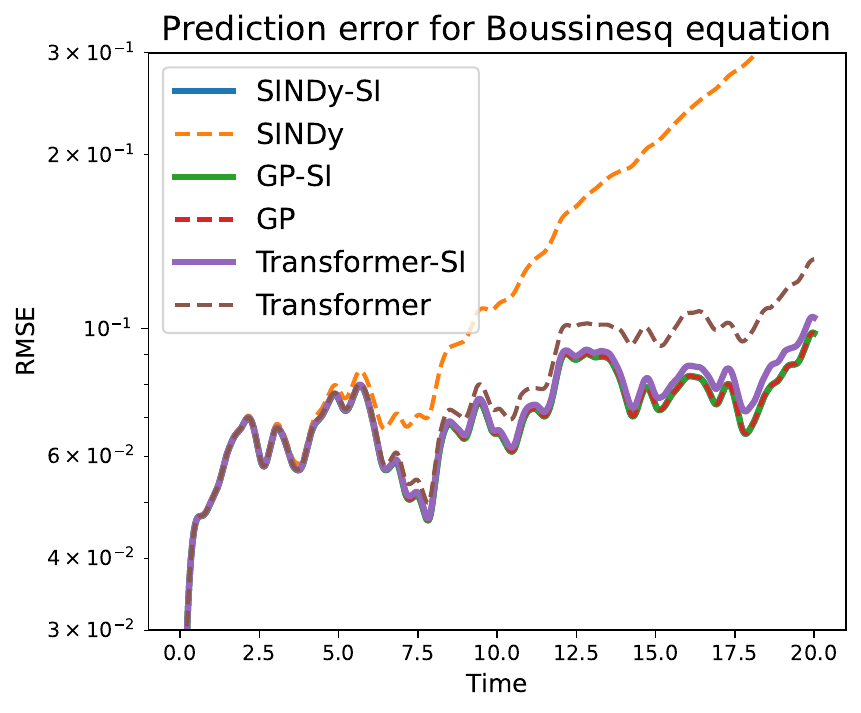}
    \end{subfigure}
    \begin{subfigure}{.48\textwidth}
        \includegraphics[width=\textwidth]{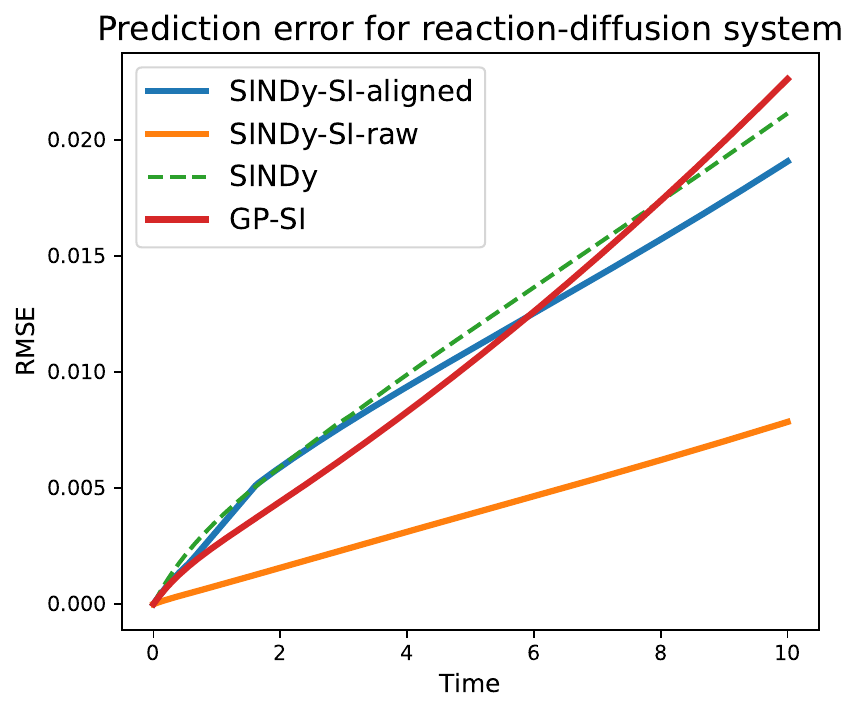}
    \end{subfigure}
    \caption{Prediction error over time using the discovered equations.}
    \label{fig:ltp}
\end{figure}

In addition to the prediction error at the end of the simulation time, \cref{fig:ltp} shows the errors at each simulation timestep. We do not include methods whose error curves grow too fast due to the incorrectly identified equations. The results in \cref{fig:ltp} are consistent with those in \cref{tab:all}. Generally, the discovered equations with smaller prediction errors at the end of the simulation time also have lower prediction errors throughout the entire time interval.

For Darcy flow \eqref{eq:darcy}, since it describes the steady state of a system and does not involve time derivatives, we do not simulate the discovered PDEs. Instead, we evaluate each discovered PDE $F(\mathbf x, u^{(n)})=0$ on the test dataset $\{ (\mathbf x, u^{(n)}) : \mathbf x \in \Omega \}$ and report the residual as the prediction error. In addition to the average error over all the spatial grid points reported in \cref{tab:all}, we visualize the error heatmaps over the grid in \cref{fig:pred-error-darcy}. It can be observed that the discovered equations with symmetry invariants have lower errors across the entire grid.

\begin{figure}[H]
    \centering
    \begin{subfigure}{.48\textwidth}
        \includegraphics[width=\textwidth]{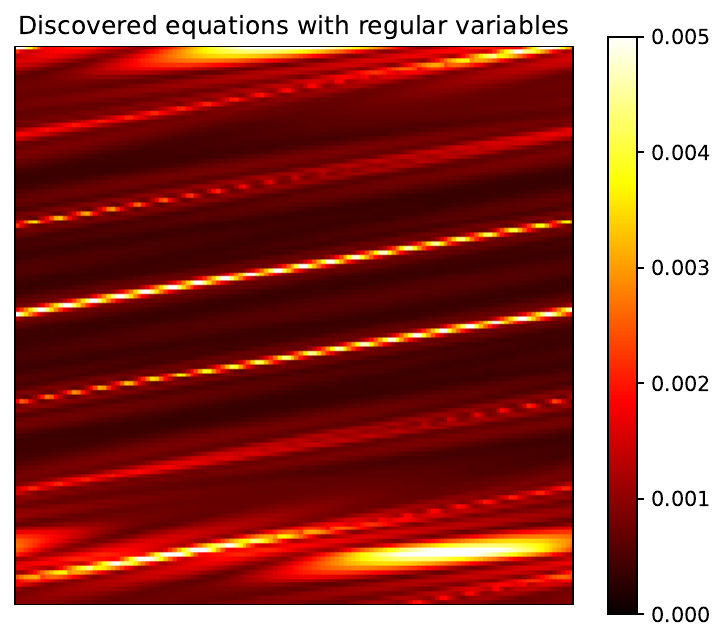}
    \end{subfigure}
    \begin{subfigure}{.48\textwidth}
        \includegraphics[width=\textwidth]{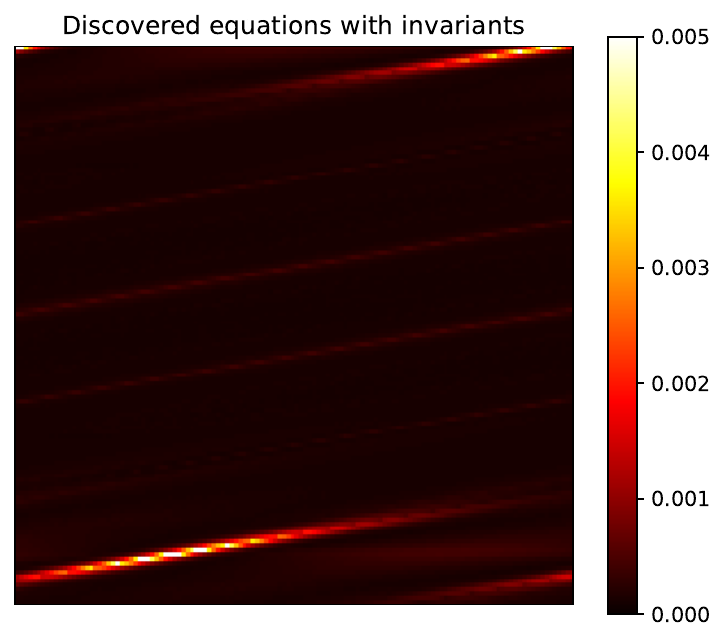}
    \end{subfigure}
    \caption{Prediction error of discovered equations from genetic programming methods for Darcy flow. Left: genetic programming with regular variables. Right: genetic programming with symmetry invariants.}
    \label{fig:pred-error-darcy}
\end{figure}

\section{Comparison with Other Symmetry-Based Methods}
\label{sec:appd-related-work-ext}

In this section, we discuss the connections and differences between our work and other closely related methods that also enforce symmetry in equation discovery \citep{otto2023unified, gurevich2024learning, yang2024symmetry}.

\subsection{Comparison with Methods for ODE Symmetries}

\citet{otto2023unified, yang2024symmetry} focus on a special type of Lie symmetry: time-independent (TI) symmetry of ODEs. In particular, the symmetry only transforms the phase variables of ODEs. In comparison, our method can handle equations with partial derivatives and symmetries acting on the independent variables, including time and spatial variables.

When restricted to the special case of ODEs in linear combination forms and with TI symmetry, our method becomes equivalent to EquivSINDy \citep{yang2024symmetry}. More specifically, we can still follow the procedure in \cref{sec:compute-diff-invars} to construct the invariants w.r.t the specified TI symmetry, and then apply \cref{prop:sindy-linear-constraint} to convert the symmetry constraint into linear constraints on the SINDy parameters. This leads to the same equivariant basis for the SINDy parameters as in EquivSINDy. 

\citet{otto2023unified, yang2024symmetry} also introduce symmetry regularization, which is useful when computing the exact symmetry constraint is challenging (for example, when symmetry is learned by a neural network instead of presented in closed form). The symmetry regularization term is based on the infinitesimal criterion of Lie point symmetries, which applies to not only ODEs but also PDEs and more complex symmetries. Thus, the idea of symmetry regularization can be readily generalized to systems considered in our paper. However, as our paper primarily focuses on enforcing hard symmetry constraints, we choose not to investigate the effect of PDE symmetry regularization in great detail.

\subsection{Comparison with SPIDER: Roto-Translation Symmetry in 3D Fluid Systems}

SPIDER \citep{gurevich2024learning} incorporates the symmetry of rotations and translations in 3D space into sparse regression with a weak formulation. With these specific designs, including symmetry, weak formulation, and sparse regression, their method is shown to successfully recover several canonical equations in fluid dynamics. Notably, their method for enforcing symmetry is similar to ours: they manually construct a set of invariant scalars and a set of equivariant vectors and use them as feature sets for sparse regression.

Despite this similarity, we have established a more general framework for different types of symmetries. Examples in this paper include not only spatial rotation symmetry (Darcy flow \eqref{eq:darcy}), but also scaling symmetry (Boussinesq equation \eqref{eq:boussinesq}) and phase-space symmetry (reaction-diffusion \eqref{eq:rd-original}). For each of these different symmetries, we can follow the standard procedure described in \cref{sec:method-main} to compute their invariants.
Also, in terms of the base method, SPIDER primarily focuses on the weak formulation of sparse regression. On the other hand, our method integrates with sparse regression (SINDy \& WSINDy), genetic programming (PySR), and symbolic transformers.

To provide a direct experimental comparison to SPIDER \citep{gurevich2024learning}, we test our method (applied to SINDy) on the same channel flow dataset specified in their Table 5 (Appendix A). The channel flow data is retrieved from the Johns Hopkins Turbulence Database \citep{li2008public, graham2016web}. The ground truth equation is given by $\mathbf u_t = -(\mathbf u\cdot \nabla)\mathbf u - \nabla p + \nu \Delta \mathbf u$, where $\nu = 5 \times 10^{-5}$.

The dataset contains a 3D velocity field $\mathbf u$ and a scalar pressure field $p$ over a 4D spatiotemporal grid $(x, y, z, t)$. Our method requires a set of scalar invariants of the given symmetry group. In this channel flow dataset, the assumed symmetry is rotations and spatiotemporal translations. To align with the assumption of SINDy, we specify the scalar invariant $\mathbf u \cdot \mathbf u_t$ as the LHS, and the following up-to-second-order scalar invariants as the RHS variables:
\begin{equation}
    \mathcal I = \{ |\mathbf u|^2, p, \nabla\cdot\mathbf u, |\vort|^2, \mathbf u \cdot\vort, \mathbf u \cdot \left[(\mathbf u\cdot \nabla)\mathbf u\right], \mathrm{Tr}(S^2), \mathrm{Tr}(S^3), |\nabla p|^2, \mathbf u\cdot\nabla p, \mathbf u\cdot\Delta \mathbf u, \Delta p, \nabla p\cdot \Delta\mathbf u, \vort\cdot\Delta \mathbf u, |\Delta \mathbf u|^2 \},
\end{equation}
where $S=\frac{1}{2}\left[\nabla\mathbf u + (\nabla\mathbf u)^T\right]$ is the strain rate tensor, and $\vort = \nabla\times\mathbf u$ is the vorticity. We have excluded the invariants that involve mixed derivatives (such as $\mathbf u_{xy}$).

\begin{table}[h]
    \centering
    \caption{Equation discovery results of SPIDER \citep{gurevich2024learning} and our method on the channel flow dataset from the Johns Hopkins Turbulence Database.}
    \resizebox{\textwidth}{!}{
    \begin{tabular}{cc}
    \toprule
        Method & Discovered equation \\
    \midrule
        SPIDER (vector) & $1.000 \mathbf u_t + 1.000 (\mathbf u\cdot\nabla)\mathbf u + 1.000 \nabla p - 0.0000500 \Delta\mathbf u = 0$ \\
        SPIDER (scalar) & $\mathbf u \cdot \mathbf u_t = -0.994 \mathbf u\cdot (\mathbf u\cdot\nabla)\mathbf u - 1.000\mathbf u\cdot\nabla p + 0.0000473\mathbf u\cdot\Delta\mathbf u - 0.497 \|\mathbf u\|^2\nabla\cdot\mathbf u + 0.0000069 \|\nabla\mathbf u\|^2$ \\
        SINDy-SI (ours) & $\mathbf u \cdot \mathbf u_t = 0.730\nabla \cdot\mathbf u - 1.008\mathbf u\cdot (\mathbf u\cdot\nabla)\mathbf u - 1.020\mathbf u\cdot\nabla p + 0.0000573\mathbf u\cdot\Delta\mathbf u$ \\
    \bottomrule
    \end{tabular}
    }
    \label{tab:spider-ns}
\end{table}

\cref{tab:spider-ns} shows the discovered equations by our method (SINDy-SI) and SPIDER. The SPIDER results are referenced from their Table 1 and 2. It can be seen that SPIDER with the vector library ($\mathcal L_1$ in their paper) recovers the governing equation exactly. Our method and SPIDER using an extended scalar library ($\mathcal L_0'$ in their paper) discover mostly correct equations, with the pressure gradient, the convective term, and the Laplacian correctly identified. However, our method discovers one spurious term, the divergence $\nabla \cdot \mathbf u$, and SPIDER with the scalar library discovers two spurious terms ($\|\nabla u\|^2$ and $\|\mathbf u\|^2\nabla\cdot\mathbf u$). While the divergence term should be zero in theory in the incompressible flow, the numerical simulation and derivative estimation may cause a small nonzero divergence, which is then reflected in the equation discovery models.

We comment that, in this case, using the vector library in SPIDER is a natural choice, since we can easily obtain the equivariant vectors w.r.t rotations. SPIDER with the vector library also achieves the best accuracy. Still, we show that our framework, which requires an invariant scalar library, can also be applied to this scenario and identify a mostly correct equation with only one spurious term.

\section{Experiment Details}
\label{sec:appd-exp-details}

In this section, we describe the experiment setups required to reproduce the experiments.
% In terms of computational resources, our experiments are conducted with 12 INTEL(R) XEON(R) PLATINUM 8558 CPUs and should be reproducible within minutes with any modern CPUs.

\subsection{Data generation}
\label{sec:appd-data-gen}

\paragraph{Boussinesq equation}

The equation is solved using a Fourier pseudospectral method for spatial derivatives and a fourth-order Runge-Kutta (RK4) scheme for time integration. The solution is computed on a periodic spatial domain $[-10,10]$ with $N=256$ grid points. The equation is reformulated as a first-order system in time by introducing $v=u_t$, and both $u$ and $v$ are evolved in time. Spatial derivatives are computed using the Fast Fourier Transform, and time derivatives of $u$ up to the fourth order are derived analytically from the governing equation. At each time step, values of $u$ are recorded in the dataset for equation discovery. The simulation starts from an initial condition of $u(x) = 0.5e^{-x^2}$ and $u_t = 0$ and proceeds up to a final time $T=20$ with a time step of $\Delta t = 0.001$. Starting from the solution at $T=20$, we simulate for another $T'=20$ with the same configuration to obtain a test dataset for evaluating prediction errors of the discovered equations.

\paragraph{Darcy flow}

We use the data generation code\footnote{\href{https://github.com/pdebench/PDEBench/tree/main/pdebench/data_gen/data_gen_NLE/ReactionDiffusionEq}{https://github.com/pdebench/PDEBench/tree/main/pdebench/data\_gen/data\_gen\_NLE/ReactionDiffusionEq}}
from PDEBench \citep{takamoto2022pdebench} to generate the steady-state solution of Darcy flow over a unit square. The solution is obtained by numerically solving a temporal evolution equation
\begin{equation}
    u_t(\mathbf x,t) - \nabla(a(\mathbf x)\nabla u(\mathbf x, t)) = f(x), \mathbf x \in \Omega = (-0.5,0.5)^2,
\end{equation}
with $a(\mathbf x) = e^{-4\|\mathbf x\|^2_2}$, $f(\mathbf x) = 1$, a smooth random initial condition generated by the \texttt{init\_multi\_2DRand} routine from PDEBench, and homogeneous Neumann boundary conditions (zero normal flux) on $\partial \Omega$. We integrate from $t=0$ to $t=5$ using an explicit two-stage scheme with an adaptive time step chosen from a diffusive CFL condition, with a CFL safety factor of $0.25$.

\paragraph{Reaction-diffusion}

We use the data generation code\footnote{\href{https://github.com/dynamicslab/pysindy/blob/master/examples/10_PDEFIND_examples.ipynb}{https://github.com/dynamicslab/pysindy/blob/master/examples/10\_PDEFIND\_examples.ipynb}} from PySINDy \citep{desilva2020,Kaptanoglu2022}. We solve on a periodic spatial domain of $[-10, 10]\times [-10,10]$ with a $128\times 128$ Fourier spectral grid. The initial condition is given by $u(x,y,0) = \tanh (r)\cos(\theta-r), v(x,y,0) = \tanh(r) \sin(\theta-r)$, where $r = \sqrt{x^2+y^2}$ and $\theta = \arg(x+iy)$. The simulation uses RK45 in Fourier space and proceeds up to a final time $T=10$ with a time step $\Delta t = 0.05$. We perturb the numerical solution by a $0.05\%$ noise and record the values of $u,v$ to the dataset for equation discovery. Starting from the solution at $T=10$, we simulate for another $T'=10$ with the same configuration to obtain a test dataset for evaluating prediction errors of the discovered equations.

\subsection{Sparse regression}

\paragraph{Boussinesq equation}

For SINDy with original variables, we fix $u_{tt}$ as the LHS of the equation and include functions of up to $4$th-order derivatives on the RHS. For PySINDy in \cref{tab:all}, the library contains monomials on $U^{(4)}$ with degree in $u$ no larger than $2$ and degree in any partial derivative terms $u_\alpha$ no larger than $1$. For example, $u^2u_x$ is included, but $u^3$, $u_x^2$ are not. For PySINDy$^*$, the library contains all monomials on $U^{(4)}$ up to degree $2$. For example, $u_x^2$ and $uu_x$ are included. Note that the PySINDy$^*$ library does not contain all functions in the original PySINDy library, e.g., $u^2u_x$ is not included because it has degree $3$.

Our method, SI, uses the invariant set in Example \ref{ex:scale_translate} for sparse regression. Specifically, $\eta_{(0,2)} = u_{tt}/u_x^2$ is used as the LHS of the equation, and the rest of the invariants are included in the RHS. The function library contains all monomials of these RHS invariants up to degree $2$. Also, since the invariants contain rational functions with $u_x$ on the denominator, we remove the data points with $|u_x| < 0.1$ to avoid numerical issues.

{We also conduct an additional experiment to investigate the impact of the threshold value for $|u_x|$. In \cref{tab:ux-thres}, we enumerate different threshold values from $\{0.0001, 0.001, 0.01, 0.1, 0.2, 0.3\}$, and report the resulting filtered dataset sizes (and their proportions compared to the unfiltered dataset), and the success probability (SP) and the prediction error (PE) metrics as in \cref{tab:all}.

First of all, we notice that when the threshold value is small ($c=0.0001$), i.e. effectively no filtering, the success probability for SINDy using invariant functions dramatically decreases. This exactly shows the necessity of applying this numerical filter, as $u_x$ values close to zero would cause the invariant features to have large magnitudes and make the SINDy optimization unstable.

Then, as we increase $c$, we observe that our method can achieve 100\% success probability for $c \in \{ 0.001, 0.01, 0.1 \}$, showing its robustness to different choices of the threshold to some extent. When we further increase $c$, the filtered dataset becomes much smaller, and the success probability decreases. However, even with $c=0.3$ and only 99 data points, our method is still able to recover the correct equation with more than 50\% probability.
}

\begin{table}[h]
    \centering
    \caption{{SINDy with invariant functions on the Boussinesq equation when removing data points with $|u_x| < c$ for different threshold values $c$. In the second row, we report the number of samples in the filtered datasets and their proportions compared to the original dataset. The success probability (SP) and the prediction error (PE) are computed from 100 runs with different random seeds, in the same way as \cref{tab:all}. The prediction error is reported as median [25\% quantile, 75\% quantile].}}
    \label{tab:ux-thres}
    \begin{tabular}{cccc}
    \hline
        Threshold $c$ & Dataset size & SP & PE \\
    \hline
        0.0001 & 99,756 (97.4\%) & 0.36 & NaN [0.129, NaN] \\
        0.001 & 97,956 (95.7\%) & 1.00 & 0.103 [0.099, 0.118] \\
        0.01 & 85,591 (83.6\%) & 1.00 & 0.098 [0.098, 0.099] \\
        0.1 & 26,231 (25.6\%) & 1.00 & 0.098 [0.098, 0.098] \\
        0.2 & 1,318 (1.3\%) & 0.91 & 0.098 [0.097, 0.108] \\
        0.3 & 99 (0.1\%) & 0.52 & 0.100 [0.098, NaN] \\
    \hline
    \end{tabular}
\end{table}

% SI-aligned in \cref{tab:all} is not applicable, because it is not necessary to align the hypotheses from SINDy and symmetry in this case. We can readily convert any equation discovered from SI-raw by multiplying both sides by $u_x^2$.
% Also, while this alignment step makes it easier to implement Weak SINDy, Weak SINDy itself is unfortunately not applicable in discovering this equation because of the $u_x^2$ term. A more detailed discussion on the capability of Weak S 

For all methods, we flatten the data on the spatiotemporal grid and randomly sample 2\% of the data for each run. The data filtering process in SI-raw is performed after subsampling. The threshold value for sequential thresholding is set to $0.25$, and the coefficient for $L_2$ regularization is set to $0.05$.

\paragraph{Darcy flow}
Sparse regression-based methods are not directly applicable to Darcy flow \eqref{eq:darcy} because there exist terms such as $e^{-4(x^2+y^2)}$. While it is still possible to include all necessary terms in the function library so that the equation can be written in the linear combination form \eqref{eq:sindy}, the knowledge of these complicated terms is nontrivial and should not be assumed available before running the equation discovery algorithm.

\paragraph{Reaction-Diffusion}
For SINDy with original variables, We fix $u_t$ and $v_t$ as the LHS of the equation and include functions of up to 2nd-order spatial derivatives on the RHS. In PySINDy, the library contains monomials of $u,v$ up to degree $3$ and all spatial derivatives up to order $2$. In PySINDy$^*$, the library contains all monomials of $u,v$ and their up to second-order spatial derivatives up to degree 3.

Our method uses the invariant set $\{ t,x,y,u^2+v^2 \} \bigcup \{ \mathbf u\cdot\mathbf u_\mu \}\bigcup \{\mathbf u^\perp \cdot \mathbf u_\mu\}$, where $\mathbf u = (u,v)^T$ and $\mu$ is a multiindex of $t,x,y$. We will denote $I_\mu = \mathbf u\cdot \mathbf u_\mu$ and $E_\mu = \mathbf u^\perp\cdot \mathbf u_\mu$. We use $I_t$ and $E_t$ as the LHS of the equation, and the rest of the invariants are included in the RHS. The function library contains all monomials of these RHS invariants up to degree 2.

We randomly sample $10\%$ of the data for each run. The threshold value for sequential thresholding is set to $0.05$. The coefficient for $L_2$ regularization is set to $0$ for SINDy with original variables and $0.1$ for our method with symmetry invariants.

For the experiments with different levels of noise (\cref{sec:exp-noisy}), we use weak SINDy as the base algorithm. We use the implementation of weak SINDy from the PySINDy package \citep{Kaptanoglu2022}. The function library is the same as SINDy as described above. To enforce symmetry, instead of directly using the symmetry invariants, we derive a set of linear constraints on the sparse regression parameters to adapt to weak SINDy. This procedure is further described in Appendix \ref{sec:appd-wsindy}.

\subsection{Genetic Programming}

In all experiments, to determine if an equation matches the ground truth we first expand the prediction into a sum of monomial terms. We then eliminate all terms whose relative coefficient is below $0.01$. For each term in the filtered expression, we see if it matches any term in the ground truth expression. This is done by randomly sampling $100$ points from the standard normal distribution and evaluating both the prediction and candidate ground truth term on the generated points. Note that we drop the coefficients before evaluation. If all evaluations of the predicted term have a relative error of less than $5$\% from those of the ground truth, the terms are said to match. If there is a perfect matching between the terms in the ground truth and prediction, the prediction is listed as correct. 

Rather than directly returning a single equation, PySR finally produces a hall-of-fame that consists of multiple candidate solutions with varying complexities. To finally pick a single prediction, we use a selection strategy equivalent to the ``best'' option from PySR.

\paragraph{Boussinesq equation}

For the Boussinesq equation \eqref{eq:boussinesq}, we first randomly subsample $10000$ datapoints. We configure PySR to use the addition and multiplication operators, to have $127$ populations of size $27$, and to have the default fraction-replaced coefficient of $0.00036$.

When running with ordinary variables, we sequentially try fixing the LHS to each variable in $(\mathbf x, u^{(4)})$ and allow the RHS to be a function of all remaining variables. Similarly, runs using invariants sequentially fix the LHS from the set given by Example \ref{ex:scale_translate} and the RHS as a function of all other invariants.

For each iteration count of $5$, $10$, and $15$, we run the algorithm using invariant or ordinary variables and report the number of correct predictions out of $100$ trials.

\paragraph{Darcy flow}

In the Darcy experiment \eqref{eq:darcy}, we eliminate all points that are within $3$ pixels from the border and then randomly subsample $10000$ datapoints. We configure PySR to use the addition, multiplication, and exponential operators; to have $127$ populations of size $64$; and to have a fraction-replaced coefficient of $0.1$. We further constrain it to disallow nested exponentials (e.g. $\exp(\exp(x) + 4)$. 

We try all possible ordinary variables in $(\mathbf x, u^{(2)})$ for the LHS and the RHS is then a function of the unused variables. Likewise when using invariants, we fix the LHS to each possible invariant specified in Example \ref{ex:so2-ext} and set the RHS as a function of the remaining invariants.

For each iteration count of $50$, $100$, and $200$, we run the algorithm using invariant or ordinary variables and report the number of correct predictions out of $100$ trials.

\paragraph{Reaction-Diffusion}

For the Reaction Diffusion equation \eqref{eq:rd-original}, we remove all points that are within $3$ pixels from the border or have timestamp greater than or equal to $40$, and then randomly subsample $10000$ datapoints. We configure PySR to use the addition and multiplication operators, to have $127$ populations of size $64$, and to have a fraction-replaced coefficient of $0.5$. 

In the ordinary variable case, we fix the LHS as either $u_{tt}$ or $v_{tt}$ and allow the RHS to be a function of all other variables in $(\mathbf x, u^{(2)})$. When using invariants, the LHS is fixed to be either $I_t$ or $E_t$ and the RHS is then a function of all remaining invariants.
 
For each iteration count of $100$, $200$, and $400$, we run the algorithm using regular and ordinary variables and report the number of correct predictions out of $100$ trials.

\subsection{Symbolic Transformer}
We use the pretrained symbolic transformer model provided in the official codebase\footnote{\href{https://github.com/facebookresearch/symbolicregression/blob/main/Example.ipynb}{https://github.com/facebookresearch/symbolicregression/blob/main/Example.ipynb}} from \citet{kamienny2022end}. The transformer-based symbolic regressor is initialized with $200$ maximal input points and $100$ expression trees to refine. The variable sets used in the symbolic transformer are the same as those described in the genetic programming experiments, except for the Boussinesq equation, where we remove all mixed derivative terms in both the original variable set and the symmetry invariant set. We find that the symbolic transformer can sometimes discover the correct equation under this further simplified setup, but fails when using the larger variable sets.

We also fix the LHS of the function and use the remaining variables as RHS features. For the Boussinesq equation, the LHS is fixed to $u_{tt}$ for original variables and $\eta_{(0,2)}$ for symmetry invariants. For the Darcy flow, the LHS is fixed to $u_{xx}$ for original variables and $\Delta u$ for symmetry invariants. For the reaction-diffusion system, the LHS is fixed to $u_t,v_t$ for original variables and $I_t, E_t$ for symmetry invariants.

\subsection{Hypothesis Spaces of Equation Discovery Algorithms}
\label{sec:appd-hypotheses}

\cref{tab:boussinesq-hypo} and \cref{tab:rd-hypo} describe the hypothesis spaces of different equation discovery algorithms when applied to the Boussinesq equation and the reaction-diffusion system.

\begin{table}[ht]
\small
\caption{Hypothesis spaces of different equation discovery algorithms for the Boussinesq equation.}
\label{tab:boussinesq-hypo}
\centering
\begin{tabular}{ccc}
\hline
    \multicolumn{2}{c}{Method} & Hypothesis space \\
\hline
    \multirow{4}{4em}{Sparse Regression} & PySINDy & $u_{tt} = W\boldsymbol\theta(u^{(4)}), \{\theta^j\} = \{ab: a \in \mathrm{Mono}_{\leq 2}(U), b\in \{ 1,u_x,..., u_{xxxx} \} \}$ \\
    & PySINDy$^*$ & $u_{tt} = P(u^{(4)}) \in \mathrm{Poly}_{\leq 2}(U^{(4)})$\\
    & {PySINDy$^{**}$} & {$u_{tt}=W\bm\theta(u^{(4)}), \{\theta^j\} = \{u^{c_0}u_1^{c_1}u_{2}^{c_2}u_{3}^{c_3}u_{4}^{c_4}: c_i \geq 0, \sum_{0}^4 c_i \leq 2, \sum_{1}^4\mathrm{sgn}(c_i)\leq 1 \} $} \\
    & SI & $\eta_{(0,2)} = P(\boldsymbol \eta) \in  \mathrm{Poly}_{\leq 2}(\{\eta_{(\alpha, \beta)}\} \backslash \{\eta_{(0, 2)}\})$ \\
\hline
    \multirow{2}{4em}{Genetic Programming} & PySR & $z^j = f(\mathbf z^{-j})$ for $\mathbf z = (\mathbf x, u^{(4)})$ and some $j$ \\
    % & PySR-implicit & $F(\mathbf x, u^{(4)}) = 0$ &  \\
    & SI & $\eta_{(\alpha_0,\beta_0)} = f(\boldsymbol \eta_{-(\alpha_0,\beta_0)})$ for $\boldsymbol\eta =\{\eta_{(\alpha,\beta)}:\alpha+\beta\leq4\}$ and some $(\alpha_0,\beta_0)$ \\
    % & SI-implicit & $F(\mathbf I) = 0$ & \\
\hline

\end{tabular}

\end{table}
\begin{table}[ht]

\caption{Hypothesis spaces of different equation discovery algorithms for 2D reaction-diffusion. $\mathbf u^{(n)} \in U^{(n)}$ denotes the collection of all up to $n$th order \textit{spatial} derivatives. $\alpha = [\alpha_1, \alpha_2]$ is the multiindex for spatial variables. $\mathbf x = (x, y, t)$. $A=u^2+v^2$.}
\label{tab:rd-hypo}
\centering
\begin{tabular}{ccc}
\hline
    \multicolumn{2}{c}{Method} & Hypothesis space \\
\hline
    \multirow{5}{4em}{Sparse Regression} & PySINDy & $\mathbf u_{t} = W\boldsymbol\theta(\mathbf u^{(2)}), \{\theta^j\} = \mathrm{Mono}_{\leq 3}(U) \bigcup \{\mathbf u_\alpha: |\alpha| \leq 2 \}$ \\
    & PySINDy$^*$ & $\mathbf u_{t} = P(\mathbf u^{(2)}) \in \mathrm{Poly}_{\leq 3}(U^{(2)})$ \\
    & {PySINDy$^{**}$} & {$\mathbf u_{t} = W\boldsymbol\theta(\mathbf u^{(2)}), \{\theta^j\} = \{\prod_{i,\alpha} (u^i_\alpha)^{c_\alpha^i}: \sum c^i_\alpha \leq 3, \sum_{|\alpha| \geq 1} \mathrm{sgn}(c^i_\alpha) \leq 1$\}}\\
    & SI & $[I_t, E_t]^T = P \in \mathrm{Poly}_{\leq 2}(A,\mathbf x,I_\alpha,E_\alpha;|\alpha|\leq 2)$ \\
    & SI-aligned & $\mathbf u_{t} = W\boldsymbol\theta(\mathbf u^{(2)}), W^{jk}=Q^{ijk}\beta^i$ for some precomputed $Q$ \\
\hline
    \multirow{2}{4em}{Genetic Programming} & PySR & $\mathbf u_t = \mathbf f(\mathbf x, \mathbf u^{(2)})$
 \\
    % & PySR-implicit & $F(\mathbf x, u^{(4)}) = 0$  \\
    & SI & $[I_t, E_t]^T = \mathbf f(A, \mathbf x, I_\alpha, E_\alpha; |\alpha|\leq 2)$ \\
    % & SI-implicit & $F(\mathbf I) = 0$ \\
\hline

\end{tabular}

\end{table}

\end{document}